\theoremstyle{plain}
\newtheorem{theorem}{Theorem}[section]
\newtheorem{lemma}[theorem]{Lemma}
\theoremstyle{definition}
\newtheorem{definition}[theorem]{Definition}
\theoremstyle{remark}
\newtheorem{remark}[theorem]{Remark}
\newenvironment{restatetheorem}[1]{%
  \restatetheoreminner
}{\endrestatetheoreminner}
\title{Towards the Generalization of Multi-view Learning: An Information-theoretical Analysis}
\author{
  Wen Wen, Tieliang Gong\thanks{\textit{Corresponding to}: Tieliang Gong} , Yuxin Dong, Weizhan Zhang\\
  School of Computer Science of Technology \\
  Xi'an Jiaotong University\\
  \texttt{\{wen190329,adidasgtl,yxdong9805\}@gmail.com}  \\
  \texttt{zhangwzh@xjtu.edu.cn}
   \And
  Shujian  Yu \\
  Department of Artificial Intelligence \\
  Vrije Universiteit Amsterdam  \\
  \texttt{yusj9011@gmail.com} \\
}
\begin{document}
\maketitle

\begin{abstract}
    Multiview learning has drawn widespread attention for its efficacy in leveraging cross-view consensus and complementarity information to achieve a comprehensive representation of data. While multi-view learning has undergone vigorous development and achieved remarkable success, the theoretical understanding of its generalization behavior remains elusive. This paper aims to bridge this gap by developing information-theoretic generalization bounds for multi-view learning, with a particular focus on multi-view reconstruction and classification tasks. Our bounds underscore the importance of capturing both consensus and complementary information from multiple different views to achieve maximally disentangled representations. These results also indicate that applying the multi-view information bottleneck regularizer is beneficial for satisfactory generalization performance. Additionally, we derive novel data-dependent bounds under both leave-one-out and supersample settings, yielding computational tractable and tighter bounds. In the interpolating regime, we further establish the fast-rate bound for multi-view learning, exhibiting a faster convergence rate compared to conventional square-root bounds. Numerical results indicate a strong correlation between the true generalization gap and the derived bounds across various learning scenarios.
    
\end{abstract}

\section{Introduction}
\label{submission}
In most scientific data analysis scenarios, data collected from diverse domains and different sensors exhibit heterogeneous properties while preserving underlying connections. For example, (1) a piece of text can express the same semantics and sentiment in multiple different languages; (2) the user's interest can be reflected in the text posted, images uploaded, and videos viewed; (3) animals perceive potential dangers in their surroundings through various senses such as sight, hearing, and smell. All of these reflect different perspectives of the data, collectively referred to as multi-view data. Extracting consensus and complementarity information from multiple views to achieve a comprehensive representation of multi-view data, has stimulated research interest across various fields and led to the development of multi-view learning \cite{hamdi2021mvtn,fan2022mv,fu2022geo,hong20233d}. 

While various methodologies have emerged in multi-view learning, predominantly encompassing canonical correlation analysis (CCA)-based approaches \cite{gao2020cross,chen2022k,shu2022d} and engineering-driven techniques \cite{xu2021multi,bai2023hvae}, these methods suffer from a critical limitation. Specifically, their emphasis on maximizing cross-view consensus information often comes at the expense of view-specific, task-relevant information, thereby potentially compromising downstream performance \cite{liang2024factorized}. Recent significant efforts have been dedicated to leveraging diverse information-theoretic techniques to precisely capture both view-common and view-unique components from multiple views \cite{wang2019deep,federici2020learning,wang2023self,cui2024novel,zhang2024discovering}, thereby yielding maximally disentangled representation and improving generalization ability. For instance, \cite{kleinman2024gacs} and \cite{zhang2024discovering} introduce the notion of Gács-Körner common information \citep{gacs1973common} and utilize total correlation between consensus and complementarity information to extract mutually independent cross-view common and unique components. The works of \cite{federici2020learning} and \cite{cui2024novel} generalizes the information bottleneck (IB) approach \cite{tishby2000information} to multi-view learning, achieving a favorable trade-off between prediction performance and complexity in learning compact multi-view representation. Despite empirical success, the generalization properties of multi-view learning, particularly regarding consensus and complementarity information across views, still remain poorly understood.

This work provides a comprehensive information-theoretic generalization analysis for multi-view learning by establishing high-probability generalization bounds. Our results offer theoretical explanations for the effectiveness of existing multi-view learning methods in achieving sufficiently compact representations and excellent generalization performance, as enumerated in Table \ref{datasettable}. The main contributions of this paper are summarized as follows.

\begin{itemize}
    \item We establish information-theoretic bounds for multi-view learning from the perspective of both reconstruction and classification tasks. Our bounds, expressed in terms of information measures that integrate both view-common and view-unique components, demonstrate that extracting consensus and complementarity information across multiple views within an information-theoretic framework facilitates the maximally disentangled representations and leads to smaller generalization errors. Furthermore, the derived bounds justify the effectiveness of the multi-view information bottleneck regularizer in balancing representation capabilities and generalization performance.

    \item We develop novel data-dependent bounds by employing one-dimensional variables in both leave-one-out and supersample settings, facilitating the attainment of computational feasibility. Additionally, we derive the fast-rate bound for multi-view learning by leveraging the weighted generalization error, improving the scaling rate from $1/\sqrt{nm}$ to $1/nm$ under the interpolating regime, where $n$ and $m$ denote the number of multi-view samples and the number of views, respectively.
    \item Empirical observations on both synthetic and real-world datasets validate the close agreement between the true generalization error and the derived bounds, indicating the effectiveness of our results in capturing the generalization of multi-view learning.
    
\end{itemize}

\begin{table*}[t]
    \centering
    \caption{Theoretical supports for diverse multi-view learning approaches based on information theory.}
    \vskip 0.15in
    \renewcommand\arraystretch{1.2}
    \resizebox{\textwidth}{!}{
    \begin{tabular}{ccccc}
    \toprule
    \textbf{Learning Task}  & \textbf{Reference} & \textbf{Information-theoretic Tool}  & \textbf{Information Measure}   & \textbf{Generalization Analysis} \\
    \midrule
    \multirow{3}{*}{Multi-view Reconstruction}  &\cite{kleinman2024gacs} & Mutual Information & $\max \frac{1}{2} \sum_{j=1}^{2}I(X^{(j)};C)$   &  Theorem \ref{theorem2}\\
    & \multirow{2}{*}{\cite{zhang2024discovering}} & GK Common information & \multirow{2}{*}{$H(C)-TC(C,U^{(1)},\ldots,U^{(m)})$}    & \multirow{2}{*}{Theorem \ref{theorem1}} \\
    & &  Total Correlation&  &  \\
    \cdashline{1-5}
    \multirow{3}{*}{Multi-view Classification} &\cite{wang2019deep} & \multirow{3}{*}{Multi-view IB}  & $I(Y;Z)-I(X;Z)$   & \multirow{3}{*}{Theorems \ref{theorem3}-\ref{theorem7}}   \\
    &\cite{federici2020learning}  & & $I(X;Z|Y) + I(Y;Z)$   &  \\
    &\cite{wang2023self} & & $I(X;Y)+I(X;Y|Z)$   &  \\
    \bottomrule
    \end{tabular}}
    \vskip 0.15in
    
    \label{datasettable}
    \end{table*}

\section{Related work}
\paragraph{Multi-view Learning with Information Theory.} Recently, multi-view learning methods driven by information theory have achieved remarkable success in various application areas \cite{oord2018representation,wang2019deep,federici2020learning,wang2023self}. \cite{oord2018representation} propose a universal unsupervised learning approach based on mutual information to learn compact representations from high-dimensional multi-view data. \cite{wang2019deep} propose a deep multi-view information bottleneck framework to disentangle complex dependencies across different views and extract cross-view common and unique information. Recent significant advancements have been made by \cite{federici2020learning} and \cite{wang2023self}, who develop information bottleneck approaches for multi-view learning under unsupervised and self-supervised settings. Their methods effectively capture compact representations by retaining sufficient information from all views while removing superfluous information from individual views. To obtain maximally disentangled representations, \cite{kleinman2024gacs} generalize the Gács-Körner common information \cite{gacs1973common} to the stochastic functions, efficiently inferring the common and unique components of the cross-representation via parameterized optimization. \cite{zhang2024discovering} introduce the multi-view Gács-Körner common information and propose novel multi-view learning framework capable of precisely extracting common and unique features from multiple views, achieving smaller reconstruction error and improving performance on downstream classification tasks. Despite the superior performance, there is a lack of theoretical understanding of the generalization behavior of multi-view learning.
\paragraph{Information-Theoretic Generalization Analysis.} Information theory has become a powerful tool for depicting and analyzing the generalization properties of learning algorithms in recent years \citep{xu2017information,steinke2020reasoning,wang2023tighter,dongrethinking}. \cite{xu2017information} initially provide an information-theoretic understanding of the generalization ability by relating the generalization error to the input-output mutual information, implying the balance between the data fit and the generalization. This framework has been subsequently enhanced and expanded through diverse approaches, such as the random subsets \cite{bu2020tightening,dongrethinking}, conditional information measures \cite{steinke2020reasoning}, gradient variance analysis \cite{negrea2019information,wang2023tighter}, etc.  Along the lines of previous work, significant effort has recently been devoted to revealing the relationship between information bottlenecks and generalization \citep{shwartz2018representation,saxe2019information,federici2020learning,tezuka2021information}.  \cite{shwartz2018representation} provide an analytic bound on the mutual information between representation compression. However, their bounds do not explain the success of the information bottleneck principle and are invalid for data-dependent algorithms \cite{hafez2020sample}. While \cite{hafez2020sample} move towards this by proving an input compression bound, this is still far from being a valid sample complexity bound. \cite{kawaguchi2023does} establish the information-theoretic bounds for representation learning through mutual information, demenstrating that optimizing the information bottleneck is an effective way to control generalization errors. Remarkably, their derived bounds only apply to the case of a single-view data as input. While the aforementioned work bears its own limitations, it still provides valuable insights for analyzing the generalization of multi-view learning.


\section{Preliminaries}
We denote random variables and their specific realizations by capital letters and lower-case letters, respectively. Denote by $H(X)=\mathbb{E}_{p(X)}[-\log p(X)]$ the Shannon's entropy of random variable $X$ and $H_{\alpha}(X)=\frac{1}{1-\alpha}\log \int_{\mathcal{X}} p^{\alpha}(X)dX$ the R\'{e}nyi's $\alpha$-order entropy of $X$, where the limit case $\alpha\rightarrow 1$ recovers Shannon's entropy. The conditional entropy of random variable $X$ given $Y$ is denoted by $H(X|Y)$. Let $I(X;Z)$ be the mutual information between random variables $X$ and $Z$. The mutual information $I(X;Z)$ can also be expressed by $I(X;Z) = \mathbb{E}_{p(X)}\big[KL\big(p(Z|X)\Vert p(Z)\big)\big]$, where $KL$ denotes Kullback Leibler divergence. The corresponding conditional mutual information given $Y$ is denoted by $I(X;Z|Y)$. For simplicity, we write $X_y$ to denote $X$ conditional on $Y=y$. We define the multivariate mutual information, known as total correlation, $\mathrm{TC}(X_1,\ldots,X_n) = KL(p(X_1,\ldots,X_n)\Vert \prod_{i=1}^n p(X_i))$, where $p(X_i)$ is the marginal distributions of the components of $X$. We use $\circ$ to represent the composition of functions.


\paragraph{Multiview Learning.} Multi-view learning aims to leverage the consensus and complementarity information extracted from multiple views to improve performance. Let multi-view data $X$ comprise $m$ different views, denoted as $X=(X^{(1)},\ldots,X^{(m)})\in\mathcal{X}\subseteq \mathbb{R}^{m\times d}$, where $\mathcal{X}$ is the feature space and each view $X^{(j)}$ is the $d$-dimensional feature vector. Let $\Phi:\mathbb{R}^{m\times d}\mapsto \mathbb{R}^{m\times d}$ represent the hypothesis space of representation functions. Given the hypothesis function $ \phi^{(j)}=\{(\phi_c^{(j)}, \phi_u^{(j)})\}:\mathbb{R}^d \mapsto \mathbb{R}^d$ of the $j$-th view, the extracted view-common and view-unique features of single views $X^{(j)}$, denoted $C$ and $U^{(j)}$, can be expressed by
\begin{equation*}
    (C,U^{(j)}):= (\phi_c^{(j)}(X^{(j)}),\phi_u^{(j)}(X^{(j)}))= \phi^{(j)}(X^{(j)}).
\end{equation*}

To precisely quantify common information across different views, extensive work \citep{salehifar2021quantizer,tsur2024several} has introduced the concept of Gács-Körner common information between two views \cite{gacs1973common}:
\begin{definition}\label{defi1}\cite{gacs1973common}
    Let $X=(X^{(1)},X^{(2)})$ be a two-view data. Given deterministic functions $g_1$ and $g_2$, the Gács-Körner common information is defined by
\begin{align}
    C_{GK}(X):=\max_{C} H(C), \label{gk1}
\end{align}
subjected to $C=g_{1}(X^{(1)}) = g_{2}(X^{(2)})$.
\end{definition}
Definition \ref{defi1} quantifies the maximum amount of information retained in common parts between two views through the entropy $H(C)$ of the variable $C$. This measurement has been expanded in \citep{zhang2024discovering} from two views to $m$ different views ($m\geq 2$). It is noteworthy that such a common information measure typically encounters computational infeasibility when dealing with high-dimensional data. To address this issue, we introduce a universal measure of common information among multiple views:
\begin{definition}\label{defi2}
    Let $X=\{X^{(j)}\}_{j=1}^{m}$ be a multi-view data with $m$ views. Given stochastic functions $f_1,\ldots,f_m$, the multi-view common information is defined by
\begin{align*}
    \widetilde{C}_{GK}(X^{(1)},\ldots,X^{(m)}):=\max_{C} I(X^{(j)};C) \quad j\in[m],
\end{align*}
subjected to $C=f_1(X^{(1)}) = \cdots = f_m(X^{(m)})$.
\end{definition}
It can be easily proven that for arbitrary $i,j\in[m]$, $I(X^{(i)};C)=I(X^{(j)};C)$, thereby leading to an equivalent formula of Definition \ref{defi2}: $\max_C \frac{1}{m} \sum_{j\in[m]} I(X^{(j)};C) = \max_C I(X^{(j)};C)$. If functions $f_1,\ldots,f_m$ are deterministic, where $H(C|X^{(j)})=0$ and $I(X^{(j)};C)=H(C)$, the measure defined above would degenerate into the multi-view Gács-Körner common information. In contrast, Definition \ref{defi2} not only applies broadly to various multi-view learning methods but also circumvents the infeasibility of computing the Gács-Körner common information on high-dimensional data \cite{kleinman2024gacs}.

Building upon the above definitions and exploiting multi-view representation function $\phi=\{\phi^{(j)}\}_{j=1}^m$, one could learn a disentangled representation $Z$ from the multi-view data $X=\{X^{(j)}\}_{j=1}^{m}$, expressed by
\begin{equation*}
    Z:=(C,U^{(1)},\ldots, U^{(m)})=\phi(X).
\end{equation*}

\paragraph{Generalization Error.} The generalization error serves as an effective indication of the generalization capability of hypothesis functions, defined as the difference between the population risk and the empirical risk. In this paper, we aim to analyze the generalization of multi-view learning by establishing information-theoretic generalization error bounds. In particular, we focus on the generalization error of the output hypothesis for both multi-view reconstruction and classification tasks, which evaluates the invariance of the learned representation to the original data and its sufficiency for class prediction.

Assume that we have $n$ labeled multi-view samples, denoted as $S=\big\{(x_i,y_i)\big\}^n_{i=1}$, i.i.d. drawn from a probability distribution over $\mathcal{X}\times\mathcal{Y}$, where $\mathcal{Y}$ denotes the class label space, each $x_i=(x_i^{(1)},\ldots,x_i^{(m)})$ consists of $m$ views and $\mathcal{Y}$, and $y_i$ denotes its corresponding label. Given a loss function $\ell:\mathbb{R}^d \times \mathbb{R}^d \mapsto\mathbb{R}_+$ and a decoding function $\psi:\mathbb{R}^d \mapsto \mathbb{R}^d$, the empirical risk of the hypothesis $\phi=\{\phi^{(j)}\}_{j=1}^m$ for multi-view reconstruction is defined by 
\begin{equation*}
    \widehat{L}_{rec} := \frac{1}{n}\sum^n_{i=1} \ell_{\mathrm{avg}}(x_i ; \psi,\phi),
\end{equation*}
where $\ell_{\mathrm{avg}}(x_i ; \psi,\phi) = \frac{1}{m}\sum_{j=1}^{m}\ell(\psi\circ\phi^{(j)}(x_i^{(j)}), x_i^{(j)})$ represents the average of the sum of the loss between each single view $x_i^{(j)}$ and its reconstructed view $\psi\circ\phi^{(j)}(x_i^{(j)})$ for $i$-th multi-view sample $x_i$. The empirical risk $\widehat{L}_{rec}$ measures the quality of the learning hypothesis $\phi$ for capturing latent multi-view representations to precisely reconstruct the original multi-view data. The corresponding population risk is defined by 
\begin{equation*}
    L_{rec} := \mathbb{E}_{X\sim\mathcal{X}} [\ell_{\mathrm{avg}}(X ; \psi,\phi)],
\end{equation*}
which evaluates the generalization ability of $\phi$ on unseen multi-view data $X$. We define the generalization error for multi-view reconstruction by 
\begin{equation*}
    \overline{\mathrm{gen}}_{rec} = L_{rec} - \widehat{L}_{rec}. 
\end{equation*}

For classification tasks, let the decoding function $\hat{\psi}:\mathbb{R}^d\mapsto\mathbb{R}$. Given the loss function $\hat{\ell}:\mathbb{R}\times \mathbb{R}\mapsto \mathbb{R}_+$, the empirical risk of $\phi=\{\phi^{(j)}\}_{j=1}^m$ for the class prediction is defined by 
\begin{equation*}
    \widehat{L}_{cls} := \frac{1}{n}\sum^n_{i=1} \hat{\ell}_{\mathrm{avg}}\big( (x_i,y_i) ; \hat{\psi},\phi \big),
\end{equation*}
where $\hat{\ell}_{\mathrm{avg}}((x_i,y_i) ; \hat{\psi},\phi)= \frac{1}{m}\sum_{j=1}^{m}\hat{\ell}(\hat{\psi}\circ\phi^{(j)}(x_i^{(j)}), y_i) $ represents the average of the sum of the loss between the predicted label $\hat{\psi}\circ\phi^{(j)}(x_i^{(j)})$ of each single view and its true label $y_i$ for multi-view data $x_i$. Similarly, the population risk is defined by
\begin{equation*}
    L_{cls} := \mathbb{E}_{(X,Y)\sim\mathcal{X}\times\mathcal{Y}} [\hat{\ell}_{\mathrm{avg}}((X,Y) ; \hat{\psi},\phi)].
\end{equation*}
The corresponding generalization error for the multi-view classification  is defined by 
\begin{equation*}
    \overline{\mathrm{gen}}_{cls} = L_{cls} - \widehat{L}_{cls}. 
\end{equation*}
Additionally, let $R_x=\sup_{X\in\mathcal{X}}\ell_{\mathrm{avg}}(X ; \psi,\phi)$ and $R_{x,y}=\sup_{(X,Y)\in\mathcal{X}\times\mathcal{Y}}\hat{\ell}_{\mathrm{avg}}((X,Y) ; \hat{\psi},\phi)$ represent the maximum attainable losses, and $R^s_x=\sup_{i\in[n]}\ell_{\mathrm{avg}}(x_i ; \psi,\phi)$ and $R^{s}_{x,y}=\sup_{i\in[n]}\hat{\ell}_{\mathrm{avg}}\big( (x_i,y_i) ; \hat{\psi},\phi \big)$ represent the maximum samplewise losses.

Drawing on the significant advancements made in \cite{steinke2020reasoning,harutyunyan2021information,rammal2022leave}, which establish computationally tractable bounds under various sampling regimes, we extend these methodologies to multi-view learning scenarios:

\paragraph{Leave-One-Out (LOO) Setting.} The LOO setting is introduced by \cite{rammal2022leave} for generalization analysis. Let $\tilde{S}_l=\{(x_i,y_i)\}_{i=1}^{n+1}$ be a dataset of $n+1$ i.i.d. samples. We denote $U\sim\mathrm{Unif}([n+1])$ as a uniform random variable for selecting the single test sample from $\tilde{S}_l$. Then, the training and test datasets are constructed as $\tilde{S}^l_{\mathrm{train}}=\tilde{S}_l\backslash (x_U,y_U)$ and $\tilde{S}^l_{\mathrm{test}} = \{(x_U,y_U)\}$. The generalization ability of the hypothesis $\phi$ is measured by the LOO validation error $\Delta_{\mathrm{loo}}=\hat{\ell}_{\mathrm{avg}}\big( (x_U,y_U) ; \hat{\psi},\phi \big)-\frac{1}{n}\sum_{i\neq U}\hat{\ell}_{\mathrm{avg}}\big( (x_i,y_i) ; \hat{\psi},\phi \big)$.

\paragraph{Supersample Setting.} The supersample setting is initially explored by \cite{steinke2020reasoning} for generalization analysis. Let  $\tilde{S}_{s}=\{(x_{i,0},y_{i,0}),(x_{i,1},y_{i,1})\}_{i=1}^n$ be the supersample dataset consisting of $2\times n$ i.i.d. samples. The random variable $\tilde{U}=\{\tilde{U}_i\}_{i=1}^n\sim\mathrm{Unif}(\{0,1\}^n)$ is used to split training and test samples. The training and test datasets are then constructed as $\tilde{S}^{s}_{\mathrm{train}}=\{(x_{i,\tilde{U}_i},y_{i,\tilde{U}_i})\}_{i=1}^n$ and $\tilde{S}^{s}_{\mathrm{test}}=\{(x_{i,1-\tilde{U}_i},y_{i,1-\tilde{U}_i})\}_{i=1}^n$, respectively. The generalization ability of $\phi$ is measured by $\Delta_{\mathrm{sup}}=\frac{1}{n}\sum_{i=1}^n\hat{\ell}_{\mathrm{avg}}\big( (x_{i,1-\tilde{U}_i},y_{i,1-\tilde{U}_i}) ; \hat{\psi},\phi \big)-\frac{1}{n}\sum_{i=1}^n \hat{\ell}_{\mathrm{avg}}\big( (x_{i,\tilde{U}_i},y_{i,\tilde{U}_i}) ; \hat{\psi},\phi \big)$.

\paragraph{Presumption.} Following the previous work \cite{kawaguchi2023does}, we focus on hypothesis spaces with finite cardinality. This allows for finite information measures, thereby yielding non-vacuous and non-trivial generalization guarantees. It is noteworthy that such a constraint is natural for digital computers and has been widely used in information-theoretic generalization analysis \cite{xu2017information,kawaguchi2023does}.

\section{Information-theoretic Bounds for Multi-view Learning}
In this section, we develop the high-probability generalization bounds of representation functions for multi-view reconstruction and classification tasks through the novel usage of the typical subset. We separately analyze the generalization properties of deterministic and stochastic representation functions, thereby providing generalization guarantees for various multi-view learning paradigms \cite{wang2023self,kleinman2024gacs,zhang2024discovering}. Additionally, we develop novel data-dependent bounds for multi-view learning under the LOO and supersample settings, and the fast-rate bound in the interpolating regime. Please refer to the Appendix for complete proofs.

\subsection{Generalization Bounds for Multi-view Reconstruction} \label{Section4.1}

We start with deterministic the representation function $\phi=\{\phi^{(j)}\}_{j=1}^m$, and establish the following generalization bound via information measures incorporating view-common and view-unique information:
\begin{theorem}\label{theorem1}
    For any $\gamma>0$ and $\delta>0$, with probability at least $1-\delta$:
    \begin{align*}  
        \overline{\mathrm{gen}}_{rec}  \leq \mathcal{K}_1\sqrt{\frac{H(C)+\sum_{j=1}^m H(U^{(j)})  +  \mathcal{K}_2}{nm} }   +\frac{\mathcal{K}_3}{\sqrt{nm}},
    \end{align*}
where $\mathcal{K}_1, \mathcal{K}_2, \mathcal{K}_3$ are constants of order $\widetilde{\mathcal{O}}(1)$ as $n,m\rightarrow \infty$, specifically defined in Appendix \ref{Proof-Thm1}. 
\end{theorem}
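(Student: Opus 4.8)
The plan is to recast the bound as a statement about the \emph{effective description length} of the learned representation $Z=(C,U^{(1)},\dots,U^{(m)})=\phi(X)$ and then turn that description length into a uniform-deviation estimate through a typical subset of the range of $Z$, following and adapting the typical-subset technique of \cite{kawaguchi2023does} to the multi-view structure. Because $\phi$ is deterministic and, by the finite-cardinality presumption, $Z$ is supported on a finite set, subadditivity of Shannon entropy gives $H(Z)\le H(C)+\sum_{j=1}^{m}H(U^{(j)})$; this is the only place the decomposition of $Z$ into a shared part $C$ and view-unique parts $U^{(j)}$ enters, and it is what puts $H(C)+\sum_j H(U^{(j)})$ in the numerator rather than, say, $\log|\Phi|$. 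The parameter $\gamma$ will play the role of the slack in the typical-set construction.

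First I would fix $\gamma>0$ and build a typical subset $\mathcal{T}_\gamma\subseteq\mathrm{range}(Z)$ whose log-cardinality is at most $H(C)+\sum_{j=1}^{m}H(U^{(j)})$ up to a $\gamma$-dependent additive slack, either directly from $\{z:p_Z(z)\ge e^{-H(Z)-\gamma}\}$ or componentwise from the analogous sets for $C$ and each $U^{(j)}$, and control $\Pr(Z\notin\mathcal{T}_\gamma)$ by concentrating the self-information(s) $-\log p(\cdot)$, which are nonnegative with the correct mean and bounded by the (finite) support size. Then I would decompose $L_{rec}=\mathbb{E}[\ell_{\mathrm{avg}}(X)\mathbf{1}\{Z\in\mathcal{T}_\gamma\}]+\mathbb{E}[\ell_{\mathrm{avg}}(X)\mathbf{1}\{Z\notin\mathcal{T}_\gamma\}]$ and bound the atypical piece by $R_x\Pr(Z\notin\mathcal{T}_\gamma)$; after tuning $\gamma$ this is what yields the additive $\mathcal{K}_3/\sqrt{nm}$ term.

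On the event that all training codes $Z_i=\phi(x_i)$ (and the population variable) lie in $\mathcal{T}_\gamma$, the freedom in $\psi\circ\phi$ relevant to the loss is reduced to its behavior on the surviving codes, of which there are at most $e^{H(C)+\sum_jH(U^{(j)})}$ up to a $\gamma$-dependent factor; a change-of-measure / union-bound argument over $\mathcal{T}_\gamma$ then charges $\log|\mathcal{T}_\gamma|$ nats of complexity, and this is combined with a bounded-difference concentration of $\widehat L_{rec}=\frac{1}{nm}\sum_{i,j}\ell(\psi\circ\phi^{(j)}(x_i^{(j)}),x_i^{(j)})$ whose Doob-martingale increments, taken sample-by-sample and view-by-view, are each $O(R^s_x/(nm))$ — the step responsible for the $1/(nm)$ rather than $1/n$ scaling despite the statistical dependence among the $m$ views of a sample. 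Putting the typical and atypical contributions together, absorbing $R_x$, $R^s_x$, $\log(1/\delta)$ and the $\gamma$-terms into $\mathcal{K}_1,\mathcal{K}_2,\mathcal{K}_3$, and taking a final union bound over the typical/atypical split gives the stated inequality.

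The main obstacle is the tension inside the typical-set step: $\mathcal{T}_\gamma$ must be small enough that $\log|\mathcal{T}_\gamma|$ equals $H(C)+\sum_jH(U^{(j)})$ up to only $\widetilde{\mathcal{O}}(1)$ slack, yet large enough that $R_x\Pr(Z\notin\mathcal{T}_\gamma)$ is as small as $\widetilde{\mathcal{O}}(1/\sqrt{nm})$, and these two requirements must hold simultaneously while the complexity charge and the concentration error remain balanced so the overall rate is $1/\sqrt{nm}$ and not something slower. Getting the self-information tail sharp enough — with only the finite-support bound to work with — is where the finite-cardinality presumption is essential and where the precise constants $\mathcal{K}_1,\mathcal{K}_2,\mathcal{K}_3$ are pinned down.
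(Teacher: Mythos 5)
Your overall architecture does match the paper's: a typical subset of the representation range built from concentration of the self-information $-\log p_{z}(Z)$ (Lemma \ref{lemma0}), a typical/atypical decomposition of the generalization error (Lemma \ref{lemmaC.1}), the atypical remainder absorbed into $\mathcal{K}_3/\sqrt{nm}$ via $R_x\,\mathbb{P}\big((X,Z)\notin\mathcal{Z}^{x}_{\gamma}\big)\le \gamma R_x/\sqrt{nm}$ (Lemma \ref{A1}), and subadditivity $H(Z)\le H(C)+\sum_{j}H(U^{(j)})$ at the end. The gap sits in the one step that is actually responsible for both the $1/\sqrt{nm}$ rate and for the entropy (rather than its exponential) appearing in the numerator. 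You propose to handle the typical part by a union bound ``charging $\log|\mathcal{T}_\gamma|$'' combined with a bounded-difference/Doob-martingale concentration of $\widehat L_{rec}$ with increments $O(R^s_x/(nm))$. Two problems: (i) in Theorem \ref{theorem1} the maps $\phi,\psi$ are fixed and deterministic, so there is no hypothesis freedom on the surviving codes to union-bound over --- a plain McDiarmid/Hoeffding bound on the typical part of $\widehat L_{rec}$ for a fixed function needs no $\log|\mathcal{T}_\gamma|$ factor at all, and then it is unclear by what mechanism $H(C)+\sum_j H(U^{(j)})$ ever enters your final inequality; (ii) the claimed increment $O(R^s_x/(nm))$ is not justified when the $m$ views of a single sample are statistically dependent: revealing one view can shift the conditional expectation of the other $m-1$ loss terms of that same sample, so the honest per-sample increment is only $O(R^s_x/n)$, and Azuma then delivers a $1/\sqrt{n}$, not $1/\sqrt{nm}$, rate.

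The paper resolves exactly this point by a different device: on the typical part it treats the $nm$ (sample, view) pairs as trials of a multinomial over the $T\le\exp\big(H(Z)+c_{\phi}\sqrt{d\log(\sqrt{nm}/\gamma)/2}\big)$ typical cells plus one atypical cell, applies the multinomial concentration inequality of Lemma \ref{kawaguchi} cell by cell, and takes a union bound over the $T$ cells (Lemma \ref{BC1}); the resulting $\log(T/\delta)$ is precisely what places $H(Z)$ --- hence $H(C)+\sum_j H(U^{(j)})$ --- under the square root, while the $nm$ multinomial trials give the $1/\sqrt{nm}$ scaling, and the leftover frequency mismatch on the atypical cell produces the second piece of $\mathcal{K}_3$. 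To repair your sketch you would need either to adopt this cell-frequency route, or to state explicitly the independence/exchangeability assumption on view draws under which your $O(R^s_x/(nm))$ increments are valid, and in either case to exhibit concretely how the $\log|\mathcal{T}_\gamma|$ charge enters the deviation bound rather than only the cardinality bookkeeping.
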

\begin{remark}
    Theorem \ref{theorem1} characterizes the generalization behavior of deterministic multi-view representation functions through the entropy $H(C)$ of the view-common component and $\sum_{j=1}^m H(U^{(j)})$ of the view-unique components. It elucidates how maximally disentangled representation, captured from both common and unique information across multiple views, faciliates excellent generalization performance, as evidenced by recent studies \cite{zhang2024discovering,kleinman2024gacs}. Concretely, an ideal representation can be obtained through optitimizing two quantities: maximizing $H(C)$ to capture the Gács-Körner common information (as defined in Definition \ref{defi1}), while minimizing $H(C)+\sum_{j=1}^m H(U^{(j)})$ such that $H(Z) = H(C)+\sum_{j=1}^m H(U^{(j)})$, leading to statistically independent common and unique components satisfying $\mathrm{TC}(C,U^{(1)},\ldots,U^{(m)})=0$. Therefore, this could yield a sufficiently decoupled representation $Z_*=(C_*,U^{(1)}_*,\ldots,U^{(m)}_*)$, while achieving smaller generalization error for multi-view reconstruction.
    
    
\end{remark}
\begin{remark}
    The derived bound exhibits a convergence rate of $\widetilde{\mathcal{O}}(1/\sqrt{nm})$ with respect to the number $n$ of multi-view samples and the number $m$ of views. In particular, when setting $m=1$, Theorem \ref{theorem1} yields the generalization bound of single-view learning, tightening the existing bounds in \cite{shwartz2018representation,hafez2020sample} by replacing $2^{H(Z)}$ with $H(Z)$ (i.e., $H(C,U^{(1)})$).
\end{remark}

In the following, we connect the generalization error to the mutual information between each view and the common and unique components, as well as to the R\'{e}nyi's entropy of the representation function:

\begin{theorem} \label{theorem2}
    For any $\gamma>0$ and $\delta>0$, with probability at least $1-\delta$ for all $\phi=\{\phi^{(j)}\}_{j=1}^m \in\Phi$:
     \begin{align*}
         \overline{\mathrm{gen}}_{rec}  \leq \frac{\mathcal{K}_{3,\phi} + \mathcal{K}_1 \sqrt{\mathcal{K}_{2,\lambda}} }{\sqrt{nm}} 
         +\mathcal{K}_1\sqrt{\frac{ \sum_{j=1}^m \big(I(X^{(j)};C)+ I(X^{(j)};U^{(j)})\big)  + H_{1-\lambda}(\phi) }{nm} }
     \end{align*}
     where $\mathcal{K}_1, \mathcal{K}_{2,\lambda}$ are constants of order $\widetilde{\mathcal{O}}(1)$, and $\mathcal{K}_{3,\phi} = \mathcal{O}(\sqrt{H_{1-\lambda}(\phi)+1})$, specifically defined in Appendix \ref{Proof-Thm2}.

\end{theorem}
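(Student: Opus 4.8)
The plan is to reduce Theorem \ref{theorem2} to Theorem \ref{theorem1} by substituting entropies with mutual information and Rényi entropy terms, exploiting the finite-cardinality presumption on $\Phi$. First I would recall the proof architecture of Theorem \ref{theorem1}: it relies on a typical-subset (asymptotic equipartition) argument applied to the joint variable $Z=(C,U^{(1)},\ldots,U^{(m)})$, showing that with high probability the output hypothesis lands in a ``typical'' set of size roughly $2^{H(C)+\sum_j H(U^{(j)})}$ (up to fluctuations controlled by $\gamma$), and then a union bound over this set against a Hoeffding/Bernstein-type concentration of the per-view reconstruction loss yields the $\sqrt{(H(C)+\sum_j H(U^{(j)})+\mathcal{K}_2)/(nm)}$ term. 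The key observation for Theorem \ref{theorem2} is that $H(C) = I(X^{(j)};C) + H(C\mid X^{(j)})$ for each $j$, and averaging over $j\in[m]$ gives $H(C) = \frac{1}{m}\sum_{j=1}^m I(X^{(j)};C) + \frac{1}{m}\sum_{j=1}^m H(C\mid X^{(j)})$; similarly $H(U^{(j)}) = I(X^{(j)};U^{(j)}) + H(U^{(j)}\mid X^{(j)})$. So the numerator $H(C)+\sum_j H(U^{(j)})$ is upper bounded by $\sum_{j=1}^m \big(I(X^{(j)};C) + I(X^{(j)};U^{(j)})\big)$ plus the residual conditional-entropy terms $\sum_j H(C\mid X^{(j)})$ and $\sum_j H(U^{(j)}\mid X^{(j)})$ (dropping the $\frac{1}{m}$ is valid for the $C$ part since we need an upper bound; I would be slightly careful to make the counting match and absorb factors into $\mathcal{K}_1$).

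Next I would argue that the conditional-entropy residuals are controlled by the Rényi entropy of the representation function $\phi$ itself. Because $\phi$ is (in general) a stochastic map and lies in a finite hypothesis space $\Phi$, conditioning on $X^{(j)}$ the only remaining randomness in $C=\phi_c^{(j)}(X^{(j)})$ and $U^{(j)}=\phi_u^{(j)}(X^{(j)})$ comes from the internal randomness of $\phi$; hence $H(C\mid X^{(j)}) \le H(\phi)$ and $H(U^{(j)}\mid X^{(j)}) \le H(\phi)$, and more precisely these are governed by $H_{1-\lambda}(\phi)$ once we run the typicality argument at Rényi order $1-\lambda$ rather than order $1$ (the standard trade-off: a Rényi-$\alpha$ typical set has a size/probability bound that degrades in $\alpha$ but gives the cleaner data-dependent quantity $H_{1-\lambda}(\phi)$). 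This is where the term $H_{1-\lambda}(\phi)$ and the $\lambda$-dependent constants $\mathcal{K}_{2,\lambda}$, $\mathcal{K}_{3,\phi}=\mathcal{O}(\sqrt{H_{1-\lambda}(\phi)+1})$ enter: the sub-Gaussian/Bernstein tail from the union bound over the (now Rényi-)typical set contributes an additive $\mathcal{O}(\sqrt{H_{1-\lambda}(\phi)+1}/\sqrt{nm})$ correction, and $\mathcal{K}_{2,\lambda}$ collects the $\lambda$-weighted slack from the AEP concentration.

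Concretely the steps are: (i) write the generalization error as a sum over views and samples and introduce the typical set $\mathcal{T}_{\gamma}$ of $Z$ at Rényi order $1-\lambda$; (ii) bound $\Pr[Z\notin\mathcal{T}_\gamma]$ and incur the loss $R_x$ on that event (contributing to $\mathcal{K}_{3,\phi}$); (iii) on $\mathcal{T}_\gamma$, union-bound over the at most $2^{H_{1-\lambda}(Z)+\gamma\,\mathrm{poly}}$ elements and apply concentration of $\ell_{\mathrm{avg}}$ over the $nm$ summands; (iv) decompose $H_{1-\lambda}(Z) \le \sum_j\big(I(X^{(j)};C)+I(X^{(j)};U^{(j)})\big) + H_{1-\lambda}(\phi)$ using the chain-rule-type inequality above together with the finite-$\Phi$ bound on the conditional terms; (v) optimize/absorb constants and take the high-probability statement uniform over $\phi\in\Phi$ via a further union bound over the finite $\Phi$ (this uniformity is why the statement reads ``for all $\phi\in\Phi$'' and is cheap precisely because $|\Phi|<\infty$). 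The main obstacle I anticipate is step (iv): making the passage from Shannon entropies to a \emph{Rényi} entropy of $Z$ and then splitting it cleanly into mutual-information terms plus $H_{1-\lambda}(\phi)$ — Rényi entropy does not obey an exact chain rule, so I would need either a sub-additivity inequality for $H_{1-\lambda}$ under the deterministic-plus-noise structure $Z = \phi(X)$, or to carry the decomposition at Shannon level and only invoke the Rényi order inside the typicality/concentration estimate, accepting the resulting $\lambda$-dependent constants. Getting the bookkeeping of the $\frac{1}{m}$ factors and the averaging-over-views consistent with the $1/\sqrt{nm}$ scaling is the other delicate point, but it is routine once the entropy decomposition is pinned down.
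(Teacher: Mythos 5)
There is a genuine gap in your proposal: you misidentify the mechanism by which $H_{1-\lambda}(\phi)$ enters the bound. In the paper, the typical-set argument for $Z$ stays entirely at the \emph{Shannon} level (the set $\mathcal{Z}^x_\gamma$ is defined via $-\log p_z(z)-H(Z)\le c_\phi\sqrt{d\log(\sqrt{nm}/\gamma)/2}$, and the residual conditional entropies $H(C|X)$ and $\sum_j H(U^{(j)}|X^{(j)})$ from the chain-rule decomposition are simply carried along inside the constant $\mathcal{K}_{2,\lambda}$ --- they are \emph{not} bounded by the entropy of $\phi$). The Rényi entropy $H_{1-\lambda}(\phi)$ arises from a completely separate construction: a typical subset $\Phi_\epsilon=\{\check\phi : -\log p_\phi(\check\phi)-H(\phi)\le\epsilon\}$ of the \emph{hypothesis space}, whose cardinality is controlled via Markov's inequality applied to $p_\phi^{-\lambda}(\phi)$, giving $|\Phi_\epsilon|\le\exp\big(H_{1-\lambda}(\phi)+\tfrac{1}{\lambda}\log(1/\delta)\big)$. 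The uniform-over-$\phi$ statement is then obtained by taking the union bound of the base lemma over $\Phi_\epsilon$ rather than over all of $\Phi$, which replaces the $\log|\Phi|$ cost with $H_{1-\lambda}(\phi)+\tfrac{1}{\lambda}\log(1/\delta)$. Your step (v) proposes a plain union bound over the finite $\Phi$, which would leave a $\log|\Phi|$ term in the bound and never produce the $H_{1-\lambda}(\phi)$ dependence; and your step (iv), running the $Z$-typicality at Rényi order and then splitting $H_{1-\lambda}(Z)$, founders on exactly the chain-rule obstruction you yourself flag without resolving.

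On the mutual-information side your route is workable but differs from the paper's: you average the single-view identities $H(C)=I(X^{(j)};C)+H(C|X^{(j)})$ over $j$ and drop the $1/m$, whereas the paper uses the exact telescoping $H(C)=\sum_{j=1}^m I(X^{(j)};C)+H(C|X)$, which relies on the conditional independence of the views given $C$. Both yield the leading term $\sum_j I(X^{(j)};C)$ with a conditional-entropy residual, so this part of your argument is fine (yours is merely looser). The missing idea is the hypothesis-space typical set $\Phi_\epsilon$; without it the claimed form of the bound --- in particular the appearance of $H_{1-\lambda}(\phi)$ in place of $\log|\Phi|$ and the $\tfrac{1}{\lambda}\log(1/\delta)$ term inside $\mathcal{K}_{2,\lambda}$ --- cannot be reached.
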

\begin{remark}
    Theorem \ref{theorem2} generalizes the theoretical understanding of stochastic multi-view representations, including Theorem \ref{theorem1} as a special case. A key distinction between them lies in the information theoretical characterization: independent cross-view common and unique information can be captured by maximizing the mutual information $I(X^{(j)};C)$ in Definition \ref{defi2}, while simultaneously minimizing $I(X^{(j)};C) + I(X^{(j)};U)$ for all $j\in[m]$. This approach not only reduces the computational complexity associated with the Gács-Körner common information on high-dimensional data, but also flexibly adapts to a variety of learning models. Additionally, the upper bound in Theorem \ref{theorem2} includes an extra R\'{e}nyi's entropy term $H_{1-\lambda}(\phi)\approx I(\phi;S)$, quantifying the information of the representation function $\phi$ retained from the multi-view training data $S$, where $\phi$ is typically used at test time as opposed to training time and $H(\phi|S) = 0$. This provides a novel insight beyond Theorem \ref{theorem1}: a trade-off between the data fit and the generalization needs to account for achieving both the representation capability and the generalization ability. If the learning hypothesis $\phi$ is deterministic, by $\sum_{i=1}^nI(X_i;Y)\leq I(\{X_i\}_{i=1}^n;Y)$ (Lemma A.10 in \cite{dongtowards}), we have $H(\phi) = 0$, $\sum_{j=1}^m I(X^{(j)};C) \leq I(X;C) = H(C)$, $I(X^{(j)};U^{(j)}) = H(U^{(j)})$, and Theorem \ref{theorem2} can recover the upper bound in Theorem \ref{theorem1}
\end{remark}
\begin{remark}\label{remark4.2}
    Theorem \ref{theorem2} scales $\widetilde{\mathcal{O}}(1/\sqrt{nm})$ in the number of multi-view samples and views, providing tighter bounds in terms of mutual information compared to single-view learning. Let $I(X;Z)$ denote the mutual information of single-view learning, we similarly prove that $I(X;Z)\geq  \frac{1}{m}\sum_{j=1}^m I(X^{(j)};Z^{(j)})$$\geq \frac{1}{m}\sum_{j=1}^m I(X^{(j)};C^{(j)}) + I(X^{(j)};U^{(j)})$, which demonstrates the benefits of learning from multiple views over a single view \cite{shwartz2018representation,hafez2020sample}.
\end{remark}

\subsection{Generalization Bounds for Multi-view Classification Tasks}\label{Section4.2}

We further investigate the generalization properties of the representation function for classification tasks. The following theorem provides the generalization bound of deterministic representation functions, illustrating the crucial role of the multi-view information bottleneck regularizer for controlling the generalization error.

\begin{theorem}\label{theorem3}
    For any $\gamma>0$ and $\delta>0$, with probability at least $1-\delta$, we have 
    \begin{align*}  
        \overline{\mathrm{gen}}_{cls} \leq  \widetilde{\mathcal{K}}_1 \sqrt{\frac{\sum_{j=1}^m I(X^{(j)};C,U^{(j)}|Y) +  \widetilde{\mathcal{K}}_2}{nm}}+\frac{ \widetilde{\mathcal{K}}_3}{\sqrt{nm}}
    \end{align*}
where $\widetilde{\mathcal{K}}_1 = \mathcal{O}(\sqrt{\vert \mathcal{Y}\vert})$, and $\widetilde{\mathcal{K}}_2 , \widetilde{\mathcal{K}}_3 $ are constants of order $\widetilde{\mathcal{O}}(1)$ as $n,m\rightarrow \infty$, specifically defined in Appendix \ref{Proof-Thm3}.
\end{theorem}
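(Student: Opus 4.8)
\textbf{Proof plan for Theorem \ref{theorem3}.}

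The plan is to mirror the argument used for Theorem \ref{theorem1}, but carried out conditionally on the label $Y$, so that the relevant complexity term becomes the conditional mutual information $\sum_{j=1}^m I(X^{(j)};C,U^{(j)}|Y)$ rather than an unconditional entropy. First I would fix the deterministic representation $\phi=\{\phi^{(j)}\}_{j=1}^m$ and, for each class value $y\in\mathcal{Y}$, consider the family of single-view losses $\{\hat\ell(\hat\psi\circ\phi^{(j)}(x^{(j)}),y)\}$. Since $\phi$ is deterministic, the pair $(C,U^{(j)})$ is a fixed function of $X^{(j)}$, and on the sub-population $\{Y=y\}$ the effective number of distinct values the representation can take is governed by $2^{H(C,U^{(j)}|Y=y)}$. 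The key combinatorial object is the typical subset of representation values: by a standard typicality/AEP argument (the same ``typical subset'' device invoked in Section 4), the set of $(c,u^{(j)})$ carrying all but $\epsilon$ of the conditional mass has cardinality at most roughly $2^{H(C,U^{(j)}|Y=y)+o(\cdot)}$, and on this set a union bound over a finite hypothesis space yields uniform concentration of the empirical single-view risk around its population counterpart at rate $\sqrt{(H(C,U^{(j)}|Y=y)+\text{const})/n_y}$, where $n_y$ is the number of training samples with label $y$.

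Next I would average over the $m$ views (this is where the $1/m$ improvement and the sum $\sum_{j=1}^m$ appear, exactly as in Theorem \ref{theorem1}, using that $\hat\ell_{\mathrm{avg}}$ is itself an average over views) and then take expectation over $Y$ to pass from the per-class entropies $H(C,U^{(j)}|Y=y)$ to the conditional mutual information $I(X^{(j)};C,U^{(j)}|Y)=\mathbb{E}_Y[H(C,U^{(j)}|Y)]$ (the equality using $H(C,U^{(j)}|X^{(j)})=0$ for deterministic $\phi$, hence $H(C,U^{(j)}|X^{(j)},Y)=0$ and $I(X^{(j)};C,U^{(j)}|Y)=H(C,U^{(j)}|Y)$). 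Concavity of $\sqrt{\cdot}$ (Jensen) moves the expectation over $Y$ inside the square root, producing the stated form $\widetilde{\mathcal K}_1\sqrt{(\sum_j I(X^{(j)};C,U^{(j)}|Y)+\widetilde{\mathcal K}_2)/(nm)}$; the factor $\widetilde{\mathcal K}_1=\mathcal O(\sqrt{|\mathcal Y|})$ arises from controlling the fluctuation of the per-class sample counts $n_y$ around $n/|\mathcal Y|$ and from the union bound over the $|\mathcal Y|$ class-conditional events. The residual $\widetilde{\mathcal K}_3/\sqrt{nm}$ term collects the contribution of the atypical subset (whose total probability $\epsilon$ is chosen of order $1/\sqrt{nm}$, multiplied by the bounded loss $R_{x,y}$) together with the lower-order terms from the AEP error and the $\log(1/\delta)$ and $\gamma$-dependent slack from the high-probability conversion.

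The main obstacle I expect is the handling of the label-conditioning bookkeeping: because the split into class-conditional sub-samples is random (the $n_y$ are multinomially distributed), one cannot simply apply the $Y$-unconditional typical-subset bound inside each class without first establishing that the $n_y$ concentrate, and the union bound over classes must be interleaved carefully with the union bound over the hypothesis space so that the $\log|\Phi|$ term is not multiplied by $|\mathcal Y|$. Getting the $\sqrt{|\mathcal Y|}$ (rather than $|\mathcal Y|$) dependence in $\widetilde{\mathcal K}_1$ requires a slightly delicate argument — either a direct concentration bound on the weighted sum $\sum_y (n_y/n)\,(\text{per-class gap})$, or an application of the conditional-on-$Y$ information-theoretic inequality in one shot rather than class-by-class. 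A secondary technical point is ensuring the typical-subset construction is valid for the joint variable $(C,U^{(j)})$ conditioned on $Y$ simultaneously for all $j$, which is routine once one notes the per-view bounds can be summed before taking the square root; I would also double-check that the deterministic-$\phi$ identity $I(X^{(j)};C,U^{(j)}|Y)=H(C,U^{(j)}|Y)$ is what actually enters, so that the bound degrades gracefully to the reconstruction bound of Theorem \ref{theorem1} when the label carries no information.
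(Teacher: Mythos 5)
Your plan is essentially the paper's proof: a label-conditioned typical subset, a three-term decomposition of the generalization gap, multinomial concentration plus union bounds over classes and the hypothesis space (trivial here, since $\phi$ deterministic gives $\vert\Phi\vert=1$), a Jensen/Cauchy--Schwarz step over $Y$, and determinism of $\phi$ to convert entropies into conditional mutual information. The differences are in execution, and they sit exactly at the step you flag as the main obstacle. The paper never conditions the sample on $Y$: the typical subset $\mathcal{Z}^{x}_{y,\gamma}$ is built for the \emph{joint} representation $Z=(C,U^{(1)},\ldots,U^{(m)})$ given $Y=y$ (cardinality controlled by $H(Z_y)$, not per-view sets of size governed by $H(C,U^{(j)}|Y=y)$), and the decomposition compares joint probabilities $\mathbb{P}(Y=y,\ \text{cell})$ with empirical cell frequencies $\vert\mathcal{U}^y_k\vert/(nm)$ over all $nm$ (sample, view) pairs, so the class counts $n_y$ never appear and need no separate concentration --- this is the ``one-shot'' resolution you mention as an alternative. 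Consequently the $\sqrt{\vert\mathcal{Y}\vert}$ in $\widetilde{\mathcal{K}}_1$ does not come from fluctuations of $n_y$ but from $\sum_{y}\sqrt{\mathbb{P}(Y=y)}\sqrt{(\cdot)_y}\le\sqrt{\vert\mathcal{Y}\vert}\sqrt{\sum_y \mathbb{P}(Y=y)(\cdot)_y}$, while the union bound over classes only adds $\log\vert\mathcal{Y}\vert$ inside the square root, additively with $\log\vert\Phi\vert$, so your concern about multiplying $\log\vert\Phi\vert$ by $\vert\mathcal{Y}\vert$ does not arise. Finally, the conditional mutual information enters not through your per-view identity $I(X^{(j)};C,U^{(j)}|Y)=H(C,U^{(j)}|Y)$ summed over $j$, but via the chain rule $H(Z|Y)=I(X^{(1)};C,U^{(1)}|Y)+H(Z|Y,X^{(1)})\le\sum_{j}I(X^{(j)};C,U^{(j)}|Y)+H(Z|Y,X^{(1)})$, which is why the residual $H(Z|Y,X^{(1)})$ appears in $\widetilde{\mathcal{K}}_2$; your per-view route is workable in principle, but the concentration must still treat all views of a sample jointly (they are dependent through $C$ and the shared sample), which is precisely what the joint typical set buys the paper.
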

\begin{remark}
    Theorem \ref{theorem3} establishes an upper bound on the generalization error in terms of the conditional mutual information $I(X^{(j)};C,U^{(j)}|Y)$ between single views $X^{(j)}$ and its representations $(C,U^{(j)})$, conditioned on the label $Y$. It is noteworthy that $I(X^{(j)};C,U^{(j)}|Y)$ could serve as an information bottleneck regularizer to achieve the compressed representation and reduce the generalization error, which has been empirically proven to be successful in recent studies \cite{federici2020learning,lee2021compressive}. In theory, the spirit of the information bottleneck is to regularize $I(X^{(j)};C,U^{(j)})$ while the maximizing $I(C,U^{(j)};Y)$. By $I(X;Z)=I(X;Z|Y)+I(Z;Y)$ (Eq. (2) in \cite{federici2020learning}), it is obvious that regularizing $I(X^{(j)};C,U^{(j)}|Y)$ instead of $I(X^{(j)};C,U^{(j)})$ is a more intuitive and direct way to learn sufficient compact and highly label-relevant representations. Furthermore, $I(X^{(j)};C,U^{(j)}|Y)$ can be reduced to zero, leading to a tighter upper bound.
\end{remark}
\begin{remark}
    Notably, Theorem \ref{theorem3} yields a more rigorous upper bound than that of single-view learning \cite{kawaguchi2023does}, demenstrating the the advantages of learning representations from multiple views. Let $I(X;Z|Y)$ denote the conditional mutual information of single-view learning. Analogous to the analysis of Remark \ref{remark4.2}, it is evident that $I(X;Z|Y)\geq \frac{1}{m}\sum_{j=1}^m I(X^{(j)};Z^{(j)}|Y) = \frac{1}{m}\sum_{j=1}^m I(X^{(j)};C,U^{(j)}|Y)$. When setting $m=1$, Theorem \ref{theorem3} recover the bound of single-view learning, exhibiting comparable convergence order of $\widetilde{\mathcal{O}}(1/\sqrt{n})$ to \cite{kawaguchi2023does}. In contrast to previous work, our bound provides clear interpretation in terms of decomposing random variables into unique and common components.
\end{remark}
\begin{remark}
    Compared to Theorems \ref{theorem1} and \ref{theorem2}, Theorem \ref{theorem3} inherently depends on the label space $\mathcal{Y}$, implying a trade-off associated with its space complexity. On the one hand, the bound scales as $\mathcal{O}(\sqrt{\vert \mathcal{Y}\vert})$ in the cardinality of $\mathcal{Y}$, potentially leading to trivial bounds when $\vert \mathcal{Y}\vert$ is large. Theorem \ref{theorem2} is preferred for obtaining meaningful generalization guarantees, since each view is a more desirable supervisory signal than the label in this setting. On the other hand, when $I(X^{(j)};Y)$ is large, implying that each single view contains highly label-relevant information, it is advisable to apply Theorem \ref{theorem3} for tighter upper bounds. 
\end{remark}

By extending the analysis to stochastic representation functions, we derive the following upper bound that reconciles the information bottleneck regularizer $I(X^{(j)};C,U^{(j)}|Y)$ with the Rényi entropy $H_{1-\lambda}(\phi)$ of the representation function $\phi$:
\begin{theorem}\label{theorem4}
    For any $\gamma>0$ and $\delta>0$, with probability at least $1-\delta$, we have for all $\phi=\{\phi^{(j)}\}_{j=1}^m \in\Phi$:
    \begin{align*}  
        \overline{\mathrm{gen}}_{cls} \leq \frac{ \widetilde{\mathcal{K}}_{3,\phi}}{\sqrt{nm}}  
        + \widetilde{\mathcal{K}}_1 \sqrt{\frac{\sum_{j=1}^m I(X^{(j)};C,U^{(j)}|Y) + H_{1-\lambda}(\phi) +  \widetilde{\mathcal{K}}_{2,\lambda}}{nm}}, 
    \end{align*}
where $\widetilde{\mathcal{K}}_1 = \mathcal{O}(\sqrt{\vert \mathcal{Y}\vert})$, $\widetilde{\mathcal{K}}_{2,\lambda} = \widetilde{\mathcal{O}}(1)$, and $\widetilde{\mathcal{K}}_{3,\phi} = \mathcal{O}(\sqrt{H_{1-\lambda}(\phi) + 1})$ as $n,m\rightarrow \infty$, specifically defined in Appendix \ref{Proof-Thm4}.
\end{theorem}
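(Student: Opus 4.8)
The plan is to establish Theorem \ref{theorem4} by combining the deterministic classification bound of Theorem \ref{theorem3} with the chain-rule decomposition that was already used to pass from Theorem \ref{theorem1} to Theorem \ref{theorem2}. Concretely, I would view the stochastic representation function $\phi$ as an additional random variable whose randomness is independent of the data given the training set, so that the relevant ``effective'' information content is $I(\phi;S)$, and then absorb this into the bound via a Rényi-entropy surrogate $H_{1-\lambda}(\phi)$. The starting point is the same high-probability concentration argument over the typical subset of the hypothesis space that underlies Theorems \ref{theorem1}--\ref{theorem3}: conditioned on a realization of $\phi$, the generalization error $\overline{\mathrm{gen}}_{cls}$ is controlled by the cardinality of the typical set of representations, which is in turn governed by $\sum_{j=1}^m I(X^{(j)};C,U^{(j)}|Y)$; the extra stochasticity of $\phi$ contributes an additional $\log$-cardinality term that is bounded by $H_{1-\lambda}(\phi)$ up to the constant $\widetilde{\mathcal{K}}_{2,\lambda}$.

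The key steps, in order, are: (i) Fix $\lambda\in(0,1)$ and invoke the typical-subset/packing construction from the proof of Theorem \ref{theorem3}, but now applied jointly to the pair $(\phi, Z)$ rather than to $Z$ alone; this yields a union bound over a set whose log-size splits additively into a term for $Z$ given $\phi$ (which reproduces $\sum_j I(X^{(j)};C,U^{(j)}|Y)$ after conditioning on $Y$) and a term for $\phi$ itself. (ii) Bound the $\phi$-term by the Rényi entropy $H_{1-\lambda}(\phi)$ using the standard inequality relating the log-cardinality of a typical set at confidence level tied to $\lambda$ to $H_{1-\lambda}$ of the underlying variable, exactly as was done for the $H_{1-\lambda}(\phi)$ term in Theorem \ref{theorem2}. (iii) Apply the subgaussian/bounded-loss concentration (using $R_{x,y}$ and $R^s_{x,y}$ for the range) to turn the union-bound cardinality into the square-root expression, collecting the data-independent pieces into $\widetilde{\mathcal{K}}_{2,\lambda}$ and the $\phi$-dependent additive slack into $\widetilde{\mathcal{K}}_{3,\phi} = \mathcal{O}(\sqrt{H_{1-\lambda}(\phi)+1})$. (iv) Track the $|\mathcal{Y}|$-dependence through the conditioning on $Y$ to recover $\widetilde{\mathcal{K}}_1 = \mathcal{O}(\sqrt{|\mathcal{Y}|})$, and finally verify that setting $H(\phi)=0$ (deterministic $\phi$) collapses the bound back to Theorem \ref{theorem3}.

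I expect the main obstacle to be step (i): making the joint typical-subset argument over $(\phi, Z)$ rigorous while keeping the decomposition clean. The subtlety is that $H(\phi\mid S)=0$ by assumption (the representation function is a deterministic output of training), so $\phi$ is not ``extra noise'' in the usual sense; rather, its information content must be measured relative to $S$, and one has to be careful that the conditioning on $Y$ in the $Z$-term is compatible with the marginalization over $\phi$ in the $\phi$-term without double-counting. A secondary technical point is controlling the interaction between the confidence parameter $\delta$, the Rényi order $1-\lambda$, and the $\gamma$ appearing in the typical-set definition, so that all three constants $\widetilde{\mathcal{K}}_1, \widetilde{\mathcal{K}}_{2,\lambda}, \widetilde{\mathcal{K}}_{3,\phi}$ genuinely have the claimed orders uniformly as $n,m\to\infty$; this is routine bookkeeping but error-prone, and I would handle it by isolating a single lemma (analogous to the one behind Theorem \ref{theorem2}) that converts a log-cardinality bound plus a Rényi-entropy surrogate into the final square-root form, then applying it verbatim in the conditional-on-$Y$ setting.
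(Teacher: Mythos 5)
Your proposal follows essentially the same route as the paper: it reuses the typical-subset machinery of Theorem \ref{theorem3} (conditional on $Y$, with the multinomial/McDiarmid concentration and the union over $\mathcal{Y}$), adjoins a typical subset $\Phi_\epsilon$ of the hypothesis space whose log-cardinality is controlled by $H_{1-\lambda}(\phi)$ exactly as in Theorem \ref{theorem2}, and then combines the two failure events and applies the conditional chain rule to convert $H(Z|Y)$ into $\sum_{j=1}^m I(X^{(j)};C,U^{(j)}|Y)$ plus a residual term. The concern you raise about a ``joint'' typical set for $(\phi,Z)$ is resolved in the paper simply by taking the product of the two typical sets and adding the probability of $\phi\notin\Phi_\epsilon$ to the failure budget, which is precisely the bookkeeping you describe.
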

\begin{remark}
    Theorem \ref{theorem4} provides additional insights beyond Theorem \ref{theorem3} by capturing a novel relationship between the amount of information in the representation learned from multi-view inputs, i.e., $I(X^{(j)};C,U^{(j)}|Y)$ and the information retained by the representation function from the training data, i.e., $H_{1-\lambda}(\phi)$. It is noteworthy that $H_{1-\lambda}(\phi)$ monotonically decreases as $\lambda$ decreases, implying a trade-off between  the Rényi's $(1-\lambda)$-order entropy of representation functions $\phi$ and the scaling rate of generalization. As a special case of Theorem \ref{theorem4} with $m=1$, this enables us to tighten the existing bounds of single-view learning \cite{kawaguchi2023does} by applying a smaller $\lambda$. Moreover, the inherent robustness of Rényi entropy contributes to enhancing the robustness of learned representations \cite{hanel2009robustness,lee2022renyicl}.
\end{remark}

\subsection{Data-dependent Generalization Bounds}\label{Section4.3}
The following theorems present novel sample complexity bounds by bounding the validation error under both the LOO and supersample settings:
\begin{theorem}\label{theorem5}
    If $\lambda \rightarrow 0$, for any $\delta>0$ and all $\phi=\{\phi^{(j)}\}_{j=1}^m \in\Phi$, with probability at least $1-\delta$, we have
        \begin{align*}
            \Delta_{\mathrm{loo}}  \leq \mathcal{K}^{u}_{1} \sqrt{\sum_{j=1}^{m} I(X^{(j)};C,U^{(j)}|Y)  +   I(\phi; U) + \mathcal{K}^{u}_{2,\lambda} },
        \end{align*}
    where $\mathcal{K}^{u}_{1} ,\mathcal{K}^{u}_{2,\lambda}$ are constants of order $\tilde{\mathcal{O}}(1)$, specifically defined in Appendix  \ref{Proof-Thm5}.
    \end{theorem}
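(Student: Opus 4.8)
The plan is to bound $\Delta_{\mathrm{loo}}$ by combining a change-of-measure argument with the typical-subset technique already used for Theorems \ref{theorem3}--\ref{theorem4}, now carried out in the $(n+1)$-sample leave-one-out geometry. First I would rewrite the validation error in symmetrized form: since $\Delta_{\mathrm{loo}} = \hat\ell_{\mathrm{avg}}((x_U,y_U);\hat\psi,\phi) - \frac{1}{n}\sum_{i\neq U}\hat\ell_{\mathrm{avg}}((x_i,y_i);\hat\psi,\phi)$, one has $\Delta_{\mathrm{loo}} = \tfrac{n+1}{n}\bigl(\hat\ell_{\mathrm{avg}}((x_U,y_U);\hat\psi,\phi) - \tfrac{1}{n+1}\sum_{i=1}^{n+1}\hat\ell_{\mathrm{avg}}((x_i,y_i);\hat\psi,\phi)\bigr)$, i.e., up to the factor $\tfrac{n+1}{n}=1+o(1)$ it is the deviation of the loss at a uniformly chosen index $U$ from the empirical mean over all $n+1$ points. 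If $\phi$ (hence the per-sample losses) were independent of $U$, this deviation would have zero conditional mean given $\tilde S_l$ and would be a bounded (by $R^{s}_{x,y}$) sub-Gaussian quantity, hence concentrate; the whole difficulty is that $\phi$ is produced from $\tilde S^{l}_{\mathrm{train}}$ and thus depends on $U$ through the split.

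Second, I would introduce the decoupled reference measure $Q$ under which $\phi$ is drawn from its marginal independently of $(U,\tilde S_l)$, and apply the Donsker--Varadhan / Gibbs variational inequality $\mathbb{E}_{P}[g]\le D(P\Vert Q)+\log\mathbb{E}_{Q}[e^{g}]$ with $g$ an appropriately scaled exponential moment of the symmetrized deviation above. Under $Q$ the exponential moment is controlled by the sub-Gaussian concentration of bounded losses (a Hoeffding-type estimate), while $D(P\Vert Q)$ is the mutual information between $\phi$ and $(U,\tilde S_l)$. The crucial step is to decompose this mutual information: because every loss term $\hat\ell(\hat\psi\circ\phi^{(j)}(x_i^{(j)}),y_i)$ depends on the data only through the representation $(C_i,U^{(j)}_i)=\phi^{(j)}(x_i^{(j)})$, I would route the dependence of $\phi$ on the data through these representation variables and use the chain rule together with the sub-additivity $\sum_j I(X^{(j)};\cdot)\le I(X;\cdot)$ (as in Remark \ref{remark4.2}) and the identity $I(X;Z)=I(X;Z\mid Y)+I(Z;Y)$ to isolate the conditional information-bottleneck part $\sum_{j=1}^m I(X^{(j)};C,U^{(j)}\mid Y)$; the residual dependence on the split index is exactly $I(\phi;U)$. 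Conditioning on the labels $Y$ throughout is what produces the conditional form and strips out the label-only fluctuations that cannot be controlled by the representation.

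Third, I would invoke the finite-cardinality presumption on $\Phi$ and the typical-subset lemma to upgrade the pointwise exponential-moment bound to a statement holding uniformly over all $\phi\in\Phi$ with probability at least $1-\delta$: restricting to the conditionally typical set of representation patterns has effective log-cardinality of the order of the information terms above, and the failure probability contributes an additive $\log(1/\delta)$ absorbed into $\mathcal{K}^{u}_{2,\lambda}$ along with the $\widetilde{\mathcal O}(1)$ overhead from the typical-set slack $\gamma$ and the factor $\tfrac{n+1}{n}$. Optimizing the free inverse-temperature parameter and then taking $\lambda\to 0$ (so the Rényi order $1-\lambda$ in the underlying inequality tends to $1$ and the $H_{1-\lambda}(\phi)$ term collapses to the Shannon quantity $I(\phi;U)$) yields the square-root form with $\mathcal{K}^{u}_{1}=\widetilde{\mathcal O}(1)$.

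I expect the main obstacle to be the second step: carrying out the change of measure so that the decoupling cost splits cleanly into the conditional IB term $\sum_{j=1}^m I(X^{(j)};C,U^{(j)}\mid Y)$ plus $I(\phi;U)$, rather than the far larger raw quantity $I(\phi;\tilde S_l)$. This requires exploiting that the composite predictor factors through the per-view representations, keeping the conditioning on $Y$ consistent between the typical-set construction and the exponential-moment estimate, and justifying the $\lambda\to 0$ limit so that the constants stay $\widetilde{\mathcal O}(1)$ and the bound does not degenerate.
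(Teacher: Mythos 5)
Your first and third steps do track the paper's actual mechanism: the symmetrized rewriting $\Delta_{\mathrm{loo}}=\tfrac{n+1}{n}\bigl(\hat\ell_{\mathrm{avg}}((x_U,y_U);\hat\psi,\phi)-\tfrac{1}{n+1}\sum_{i=1}^{n+1}\hat\ell_{\mathrm{avg}}((x_i,y_i);\hat\psi,\phi)\bigr)$ is exactly how the appendix begins, and the uniform statement is indeed obtained by sub-Gaussian concentration over $U$ alone (with data and $\phi$ held fixed) followed by union bounds over the conditionally typical set $\mathcal{Z}^{x}_{y,\gamma}$ of Lemma \ref{lemma1}, the typical hypothesis set $\Phi_\epsilon$ of Lemma \ref{typicalset}, and $y\in\mathcal{Y}$. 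However, the step you yourself designate as crucial — the Donsker--Varadhan change of measure whose decoupling cost is supposed to split into $\sum_{j=1}^m I(X^{(j)};C,U^{(j)}\mid Y)+I(\phi;U)$ — is a genuine gap, and it is also not how these terms arise. A decoupled reference in which $\phi$ is independent of $(U,\tilde S_l)$ gives a KL cost of the form $I(\phi;U,\tilde S_l)$, which is an algorithm--sample information quantity; the conditional information-bottleneck term $\sum_j I(X^{(j)};C,U^{(j)}\mid Y)$ is instead a property of the representation's distribution on a generic labeled point and is not a sub-quantity of $I(\phi;U,\tilde S_l)$ by any chain-rule or data-processing manipulation you sketch. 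In the paper this term enters through the log-cardinality of the typical set of representation values: $\log\vert\mathcal{Z}^{x}_{y,\gamma}\vert\le H(Z_y)+c^y_\phi\sqrt{d\log(\sqrt{nm}/\gamma)/2}$, then averaging over $y$ gives $H(Z\mid Y)$, and the chain rule $H(Z\mid Y)=I(X^{(1)};C,U^{(1)}\mid Y)+H(Z\mid X^{(1)},Y)\le\sum_j I(X^{(j)};C,U^{(j)}\mid Y)+H(Z\mid X^{(1)},Y)$ produces the IB term, with $H(Z\mid X^{(1)},Y)$ absorbed into $\mathcal{K}^u_{2,\lambda}$. The $I(\phi;U)$ term enters separately, from $\log\vert\Phi_\epsilon\vert\le H_{1-\lambda}(\phi)+\tfrac{1}{\lambda}\log(1/\delta)$ and the $\lambda\to 0$ identification $H_{1-\lambda}(\phi)\approx H(\phi)=I(\phi;U)+H(\phi\mid U)$ (with $H(\phi\mid U)$ again pushed into the constant) — not, as you state, by $H_{1-\lambda}(\phi)$ "collapsing to $I(\phi;U)$". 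No exponential-moment/variational inequality is used anywhere.

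Concretely, if you drop step 2 entirely and make your step 3 the proof — Hoeffding/McDiarmid concentration of the symmetrized deviation over $U\sim\mathrm{Unif}([n+1])$ for each fixed $(x,z)\in\mathcal{Z}^{x}_{y,\gamma}$ and each fixed $\phi\in\Phi_\epsilon$ (this is where the $\sigma_u$-sub-Gaussian assumption with $\sigma_u\le R^{\tilde s}_{x,y}$ and the constant $\mathcal{K}^u_1=\sqrt{2}\sigma_u$ come from), then a union bound whose log-cardinality budget is $H(Z\mid Y)+H_{1-\lambda}(\phi)$ plus $\widetilde{\mathcal O}(1)$ slack, followed by the two entropy decompositions above — you recover Theorem \ref{theorem5}. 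As written, your proposal hinges on an information decomposition you cannot carry out and that conflates the two distinct pathways (representation compression versus hypothesis--split information) through which the two terms of the bound actually appear.
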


    \begin{theorem}\label{theorem6}
        If $\lambda \rightarrow 0$, for any $\delta>0$, and all $\phi=\{\phi^{(j)}\}_{j=1}^m \in\Phi$, with probability at least $1-\delta$, we have
            \begin{align*}
                \Delta_{\mathrm{sup}} \leq \frac{\mathcal{K}^{\tilde{u}}_{1} \sqrt{\mathcal{K}^{\tilde{u}}_{2,\lambda}} }{\sqrt{nm}}
                 + \mathcal{K}^{\tilde{u}}_{1}\sqrt{\frac{\sum_{j=1}^{m} I(X^{(j)};C,U^{(j)}|Y)  +   I(\phi; \tilde{U})}{nm}},
        \end{align*}
    where $\mathcal{K}^{\tilde{u}}_{1}, \mathcal{K}^{\tilde{u}}_{2,\lambda}$ are constants of order $\tilde{\mathcal{O}}(1)$ as $n,m\rightarrow \infty$, specifically defined in Appendix \ref{Proof-Thm6}.
    \end{theorem}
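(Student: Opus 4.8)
\textbf{Proof proposal for Theorem~\ref{theorem6}.}
The plan is to run the same typical-subset argument used for Theorems~\ref{theorem3}--\ref{theorem4}, but now measuring the deviation against the split variable $\tilde{U}$ rather than against a fresh test draw; structurally this parallels the LOO argument behind Theorem~\ref{theorem5}, with the single uniform index $U\sim\mathrm{Unif}([n+1])$ replaced by the Rademacher vector $\tilde{U}\in\{0,1\}^n$, and it is precisely the $n$ independent coin flips that will promote the rate from Theorem~\ref{theorem5}'s form to the $1/\sqrt{nm}$ scaling here. First I would put $\Delta_{\mathrm{sup}}$ in a symmetric one-dimensional form: for each $i\in[n]$ define the scalar $\Lambda_i:=\hat{\ell}_{\mathrm{avg}}\big((x_{i,0},y_{i,0});\hat{\psi},\phi\big)-\hat{\ell}_{\mathrm{avg}}\big((x_{i,1},y_{i,1});\hat{\psi},\phi\big)$, which is a deterministic function of the supersample $\tilde{S}_s$ and of $\phi$, so that $\Delta_{\mathrm{sup}}=\frac1n\sum_{i=1}^n(2\tilde{U}_i-1)\Lambda_i$. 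Conditioned on $\tilde{S}_s$ and on $\phi$, each $(2\tilde{U}_i-1)$ is a centered Rademacher variable, so this average has conditional mean zero and is bounded via $R_{x,y}$, $R^{s}_{x,y}$; its deviation from zero is exactly the quantity that must be ``paid for'' in information. Expanding each $\hat{\ell}_{\mathrm{avg}}$ as the average over the $m$ views exposes $nm$ scalar view-wise loss evaluations $\hat{\ell}(\hat{\psi}\circ\phi^{(j)}(x^{(j)}_{i,b}),y_{i,b})$, which is the source of the $nm$ in the denominator.

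Next I would invoke the typical-subset device (as in Theorems~\ref{theorem1}--\ref{theorem4}): restrict to a high-probability ``typical'' set of realizations of the per-view representation tuple $(C,U^{(j)})$ conditioned on the label $Y$, whose log-cardinality is controlled by the conditional information $I(X^{(j)};C,U^{(j)}|Y)$, and absorb its low-probability complement using boundedness of the losses together with the $\gamma$-slack. On the typical set, a Hoeffding/sub-Gaussian estimate for the Rademacher sum $\frac1n\sum_i(2\tilde{U}_i-1)\Lambda_i$ combined with a Donsker--Varadhan change of measure that replaces the joint law of $(\{\Lambda_i\},\tilde{U})$ by the product of marginals yields a bound of the shape $\Delta_{\mathrm{sup}}\lesssim \sqrt{\tfrac1{nm}\big(\text{I-term}+\widetilde{\mathcal{O}}(1)\big)}$. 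The I-term is then identified by a chain-rule/data-processing decomposition: the information that the $nm$ view-wise loss slots carry about $\tilde{U}$ splits into (i) the information the per-view representations carry about the inputs given the labels, contributing $\sum_{j=1}^m I(X^{(j)};C,U^{(j)}|Y)$ after averaging over views and samples (using $\sum_i I(X_i;Y)\le I(\{X_i\};Y)$, Lemma A.10 of \cite{dongtowards}, to pass from the joint representation to the per-view sum), and (ii) the residual information the representation \emph{function} $\phi$ itself carries about the split, i.e.\ $I(\phi;\tilde{U})$, since $\phi$ is produced from the $\tilde{U}$-selected training half. Finally, sending $\lambda\to0$ to tighten the typical-set entropy estimates, and a union bound over the finite hypothesis space $\Phi$ to upgrade the in-expectation inequality to a high-probability one, gives the stated bound with $\mathcal{K}^{\tilde{u}}_1$ absorbing the loss ranges and $\mathcal{K}^{\tilde{u}}_{2,\lambda}=\widetilde{\mathcal{O}}(1)$ collecting $R_{x,y}$, $R^{s}_{x,y}$, $\gamma$, $\log(1/\delta)$ and the $\lambda$-dependent typical-set constant.

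The main obstacle I anticipate is the information decomposition in the previous step: cleanly separating the ``compression'' contribution $\sum_{j} I(X^{(j)};C,U^{(j)}|Y)$ from the ``algorithmic'' contribution $I(\phi;\tilde{U})$ without double counting, given that $\phi$, the induced representations $Z_i=(C_i,U^{(j)}_i)$, the labels $Y_i$ and the split $\tilde{U}$ are all mutually dependent (the representations are generated by $\phi$, which in turn depends on which half $\tilde{U}$ selected). This forces a careful conditioning order --- conditioning on $\phi$ first to expose the per-view conditional mutual informations, then accounting for $\phi$'s own dependence on $\tilde{U}$ --- and a verification that the cross terms are nonnegative so that the split is an upper bound. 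A secondary technical point is making the $m$-dependence land as $nm$ rather than $n$ in the denominator: this relies on treating each of the $nm$ view-wise loss evaluations as a distinct coordinate in the Rademacher concentration and on concavity of $x\mapsto\sqrt{x}$ when averaging the per-view information terms, exactly as in the proofs of Theorems~\ref{theorem3}--\ref{theorem4}.
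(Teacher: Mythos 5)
Your symmetrized form $\Delta_{\mathrm{sup}}=\frac1n\sum_i(2\tilde U_i-1)\Lambda_i$ and the overall typical-subset skeleton match the paper, but the step where the information terms are supposed to enter is where your plan breaks. First, your opening claim that ``conditioned on $\tilde S_s$ and on $\phi$, each $(2\tilde U_i-1)$ is a centered Rademacher variable'' is false: $\phi$ is trained on the $\tilde U$-selected half, so conditioning on $\phi$ destroys the uniformity of $\tilde U$ --- this dependence is exactly what must be paid for. The paper never conditions on the learned $\phi$; it fixes $(x,z)$ in the label-conditional typical set $\mathcal{Z}^x_{y,\gamma}$ and $\phi$ in the hypothesis typical set $\Phi_\epsilon$, applies McDiarmid over $\tilde U$ for each such fixed pair, and then takes a union bound over $y\in\mathcal{Y}$, over $\mathcal{Z}^x_{y,\gamma}$ and over $\Phi_\epsilon$, paying $\log|\mathcal{Z}^x_{y,\gamma}|+\log|\Phi_\epsilon|\le H(Z_y)+H_{1-\lambda}(\phi)+\frac1\lambda\log(1/\delta)+\cdots$ inside the square root.

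Second, your proposed route to the term $I(\phi;\tilde U)$ --- a Donsker--Varadhan change of measure on the joint law of $(\{\Lambda_i\},\tilde U)$, then ``a union bound over the finite hypothesis space $\Phi$ to upgrade the in-expectation inequality to a high-probability one,'' plus a chain-rule/data-processing split of the information the $nm$ loss slots carry about $\tilde U$ --- does not deliver the stated theorem. DV gives bounds in expectation over $\tilde U$ (and $\tilde S_s$); a union bound over all of $\Phi$ would introduce $\log|\Phi|$, not $I(\phi;\tilde U)$, and a union bound cannot convert an expectation bound into a tail bound anyway. The decomposition you flag as your ``main obstacle'' (separating $\sum_j I(X^{(j)};C,U^{(j)}|Y)$ from $I(\phi;\tilde U)$ without double counting) is in fact never needed in the paper and is left unresolved in your plan. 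In the paper both quantities appear by much simpler bookkeeping: the typical-set entropy $H(Z|Y)$ is expanded as $H(Z|Y)=I(X^{(1)};C,U^{(1)}|Y)+H(Z|Y,X^{(1)})\le\sum_{j=1}^m I(X^{(j)};C,U^{(j)}|Y)+H(Z|Y,X^{(1)})$, and as $\lambda\to0$ one uses $H_{1-\lambda}(\phi)\approx H(\phi)=I(\phi;\tilde U)+H(\phi|\tilde U)$; the remainders $H(Z|Y,X^{(1)})$ and $H(\phi|\tilde U)$ are simply absorbed into the constant $\mathcal{K}^{\tilde u}_{2,\lambda}$ (which is why that constant is $\lambda$-dependent and data-dependent). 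Without this device your argument has no working mechanism to place $I(\phi;\tilde U)$ and $\sum_j I(X^{(j)};C,U^{(j)}|Y)$ inside a high-probability bound, so the proposal as written has a genuine gap at its central step.
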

\begin{remark}
    Notably, the upper bound of Theorem \ref{theorem6} scales proportionally with $\tilde{\mathcal{O}}(1/\sqrt{nm})$ in the supersample setting, whereas Theorem \ref{theorem5} does not, as only one sample is chosen to evaluate the test loss each time under the LOO setting. Nevertheless, both Theorems \ref{theorem5} and \ref{theorem6} can improve upon existing results by exploiting one-dimensional random variables ($U$ or $\tilde{U}$), achieving computational tractable and tighter bounds. Concretely, it can be proven that $I(\phi; \tilde{U})$ is strictly smaller than $H_{1-\lambda}(\phi)\approx I(\phi;S)$. By applying data-processing inequality on the Markov chain: $U\rightarrow \tilde{S}^s_{\mathrm{train}} \rightarrow \phi$ conditioned on $\tilde{S}_s$, we have $I(\phi;\tilde{U}|\tilde{S}_s) \leq (\phi;\tilde{S}^s_{\mathrm{train}}|\tilde{S}_s)$. We further utilize the independence between $\tilde{U}$ and $\tilde{S}_s$ and obtain $I(\phi;\tilde{U})\leq I(\phi;\tilde{U}) + I(\tilde{U};\tilde{S}_s|\phi) = I(\phi;\tilde{U}|\tilde{S}_s) + I(\tilde{U};\tilde{S}_s) = I(\phi;\tilde{U}|\tilde{S}_s)$. Similarly, the conditional independence between $\tilde{S}_s$ and $\phi$ given $\tilde{S}^s_{\mathrm{train}}$ indicates that $I(\phi;\tilde{S}^s_{\mathrm{train}}|\tilde{S}_s) \leq I(\phi;\tilde{S}^s_{\mathrm{train}}|\tilde{S}_s) +I(\phi;\tilde{S}_s) = I(\phi;\tilde{S}_s|\tilde{S}^s_{\mathrm{train}}) +I(\phi;\tilde{S}^s_{\mathrm{train}}) = I(\phi;\tilde{S}^s_{\mathrm{train}}) $. Since the training process $\tilde{S}^s_{\mathrm{train}}\mapsto \phi$ is deterministic, the randomness of  $\phi$ is mainly induced by the variable $\tilde{U}$, which implies that $H(\phi|\tilde{U})\approx 0$. With these in mind, we obtain that
    \begin{equation*}
        I(\phi;\tilde{U})\leq I(\phi;\tilde{U}|\tilde{S}_s)\leq I(\phi;\tilde{S}^s_{\mathrm{train}}|\tilde{S}_s)\leq I(\phi;S).
    \end{equation*}
    Therefore, the mutual information $I(\phi; \tilde{U})$ serves as a more stringent estimate of $I(\phi;S)$, thereby tightening the upper bounds. The above analysis can be similarly applied to $I(\phi;U)$. It is worth noting that the supersample validation error can approximate the generalization error when the training dataset is sufficiently large, as $\lim_{n,m\rightarrow\infty} \Delta_{\mathrm{sup}} = \overline{\mathrm{gen}}_{cls}$. In this case, Theorem \ref{theorem6} achieves a substantial improvement over Theorem \ref{theorem4}.
\end{remark}

\subsection{Fast-rate Generalization Bound}\label{Section4.4}

We further develop the fast-rate bound in the context of multi-view classification by leveraging the weighted generalization error: $L_{cls}-(1+\xi)\widehat{L}_{cls}$, where $\xi$ is a predefined positive constant. This framework facilitates the attainment of a faster convergence rate of $1/nm$ instead of the conventional $1/\sqrt{nm}$.

\begin{theorem}\label{theorem7}
    For any $\lambda\in(0,1)$, $\beta\in(0,\log 2)$, $ \xi\geq\frac{\log (2-e^{2\beta R^{\tilde{s}}_{x,y}})}{2\beta R^{\tilde{s}}_{x,y}}-1$, and $\delta>0$, with probability at least $1-\delta$, for all $\phi=\{\phi^{(j)}\}_{j=1}^m \in\Phi$, we have
    \begin{align*}
      \overline{\mathrm{gen}}_{cls} \leq  \xi\widehat{L}_{cls}   + \frac{\sum_{j=1}^{m} I(X^{(j)};C,U^{(j)}|Y)   + H_{1-\lambda}(\phi) +\hat{\mathcal{K}}}{nm\beta},
    \end{align*}
where $\hat{\mathcal{K}} = \tilde{\mathcal{O}}(1)$ is given in Appendix \ref{Proof-Thm7}. In the interpolating setting, i.e., $\widehat{L}_{cls} = 0$, we further have 
\begin{equation*}
    \overline{\mathrm{gen}}_{cls}  \leq \frac{\sum_{j=1}^{m} I(X^{(j)};C,U^{(j)}|Y)   + H_{1-\lambda}(\phi) +\hat{\mathcal{K}}}{nm\beta}.
\end{equation*}
\end{theorem}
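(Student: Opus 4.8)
\textbf{Proof proposal for Theorem \ref{theorem7}.}

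The plan is to follow the standard information-theoretic recipe for fast-rate bounds, adapted to the supersample construction and to the multi-view decomposition of the representation. I would work in the supersample setting introduced above, with supersample $\tilde{S}_s$, splitting variable $\tilde{U}$, and the output representation function $\phi$ (which is a deterministic function of $\tilde{S}^s_{\mathrm{train}}$, so $H(\phi\mid \tilde{S}^s_{\mathrm{train}})=0$). The target is the weighted generalization error $L_{cls}-(1+\xi)\widehat{L}_{cls}$, which on the supersample reduces to comparing the averaged test loss to $(1+\xi)$ times the averaged training loss. First I would set up the exponential moment: for the fixed ``ghost'' choice of split one forms, for each coordinate $i$, the random variable built from the difference of the test-side and $(1+\xi)$-scaled train-side losses on the pair $\{(x_{i,0},y_{i,0}),(x_{i,1},y_{i,1})\}$, and show that a suitable $\beta$-scaled exponential has mean at most $1$ when $\phi$ is drawn from its marginal (independent of $\tilde{U}$). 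This is exactly where the constraint $\xi\ge \frac{\log(2-e^{2\beta R^{\tilde s}_{x,y}})}{2\beta R^{\tilde s}_{x,y}}-1$ enters: it is the condition under which $\tfrac12 e^{-\beta(1+\xi)a} + \tfrac12 e^{\beta(1-(1+\xi)a)} \le 1$ uniformly over losses $a\in[0,R^{\tilde s}_{x,y}]$, i.e. the per-coordinate sub-exponential / ``self-bounding'' inequality that converts the $1/\sqrt{nm}$ rate into $1/nm$. I would verify this elementary one-variable inequality as a lemma (it is the same trick as in the fast-rate CMI literature), using convexity of $a\mapsto e^{-\beta(1+\xi)a}$ together with the explicit endpoint evaluation.

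Next I would pass from the ghost-sample exponential-moment statement to a bound involving information measures by the usual change-of-measure (Donsker–Varadhan / the Gibbs variational principle applied to the joint law of $\phi$ and the per-coordinate split variables versus the product of marginals). Because the $n$ coordinates of $\tilde{U}$ are independent and the losses factor over coordinates given $\phi$, the log-moment-generating bound tensorizes, producing a sum over $i\in[n]$ and, after the decoding functions $\hat\psi\circ\phi^{(j)}$ are expanded, over $j\in[m]$ as well — this is the source of the $nm$ in the denominator. The change of measure then charges a KL term that, after the standard manipulations used already in Theorems \ref{theorem3}--\ref{theorem6} (conditioning on $Y$, using $I(X;Z)=I(X;Z\mid Y)+I(Z;Y)$, subadditivity of mutual information over the $m$ views, and the bound $I(\phi;\tilde U)\le H_{1-\lambda}(\phi)$ together with the Rényi-entropy surrogate for the $\lambda\to$ term), collapses to $\sum_{j=1}^m I(X^{(j)};C,U^{(j)}\mid Y) + H_{1-\lambda}(\phi)$ plus a lower-order constant $\hat{\mathcal K}=\tilde{\mathcal O}(1)$ absorbing the $\log(1/\delta)$, the Rényi-versus-Shannon slack, and the cardinality-of-$\mathcal Y$ / hypothesis-space terms. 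A Markov/Chernoff step in $\delta$ and rearrangement then yield
\begin{equation*}
\overline{\mathrm{gen}}_{cls} \le \xi\,\widehat{L}_{cls} + \frac{\sum_{j=1}^m I(X^{(j)};C,U^{(j)}\mid Y) + H_{1-\lambda}(\phi) + \hat{\mathcal K}}{nm\beta},
\end{equation*}
and the interpolating case is the immediate specialization $\widehat{L}_{cls}=0$.

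The main obstacle I anticipate is not the change-of-measure bookkeeping — that is routine given the earlier theorems — but rather getting the per-coordinate exponential inequality to hold with a denominator that is genuinely $nm\beta$ rather than $n\beta$ times an extra $1/m$ factor. One has to be careful about whether the $1/m$ averaging over views in $\hat\ell_{\mathrm{avg}}$ is handled by Jensen before or after the exponential: handling it naively gives a bound in $1/(n\beta)$ with an averaged (not summed) information term, whereas the stated form has a summed information term over $j$ and an $nm$ in the denominator, which requires treating each $(i,j)$ pair as an independent ``example'' in the supersample and tensorizing over all $nm$ of them. Making that step rigorous — i.e. constructing the right supersample-type object at the level of (sample, view) pairs so that the $j$-sum of mutual informations appears with the $nm\beta$ normalization and the constraint on $\xi$ stays view-independent via the uniform loss bound $R^{\tilde s}_{x,y}$ — is the delicate point, and is presumably where Appendix \ref{Proof-Thm7} does its real work.
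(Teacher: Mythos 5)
Your first step (the per-coordinate exponential-moment inequality with the constraint $\xi\geq\frac{\log(2-e^{2\beta R^{\tilde{s}}_{x,y}})}{2\beta R^{\tilde{s}}_{x,y}}-1$, yielding a tail of order $e^{-c\beta nm t}$ over the randomness of $\tilde{U}$) does match the paper's opening move in Appendix \ref{Proof-Thm7}. But the second half of your plan — Donsker--Varadhan change of measure charging a KL term that then "collapses" to $\sum_{j=1}^m I(X^{(j)};C,U^{(j)}|Y)+H_{1-\lambda}(\phi)$ — contains a genuine gap. A change of measure between the joint law of $(\phi,\tilde{U})$ and the product of marginals produces an input--output information term of the type $I(\phi;\tilde{U})$ (or its conditional/samplewise variants); nothing in that mechanism generates the representation-compression term $\sum_{j=1}^m I(X^{(j)};C,U^{(j)}|Y)$, which measures information between the \emph{input views} and the \emph{learned representation} given the label, not between the hypothesis and the split variable. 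Your appeal to "the standard manipulations used already in Theorems \ref{theorem3}--\ref{theorem6}" does not close this, because in the paper those manipulations are not applied to a KL term at all: the information quantities there arise from the cardinality of typical subsets. Concretely, the paper conditions the concentration-over-$\tilde{U}$ step on a fixed realization $(x,z)\in\mathcal{Z}^{x}_{y,\gamma}$ and $\phi\in\Phi_{\epsilon}$, then takes a union bound over $y\in\mathcal{Y}$, over the elements of $\mathcal{Z}^{x}_{y,\gamma}$, and over $\Phi_{\epsilon}$; the bounds $\vert\mathcal{Z}^{x}_{y,\gamma}\vert\leq\exp\big(H(Z_y)+c^y_{\phi}\sqrt{d\log(\sqrt{nm}/\gamma)/2}\big)$ and $\vert\Phi_{\epsilon}\vert\leq\exp\big(H_{1-\lambda}(\phi)+\tfrac{1}{\lambda}\log(1/\delta)\big)$ are what turn the union-bound cost into $H(Z|Y)+H_{1-\lambda}(\phi)$ plus slack, and only then is the chain-rule decomposition $H(Z|Y)\leq\sum_{j=1}^m I(X^{(j)};C,U^{(j)}|Y)+H(Z|X^{(1)},Y)$ invoked. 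That typical-subset counting device is the central idea of the proof and is absent from your proposal; without it (or an equivalent substitute) your route terminates at an $I(\phi;\tilde{U})$-type bound, which is a different theorem.

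A secondary remark: the issue you flag as "the delicate point" — whether one gets $nm\beta$ or only $n\beta$ in the denominator — is indeed where the paper is doing something slick, but the resolution there is not a (sample, view)-level supersample construction as you suggest; the paper simply rescales the normalized sum by $2nm$ before applying Markov and then factorizes the exponential moment over all $(i,j)$ pairs (the $m$ view terms for a fixed $i$ share the same $\tilde{U}_i$, so this factorization is where care is needed). Your DV-based plan would additionally need a high-probability (not in-expectation) version of the CMI argument, which is further machinery you have not specified; but the decisive missing ingredient remains the typical-subset construction that produces the conditional mutual information term in the statement.
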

\begin{remark}
    In the interpolating regime, where the empirical risk approaches zero, the weighted generalization error simplifies to the unweighted generalization error, leading to the bound scaling of $1/nm$. This characteristic renders the derived fast-rate bounds especially useful when the training error is small or even zero. In particular, Theorem \ref{theorem7} with $m=1$ enhances the generalization bounds of single-view learning \cite{shwartz2018representation,hafez2020sample,kawaguchi2023does} by achieving a faster scaling rate of $1/n$ as opposed to $1/\sqrt{n}$.
\end{remark}

\begin{figure*}[t]
    \centering
    \subfigure[Multi-view Reconstruction]{
        \label{fig:corr_loss} 
        \includegraphics[height=45mm]{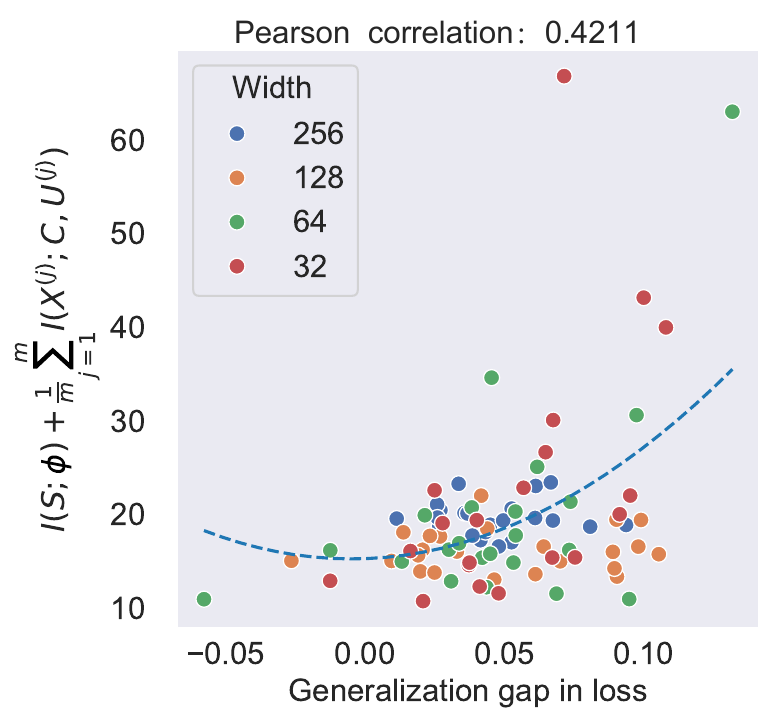} 
    }
    \hfill
    \subfigure[Multi-view Classification]{
        \label{fig:corr_cond} 
        \includegraphics[height=45mm]{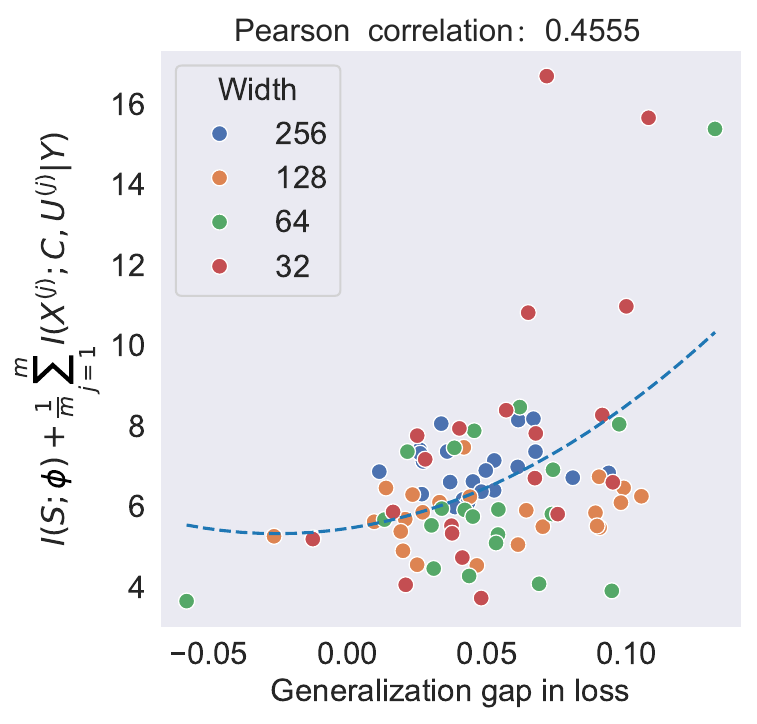} 
    }
    \hfill
    \subfigure[Pearson correlation analysis]{
        \raisebox{19mm}{ 
        \scalebox{0.75}{ 
            \begin{tabular}{lc}
                \toprule
                Metric & Correlation \\
                \midrule
                Num. params. & -0.0188 \\
                $\Vert W\Vert_F$ & -0.0781 \\
                $\frac{1}{m}\sum_{j=1}^{m}I(X^{(j)};C,U^{(j)})$ & 0.3712 \\
                $\frac{1}{m}\sum_{j=1}^{m}I(X^{(j)};C,U^{(j)}|Y)$ & 0.3842 \\
                $I(\phi;S)$ & 0.0508 \\
                $I(\phi;S) + I(X;Z)$ &  0.3928\\
                $I(\phi;S) + I(X;Z|Y)$ &  0.4130 \\
                $I(\phi;S) + \frac{1}{m}\sum_{j=1}^{m}I(X^{(j)};C,U^{(j)})$ &  \underline{0.4211}\\
                $I(\phi;S) + \frac{1}{m}\sum_{j=1}^{m}I(X^{(j)};C,U^{(j)}|Y)$ &  \textbf{0.4555} \\
                \bottomrule
            \end{tabular}
        }}
        \label{tbl:corr}
    }
    \caption{Pearson correlation analysis between the generalization error and information measures in the derived bounds for a five-layer MLP trained on synthetic Gaussian datasets. (a),(b): The correlations of $\frac{1}{m}\sum_{j=1}^{m}I(X^{(j)};C,U^{(j)})$ and $\frac{1}{m}\sum_{j=1}^{m}I(X^{(j)};C,U^{(j)}|Y)$ with the generalization error for both reconstruction and classification. (c): Comparison of Pearson correlation coefficients for different factors and the generalization error.}
    \label{fig1}
    \vskip -0.2in 
\end{figure*}

\begin{figure*}[t]
    \vskip 0.2in
    \centering
    \subfigure[MNIST (CNN, Adam)]{
      \label{aa} 
      \includegraphics[height=35mm]{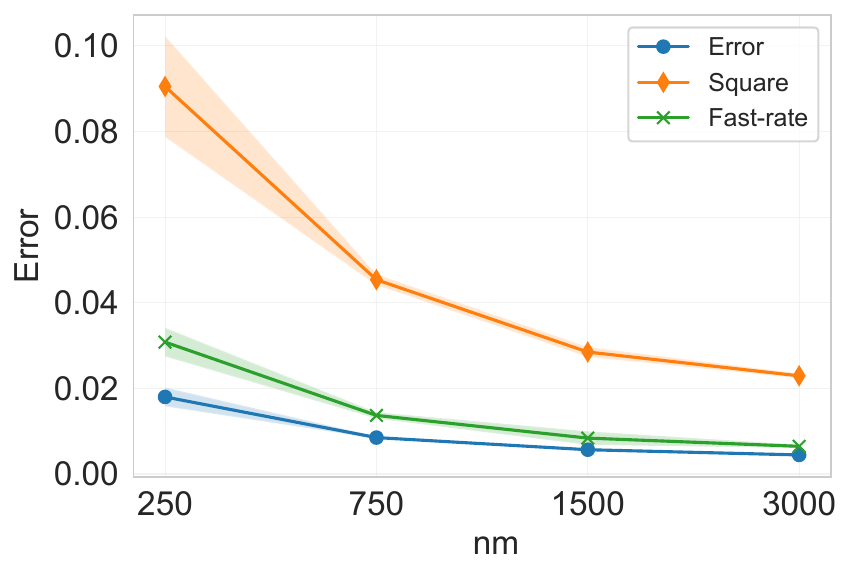}}
    \subfigure[CIFAR-10 (ResNet, SGD)]{
      \label{bb} 
      \includegraphics[height=35mm]{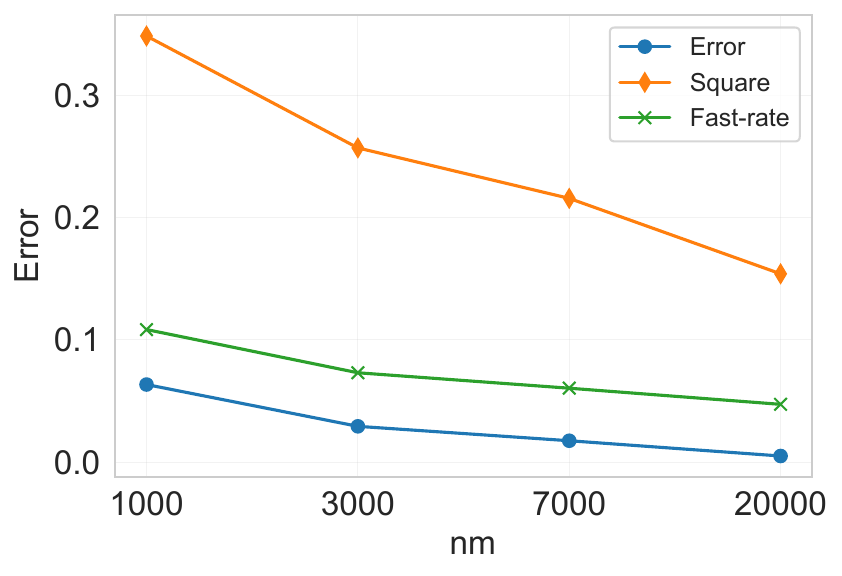}}
      \subfigure[CIFAR-10 (ResNet, SGLD)]{
      \label{bb} 
      \includegraphics[height=35mm]{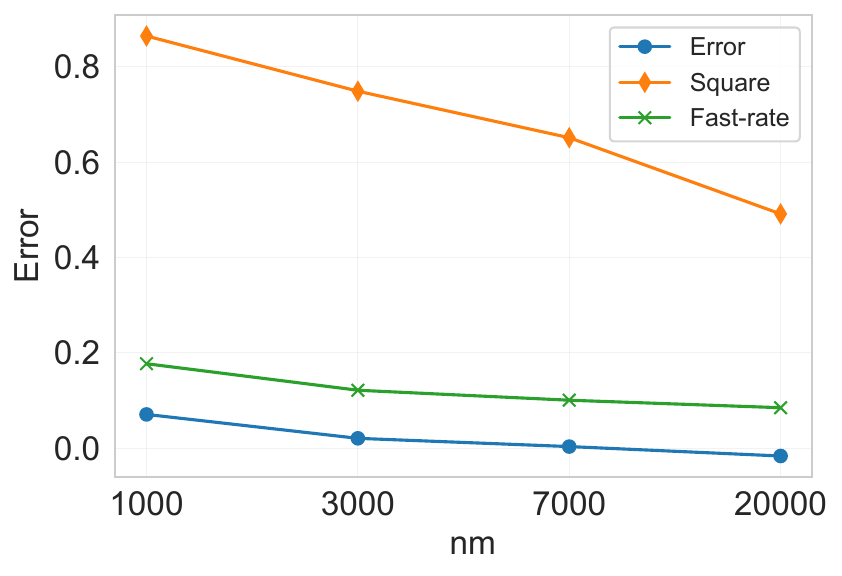}}
    \caption{Comparison of the generalization bounds for classification tasks on real-world datasets with different optimizers. (a) CNN model trained with binary MNIST using Adam, (b), (c): pretrained ResNet-50 model fine-tuned on CIFAR-10 using SGD and SGLD, respectively.
  }
    \label{fig2} 
    \vskip -0.2in
  \end{figure*}

\section{Numerical Results}
In this section, we empirically compare the derived generalization bounds for downstream classification tasks and the upper bounds of single-view learning \cite{kawaguchi2023does}. Our empirical evaluation consists of two parts: firstly, we investigate the effectiveness of learning from multiple views versus a single view by comparing the upper bounds on synthetic Gaussian datasets, which follows the same settings as \cite{kawaguchi2023does}. Secondly, we evaluate the tightness between the derived upper bounds and the generalization error by employing complex neural networks on real-world image classification datasets (ResNet-50 \cite{he2016deep} on CIFAR-10 dataset \cite{krizhevsky2009learning} and four-layer CNN on MNIST dataset \cite{deng2012mnist}), which follows the deep learning settings as in \cite{harutyunyan2021information,wang2023tighter}. In experiments, we regard all data of each category in the adopted dataset as a multi-view sample, with each data representing a distinct view of the multi-view data. We utilize the binary loss to quantify the empirical and population risks.

\subsection{Synthetic Datasets}
We train a $5$-layer MLP network consisting of a variational encoder and a classifier on synthetic Gaussian datasets. We follow the experimental settings adopted in \cite{kawaguchi2023does}, where 216 models are trained with varying model architectures, weight decay rates, random seeds, and dataset draws. To effectively evaluate the predictive power of different information measures on the generalization ability, , we extend the method from previous work \cite{galloway2022bounding,kawaguchi2023does} to multi-view learning scenarios and apply Pearson correlation analysis. As shown in Figure \ref{fig1}, both $\frac{1}{m}\sum_{j=1}^{m}I(X^{(j)};C,U^{(j)})$ and $\frac{1}{m}\sum_{j=1}^{m}I(X^{(j)};C,U^{(j)}|Y)$ exhibit a greater degree of correlation with the generalization error in comparison to other metrics, encompassing the number of the weight parameters, the F-norm of the learning hypothesis, and information bottleneck regularizer $I(X;Z|Y)$ of single-view learning. This empirical observation underscores the superiority of information measures involving multiple views over single views in capturing generalization dynamics.


\subsection{Real-world Datasets}
To precisely quantify information-theoretic generalization bounds in deep learning on real-world datasets, we utilize the experimental configuration provided by \cite{harutyunyan2021information}. Specifically, we train a $4$-layer CNN on the binary MNIST dataset with digits $4$ and $9$ and fine-tune a pre-trained ResNet-50 model on the CIFAR-10 dataset. Figure \ref{fig2} presents a comparison of the generalization error with the square-root bound and the fast-rate bound. This visualization result indicates that the derived upper bounds closely align with the trend of the generalization error, showing a decrease as the number of multi-view samples increases. Moreover, the fast-rate bound stands as the most stringent estimation of the generalization error among the comparisons, demonstrating its effectiveness with small training errors as elaborated in our analysis.

\section{Conclusion}

In this paper, we provide a comprehensive information-theoretic generalization analysis for multi-view learning. Specifically, we establish high-probability generalization bounds for both multi-view reconstruction and multi-view classification tasks. Our results demonstrate that capturing consensus and complementarity information from multiple views within an information-theoretic framework facilitates compact and maximally disentangled representations. In addition, the derived bounds reveal the critical role of the multi-view information bottleneck regularizer in improving the generalization performance for downstream classification tasks. In the LOO and supersample settings, we further derive novel data-dependent generalization bounds, achieving computational tractability and tightening existing results. Furthermore, we provide the first fast-rate bounds for multi-view learning to the best of our knowledge, yielding a faster converge rate in terms of the number of multi-view samples and the number of views within multi-view data. Numerical results validate the effectiveness of the derived bounds in capturing the generalization dynamics of multi-view learning. In future work, we will design theory-driven multi-view learning algorithms valid for various tasks to achieve excellent representation power and generalization performance.

\section*{Impact Statement}
This paper provides the information-theoretic generalization analysis for meta-learning with the goal of advancing the field of Machine Learning. There are many potential societal consequences of our work, none of which we feel must be specifically highlighted here.

\bibliography{references}

\begin{thebibliography}{44}
\providecommand{\natexlab}[1]{#1}
\providecommand{\url}[1]{\texttt{#1}}
\expandafter\ifx\csname urlstyle\endcsname\relax
  \providecommand{\doi}[1]{doi: #1}\else
  \providecommand{\doi}{doi: \begingroup \urlstyle{rm}\Url}\fi

\bibitem[Bai et~al.(2023)Bai, Huang, Qin, Chen, and Lin]{bai2023hvae}
Bai, R., Huang, R., Qin, Y., Chen, Y., and Lin, C.
\newblock Hvae: A deep generative model via hierarchical variational
  auto-encoder for multi-view document modeling.
\newblock \emph{Information Sciences}, 623:\penalty0 40--55, 2023.

\bibitem[Bu et~al.(2020)Bu, Zou, and Veeravalli]{bu2020tightening}
Bu, Y., Zou, S., and Veeravalli, V.~V.
\newblock Tightening mutual information-based bounds on generalization error.
\newblock \emph{IEEE Journal on Selected Areas in Information Theory},
  1\penalty0 (1):\penalty0 121--130, 2020.

\bibitem[Chen et~al.(2022)Chen, Wang, Li, Wu, Pedrycz, and Hirota]{chen2022k}
Chen, L., Wang, K., Li, M., Wu, M., Pedrycz, W., and Hirota, K.
\newblock K-means clustering-based kernel canonical correlation analysis for
  multimodal emotion recognition in human--robot interaction.
\newblock \emph{IEEE Transactions on Industrial Electronics}, 70\penalty0
  (1):\penalty0 1016--1024, 2022.

\bibitem[Cui et~al.(2024)Cui, Ren, Pu, Li, Pu, Wu, Shi, and He]{cui2024novel}
Cui, C., Ren, Y., Pu, J., Li, J., Pu, X., Wu, T., Shi, Y., and He, L.
\newblock A novel approach for effective multi-view clustering with
  information-theoretic perspective.
\newblock \emph{Advances in Neural Information Processing Systems}, 36, 2024.

\bibitem[Deng(2012)]{deng2012mnist}
Deng, L.
\newblock The mnist database of handwritten digit images for machine learning
  research.
\newblock \emph{IEEE Signal Processing Magazine}, 29\penalty0 (6):\penalty0
  141--142, 2012.

\bibitem[Dong et~al.(2024{\natexlab{a}})Dong, Gong, Chen, He, Li, Song, and
  Li]{dongtowards}
Dong, Y., Gong, T., Chen, H., He, Z., Li, M., Song, S., and Li, C.
\newblock Towards generalization beyond pointwise learning: A unified
  information-theoretic perspective.
\newblock In \emph{The Forty-first International Conference on Machine
  Learning}, 2024{\natexlab{a}}.

\bibitem[Dong et~al.(2024{\natexlab{b}})Dong, Gong, Chen, Yu, and
  Li]{dongrethinking}
Dong, Y., Gong, T., Chen, H., Yu, S., and Li, C.
\newblock Rethinking information-theoretic generalization: Loss entropy induced
  pac bounds.
\newblock In \emph{The Twelfth International Conference on Learning
  Representations}, 2024{\natexlab{b}}.

\bibitem[Fan et~al.(2022)Fan, Zhang, Wang, and Chen]{fan2022mv}
Fan, G., Zhang, C., Wang, K., and Chen, J.
\newblock Mv-han: A hybrid attentive networks based multi-view learning model
  for large-scale contents recommendation.
\newblock In \emph{Proceedings of the 37th IEEE/ACM International Conference on
  Automated Software Engineering}, pp.\  1--5, 2022.

\bibitem[Federici et~al.(2020)Federici, Dutta, Forr{\'e}, Kushman, and
  Akata]{federici2020learning}
Federici, M., Dutta, A., Forr{\'e}, P., Kushman, N., and Akata, Z.
\newblock Learning robust representations via multi-view information
  bottleneck.
\newblock In \emph{Proceedings of 8th International Conference on Learning
  Representations}, pp.\  1--26, 2020.

\bibitem[Fu et~al.(2022)Fu, Xu, Ong, and Tao]{fu2022geo}
Fu, Q., Xu, Q., Ong, Y.~S., and Tao, W.
\newblock Geo-neus: Geometry-consistent neural implicit surfaces learning for
  multi-view reconstruction.
\newblock \emph{Advances in Neural Information Processing Systems},
  35:\penalty0 3403--3416, 2022.

\bibitem[G{\'a}cs et~al.(1973)G{\'a}cs, Körner, et~al.]{gacs1973common}
G{\'a}cs, P., Körner, J., et~al.
\newblock Common information is far less than mutual information.
\newblock \emph{Problems of Control and Information Theory}, 2:\penalty0
  149--162, 1973.

\bibitem[Galloway et~al.(2023)Galloway, Golubeva, Salem, Nica, Ioannou, and
  Taylor]{galloway2022bounding}
Galloway, A., Golubeva, A., Salem, M., Nica, M., Ioannou, Y., and Taylor, G.~W.
\newblock Bounding generalization error with input compression: An empirical
  study with infinite-width networks.
\newblock \emph{IEEE Transactions on Machine Learning Research}, 2023.

\bibitem[Gao et~al.(2020)Gao, Lian, Wang, and Sun]{gao2020cross}
Gao, Q., Lian, H., Wang, Q., and Sun, G.
\newblock Cross-modal subspace clustering via deep canonical correlation
  analysis.
\newblock In \emph{Proceedings of the AAAI Conference on artificial
  intelligence}, volume~34, pp.\  3938--3945, 2020.

\bibitem[Hafez-Kolahi et~al.(2020)Hafez-Kolahi, Kasaei, and
  Soleymani-Baghshah]{hafez2020sample}
Hafez-Kolahi, H., Kasaei, S., and Soleymani-Baghshah, M.
\newblock Sample complexity of classification with compressed input.
\newblock \emph{Neurocomputing}, 415:\penalty0 286--294, 2020.

\bibitem[Hamdi et~al.(2021)Hamdi, Giancola, and Ghanem]{hamdi2021mvtn}
Hamdi, A., Giancola, S., and Ghanem, B.
\newblock Mvtn: Multi-view transformation network for 3d shape recognition.
\newblock In \emph{Proceedings of the IEEE/CVF International Conference on
  Computer Vision}, pp.\  1--11, 2021.

\bibitem[Hanel et~al.(2009)Hanel, Thurner, and Tsallis]{hanel2009robustness}
Hanel, R., Thurner, S., and Tsallis, C.
\newblock On the robustness of q-expectation values and r{\'e}nyi entropy.
\newblock \emph{Europhysics Letters}, 85\penalty0 (2):\penalty0 20005, 2009.

\bibitem[Harutyunyan et~al.(2021)Harutyunyan, Raginsky, Ver~Steeg, and
  Galstyan]{harutyunyan2021information}
Harutyunyan, H., Raginsky, M., Ver~Steeg, G., and Galstyan, A.
\newblock Information-theoretic generalization bounds for black-box learning
  algorithms.
\newblock \emph{Advances in Neural Information Processing Systems},
  34:\penalty0 24670--24682, 2021.

\bibitem[He et~al.(2016)He, Zhang, Ren, and Sun]{he2016deep}
He, K., Zhang, X., Ren, S., and Sun, J.
\newblock Deep residual learning for image recognition.
\newblock In \emph{Proceedings of the IEEE Conference on Computer Vision and
  Pattern Recognition}, pp.\  770--778, 2016.

\bibitem[Hong et~al.(2023)Hong, Lin, Du, Chen, Tenenbaum, and Gan]{hong20233d}
Hong, Y., Lin, C., Du, Y., Chen, Z., Tenenbaum, J.~B., and Gan, C.
\newblock 3d concept learning and reasoning from multi-view images.
\newblock In \emph{Proceedings of the IEEE/CVF Conference on Computer Vision
  and Pattern Recognition}, pp.\  9202--9212, 2023.

\bibitem[Kawaguchi et~al.(2022)Kawaguchi, Deng, Luh, and
  Huang]{kawaguchi2022robustness}
Kawaguchi, K., Deng, Z., Luh, K., and Huang, J.
\newblock Robustness implies generalization via data-dependent generalization
  bounds.
\newblock In \emph{International Conference on Machine Learning}, pp.\
  10866--10894. PMLR, 2022.

\bibitem[Kawaguchi et~al.(2023)Kawaguchi, Deng, Ji, and
  Huang]{kawaguchi2023does}
Kawaguchi, K., Deng, Z., Ji, X., and Huang, J.
\newblock How does information bottleneck help deep learning?
\newblock In \emph{International Conference on Machine Learning}, pp.\
  16049--16096. PMLR, 2023.

\bibitem[Kleinman et~al.(2024)Kleinman, Achille, Soatto, and
  Kao]{kleinman2024gacs}
Kleinman, M., Achille, A., Soatto, S., and Kao, J.
\newblock Gacs-korner common information variational autoencoder.
\newblock \emph{Advances in Neural Information Processing Systems}, 36, 2024.

\bibitem[Krizhevsky et~al.(2009)Krizhevsky, Hinton,
  et~al.]{krizhevsky2009learning}
Krizhevsky, A., Hinton, G., et~al.
\newblock Learning multiple layers of features from tiny images.
\newblock 2009.

\bibitem[Lee \& Shin(2022)Lee and Shin]{lee2022renyicl}
Lee, K. and Shin, J.
\newblock R{\'e}nyicl: Contrastive representation learning with skew r{\'e}nyi
  divergence.
\newblock \emph{Advances in Neural Information Processing Systems},
  35:\penalty0 6463--6477, 2022.

\bibitem[Lee et~al.(2021)Lee, Arnab, Guadarrama, Canny, and
  Fischer]{lee2021compressive}
Lee, K.-H., Arnab, A., Guadarrama, S., Canny, J., and Fischer, I.
\newblock Compressive visual representations.
\newblock \emph{Advances in Neural Information Processing Systems},
  34:\penalty0 19538--19552, 2021.

\bibitem[Liang et~al.(2024)Liang, Deng, Ma, Zou, Morency, and
  Salakhutdinov]{liang2024factorized}
Liang, P.~P., Deng, Z., Ma, M.~Q., Zou, J.~Y., Morency, L.-P., and
  Salakhutdinov, R.
\newblock Factorized contrastive learning: Going beyond multi-view redundancy.
\newblock \emph{Advances in Neural Information Processing Systems}, 36, 2024.

\bibitem[Negrea et~al.(2019)Negrea, Haghifam, Dziugaite, Khisti, and
  Roy]{negrea2019information}
Negrea, J., Haghifam, M., Dziugaite, G.~K., Khisti, A., and Roy, D.~M.
\newblock Information-theoretic generalization bounds for sgld via
  data-dependent estimates.
\newblock \emph{Advances in Neural Information Processing Systems}, 32, 2019.

\bibitem[Oord et~al.(2018)Oord, Li, and Vinyals]{oord2018representation}
Oord, A. v.~d., Li, Y., and Vinyals, O.
\newblock Representation learning with contrastive predictive coding.
\newblock \emph{Advances in Neural Information Processing Systems}, 2018.

\bibitem[Rammal et~al.(2022)Rammal, Achille, Golatkar, Diggavi, and
  Soatto]{rammal2022leave}
Rammal, M.~R., Achille, A., Golatkar, A., Diggavi, S., and Soatto, S.
\newblock On leave-one-out conditional mutual information for generalization.
\newblock \emph{Advances in Neural Information Processing Systems},
  35:\penalty0 10179--10190, 2022.

\bibitem[Salehifar et~al.(2021)Salehifar, Nanjundaswamy, and
  Rose]{salehifar2021quantizer}
Salehifar, M., Nanjundaswamy, T., and Rose, K.
\newblock Quantizer design to exploit common information in layered and
  scalable coding.
\newblock \emph{IEEE Transactions on Signal Processing}, 69:\penalty0
  1787--1796, 2021.

\bibitem[Saxe et~al.(2019)Saxe, Bansal, Dapello, Advani, Kolchinsky, Tracey,
  and Cox]{saxe2019information}
Saxe, A.~M., Bansal, Y., Dapello, J., Advani, M., Kolchinsky, A., Tracey,
  B.~D., and Cox, D.~D.
\newblock On the information bottleneck theory of deep learning.
\newblock \emph{Journal of Statistical Mechanics: Theory and Experiment},
  2019\penalty0 (12):\penalty0 124020, 2019.

\bibitem[Shu et~al.(2022)Shu, Qu, and Zhu]{shu2022d}
Shu, H., Qu, Z., and Zhu, H.
\newblock D-gcca: decomposition-based generalized canonical correlation
  analysis for multi-view high-dimensional data.
\newblock \emph{Journal of Machine Learning Research}, 23\penalty0
  (169):\penalty0 1--64, 2022.

\bibitem[Shwartz-Ziv et~al.(2019)Shwartz-Ziv, Painsky, and
  Tishby]{shwartz2018representation}
Shwartz-Ziv, R., Painsky, A., and Tishby, N.
\newblock Representation compression and generalization in deep neural
  networks, 2019.

\bibitem[Steinke \& Zakynthinou(2020)Steinke and
  Zakynthinou]{steinke2020reasoning}
Steinke, T. and Zakynthinou, L.
\newblock Reasoning about generalization via conditional mutual information.
\newblock In \emph{Conference on Learning Theory}, pp.\  3437--3452. PMLR,
  2020.

\bibitem[Tezuka \& Namekawa(2021)Tezuka and Namekawa]{tezuka2021information}
Tezuka, T. and Namekawa, S.
\newblock Information bottleneck analysis by a conditional mutual information
  bound.
\newblock \emph{Entropy}, 23\penalty0 (8):\penalty0 974, 2021.

\bibitem[Tishby et~al.(1999)Tishby, Pereira, and Bialek]{tishby2000information}
Tishby, N., Pereira, F.~C., and Bialek, W.
\newblock The information bottleneck method.
\newblock In \emph{Proceedings of 37th Annual Allerton Conference on
  Communications, Control and Computing,}, pp.\  368--377, 1999.

\bibitem[Tsur et~al.(2024)Tsur, Permuter, and Goldfeld]{tsur2024several}
Tsur, D., Permuter, H., and Goldfeld, Z.
\newblock Several interpretations of max-sliced mutual information.
\newblock In \emph{2024 IEEE International Symposium on Information Theory
  (ISIT)}, pp.\  2526--2531. IEEE, 2024.

\bibitem[Wang et~al.(2019)Wang, Boudreau, Luo, Tan, and Zhou]{wang2019deep}
Wang, Q., Boudreau, C., Luo, Q., Tan, P.-N., and Zhou, J.
\newblock Deep multi-view information bottleneck.
\newblock In \emph{Proceedings of the 2019 SIAM International Conference on
  Data Mining}, pp.\  37--45. SIAM, 2019.

\bibitem[Wang et~al.(2023)Wang, Li, Li, Yuan, and Wang]{wang2023self}
Wang, S., Li, C., Li, Y., Yuan, Y., and Wang, G.
\newblock Self-supervised information bottleneck for deep multi-view subspace
  clustering.
\newblock \emph{IEEE Transactions on Image Processing}, 32:\penalty0
  1555--1567, 2023.

\bibitem[Wang \& Mao(2023)Wang and Mao]{wang2023tighter}
Wang, Z. and Mao, Y.
\newblock Tighter information-theoretic generalization bounds from
  supersamples.
\newblock In \emph{Proceedings of the 40th International Conference on Machine
  Learning}, 2023.

\bibitem[Watanabe(1960)]{watanabe1960information}
Watanabe, S.
\newblock Information theoretical analysis of multivariate correlation.
\newblock \emph{IBM Journal of Research and Development}, 4\penalty0
  (1):\penalty0 66--82, 1960.

\bibitem[Xu \& Raginsky(2017)Xu and Raginsky]{xu2017information}
Xu, A. and Raginsky, M.
\newblock Information-theoretic analysis of generalization capability of
  learning algorithms.
\newblock \emph{Advances in Neural Information Processing Systems}, 30, 2017.

\bibitem[Xu et~al.(2021)Xu, Ren, Tang, Pu, Zhu, Zeng, and He]{xu2021multi}
Xu, J., Ren, Y., Tang, H., Pu, X., Zhu, X., Zeng, M., and He, L.
\newblock Multi-vae: Learning disentangled view-common and view-peculiar visual
  representations for multi-view clustering.
\newblock In \emph{Proceedings of the IEEE/CVF International Conference on
  Computer Vision}, pp.\  9234--9243, 2021.

\bibitem[Zhang et~al.(2024)Zhang, Lu, Yu, Xin, and Chen]{zhang2024discovering}
Zhang, Q., Lu, M., Yu, S., Xin, J., and Chen, B.
\newblock Discovering common information in multi-view data.
\newblock \emph{Information Fusion}, 108:\penalty0 102400, 2024.

\end{thebibliography}
\bibliographystyle{icml2025}

\newpage
\appendix
\onecolumn
\section{Notations}
The main notations for proofs are summarized in Table \ref{Notations}.

\begin{table}[!ht]
    \centering
    \renewcommand\arraystretch{1.5}
    \caption{Summary of main notations involved  in this paper}
    \resizebox{\textwidth}{!}{
    \begin{tabular}{l|l}
        \hline
        Notations & Descriptions\\
        \hline
        $\mathcal{X}$ & the feature space. \\
        $\mathcal{Y}$ & the label space. \\
        $\Phi$ & the hypothesis space of functions mapping from $\mathbb{R}^{m\times d}$ to $\mathbb{R}^{m\times d}$. \\
        $X$ & the multi-view data variable. \\ 
        $Y$ & the label variable. \\ 
        $X_y$ & the random variable $X$ conditional on $Y=y$.\\
        $C$ & the common information across multiple views. \\
        $U^{(j)}$ & the unique information of the $j$-th view. \\
        $Z$ & the learned representation, denoted by $Z = (C, U^{(1)},\ldots, U^{(m)}).$\\
        $x $& the specific multi-view data consisting of $m$ views, i.e., $x=(x^{(1)},\ldots,x^{(m)})\in\mathbb{R}^d$. \\
        $y $& the specific label. \\
        $s$ & the training dataset for unsupervised learning, defined by $s=\big\{x_i=(x_i^{(1)},\ldots,x_i^{(m)})\big\}^n_{i=1}$.\\
        $S$ & the training dataset of multi-view learning, defined by $\tilde{S}=\{(x_i,y_i)\}$. \\
        $\tilde{S}_l$ & the dataset for leave-one-out setting, defined by  $\tilde{S}_l=\{(x_i,y_i)\}_{i=1}^{n+1}$. \\
        $\tilde{S}_{s}$ & the dataset for supersample setting, defined by  $\tilde{S}_{s} =\{(x_{i,0},y_{i,0}),(x_{i,1},y_{i,1})\}_{i=1}^n$. \\
        $U$ & the uniform random variable used to select the single test sample from $\tilde{S}_l$, $U\sim\mathrm{Unif}([n+1])$. \\
        $\tilde{U}$ & the uniform random variable used to split training and test samples from $\tilde{S}_s$, $\tilde{U}=\{\tilde{U}_i\}_{i=1}^n\sim\mathrm{Unif}(\{0,1\}^n)$. \\
        $\phi^{(j)}$ & the representation function for the $j$-th view $\phi^{(j)}=:\{(\phi_c^{(j)}, \phi_u^{(j)})\}:\mathbb{R}^{d}\mapsto \mathbb{R}^{d}$. \\
        $\phi$ & the representation function of multi-view data, defined by $\phi=:\{\phi^{(j)}\}_{j=1}^m$. \\
        $\psi$ & the decoder function for multi-view reconstruction mapping from $\mathbb{R}^d$ to $\mathbb{R}^d$. \\
        $\hat{\psi}$ & the decoder function for multi-view classification mapping from $\mathbb{R}^d$ to $\mathbb{R}$. \\
        $\ell(\psi\circ\phi(X^{(j)}),X^{(j)})$ & the loss function for single-view reconstruction, where $\ell:\mathbb{R}^d \times \mathbb{R}^d\mapsto\mathbb{R}_+$. \\
        $\hat{\ell}(\hat{\psi}\circ\phi(X^{(j)}),Y)$ & the loss function for single-view classification, where $\hat{\ell}:\mathbb{R} \times \mathbb{R}\mapsto\mathbb{R}_+$.\\
        $\ell_{\mathrm{avg}}(X;\psi,\phi)$ & the average loss of multi-view reconstruction, defined by $\ell_{\mathrm{avg}}(X;\psi,\phi) = \frac{1}{m}\sum_{j=1}^{m}\ell(\psi\circ\phi^{(j)}(X^{(j)}), X^{(j)})$ \\
        $\hat{\ell}_{\mathrm{avg}}((X,Y);\hat{\psi},\phi)$ & the average loss of multi-view classification, defined by $\hat{\ell}_{\mathrm{avg}}((X,Y);\hat{\psi},\phi) = \frac{1}{m}\sum_{j=1}^{m}\ell(\hat{\psi}\circ\phi^{(j)}(X^{(j)}), Y)$ \\
        $\overline{\mathrm{gen}}_{rec}$ & the generalization error for multi-view reconstruction.\\
        $\overline{\mathrm{gen}}_{cls}$ & the generalization error for multi-view classification. \\
        $\Delta_{\mathrm{loo}}$ & the validation error for leave-one-out (LOO) setting, $\Delta_{\mathrm{loo}}=\hat{\ell}_{\mathrm{avg}}((x_U,y_U);\hat{\psi},\phi)-\frac{1}{n}\sum_{i\neq U}\hat{\ell}_{\mathrm{avg}}((x_i,y_i);\hat{\psi},\phi)$.\\
        $\Delta_{\mathrm{sup}}$ & the validation error for supersample setting, $\Delta_{\mathrm{sup}}=\frac{1}{n}\sum_{i=1}^{n} \hat{\ell}_{\mathrm{avg}}((x_{i,1-\tilde{u}_i},y_{i,1-\tilde{u}_i});\hat{\psi},\phi)  - \frac{1}{n}\sum_{i=1}^{n} \hat{\ell}_{\mathrm{avg}}((x_{i,\tilde{u}_i},y_{i,\tilde{u}_i}); \hat{\psi},\phi)$. \\
        $R_x,R_{x,y}$ & the maximum attainable losses, $R_x=\sup_{x\in\mathcal{X}}\ell_{\mathrm{avg}}(X;\psi,\phi)$ and $R_{x,y}=\sup_{(X,Y)\in\mathcal{X}\times\mathcal{Y}}\hat{\ell}_{\mathrm{avg}}((X,Y);\hat{\psi},\phi)$, respectively. \\
        $R^s_x,R^{\tilde{s}}_{x,y}$ & the maximum samplewise losses, $R^s_x=\sup_{i\in[n]}\ell_{\mathrm{avg}}(x_i;\psi,\phi)$ and $R^{\tilde{s}}_{x,y}=\sup_{i\in[n]}\hat{\ell}_{\mathrm{avg}}((x_i,y_i);\hat{\psi},\phi)$. \\
        $\vert \mathcal{Y}\vert$ & the cardinality of hypothesis space $\mathcal{Y}$. \\
        $e$ & the base of the natural logarithm. \\
        \hline
    \end{tabular}
    }
    \label{Notations}
\end{table}

\section{Additional Lemmas}
\begin{lemma}\label{HZ}\citep{watanabe1960information}
    Given a set of $n$ stochastic variables $\lambda = (\lambda_1,\lambda_2,\ldots,\lambda_n) $, where $\lambda_i$ can take any different discrete values. The set $\lambda$ of $n$ variables is now divided into two subsets $\mu$ and $\upsilon$ respectively containing $l$ and $m$ variables, which satisfies $\mu\bigcup\upsilon=  \lambda$, $\mu\bigcap \upsilon=\emptyset$, and $n=l+m$. Then, $H(\lambda)=H(\mu)+H(\upsilon)-TC(\lambda;\mu,\upsilon)$, where $TC(\lambda;\mu,\upsilon)=H(\mu)+H(\upsilon)-H(\lambda)$ represents information redundancy.
\end{lemma}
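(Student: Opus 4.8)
The plan is to derive the identity directly from the chain rule for Shannon entropy and the definition of mutual information, and then to identify the correction term $TC(\lambda;\mu,\upsilon)$ with the mutual information between the two blocks so that its nonnegativity — and hence the ``redundancy'' interpretation — is transparent.

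First I would observe that since $\mu \cup \upsilon = \lambda$ and $\mu \cap \upsilon = \emptyset$ with $n = l + m$, the random vector $\lambda$ is, up to a permutation of its coordinates, the same as the pair $(\mu,\upsilon)$; because Shannon entropy depends only on the underlying distribution and not on the ordering of components, $H(\lambda) = H(\mu,\upsilon)$. Then the chain rule gives $H(\lambda) = H(\mu,\upsilon) = H(\mu) + H(\upsilon \mid \mu)$. Next I would rewrite the conditional entropy via the definition of mutual information, $I(\mu;\upsilon) = H(\upsilon) - H(\upsilon \mid \mu)$, i.e. $H(\upsilon \mid \mu) = H(\upsilon) - I(\mu;\upsilon)$. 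Substituting this into the chain-rule expression yields $H(\lambda) = H(\mu) + H(\upsilon) - I(\mu;\upsilon)$.

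Comparing with the claimed identity $H(\lambda) = H(\mu) + H(\upsilon) - TC(\lambda;\mu,\upsilon)$ and the stated definition $TC(\lambda;\mu,\upsilon) = H(\mu) + H(\upsilon) - H(\lambda)$, the two are manifestly consistent, and moreover we have identified $TC(\lambda;\mu,\upsilon) = I(\mu;\upsilon)$. Finally I would invoke $I(\mu;\upsilon) = KL\big(p(\mu,\upsilon)\,\Vert\,p(\mu)p(\upsilon)\big) \ge 0$ by the nonnegativity of the Kullback--Leibler divergence (Gibbs' inequality), so $H(\lambda) \le H(\mu) + H(\upsilon)$ with equality iff $\mu$ and $\upsilon$ are independent; this is exactly what is meant by $TC(\lambda;\mu,\upsilon)$ being the information redundancy between the two groups.

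There is essentially no serious obstacle: the statement is a rearrangement of the entropy chain rule combined with the definition of mutual information. The only point that warrants a line of care is the permutation-invariance of joint entropy used to regroup $\lambda$ as $(\mu,\upsilon)$, which is immediate. If one wishes, the same argument applied iteratively (splitting off one coordinate at a time) recovers the multi-variable total correlation $\mathrm{TC}(X_1,\ldots,X_n)$ used elsewhere in the paper, but the two-block version stated here follows from a single application of the chain rule.
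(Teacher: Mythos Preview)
Your argument is correct. Note, however, that the paper does not supply its own proof of this lemma: it is simply quoted from \cite{watanabe1960information} as an auxiliary fact, so there is nothing to compare against. As you observe, the displayed identity is a tautology once $TC(\lambda;\mu,\upsilon)$ is \emph{defined} as $H(\mu)+H(\upsilon)-H(\lambda)$; the only substantive content is the ``redundancy'' interpretation, which your identification $TC(\lambda;\mu,\upsilon)=I(\mu;\upsilon)\ge 0$ via the chain rule and Gibbs' inequality establishes cleanly.
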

\begin{lemma}\label{kawaguchi}\citep{kawaguchi2022robustness}
    Let the vector $X = (X_1,\ldots,X_k)$ follows the multinomial distribution with parameters and $p=(p_1,\ldots,p_k)$. Let $\bar{a}_1,\ldots,\bar{a}_k\geq 0$ be fixed such that $\sum_{i=1}^k \bar{a}_ip_i\neq 0$. Then, for any $\epsilon>0$,
    \begin{equation*}
        \mathbb{P}\Big(\sum_{i=1}^{k}\bar{a}_i\big(p_i-\frac{X_i}{m}\big)>\epsilon\Big)\leq \exp\Big(-\frac{m\epsilon^2}{\beta}\Big),
    \end{equation*} 
    where $\beta = 2\sum_{i=1}^k\bar{a}_i^2p_i$.
\end{lemma}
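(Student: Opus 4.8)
The plan is to reduce the multinomial linear deviation to the negative deviation of an empirical mean of i.i.d.\ bounded variables, and then run a one-sided Chernoff argument whose moment generating function is controlled by the \emph{second} moment (which is exactly $\beta/2$). First I would realize $X=(X_1,\dots,X_k)\sim\mathrm{Multinomial}(m,p)$ via $m$ independent categorical trials $T_1,\dots,T_m$ with $\mathbb{P}(T_t=i)=p_i$, so that $X_i$ is the number of trials landing in category $i$. Setting $W_t:=\bar a_{T_t}\ge 0$ (nonnegativity uses the hypothesis $\bar a_i\ge 0$) and $\mu:=\mathbb{E}[W_t]=\sum_{i=1}^k\bar a_i p_i$, one has the exact identity $\sum_{i=1}^k \bar a_i\big(p_i-\tfrac{X_i}{m}\big)=\tfrac1m\sum_{t=1}^m(\mu-W_t)$, together with $\mathbb{E}[W_t^2]=\sum_{i=1}^k\bar a_i^2 p_i=\beta/2$; the assumption $\sum_i\bar a_ip_i\neq 0$ forces $\beta>0$, so the bound is well posed.

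Next I would apply the exponential Markov inequality and independence: for every $\lambda>0$,
\begin{equation*}
  \mathbb{P}\Big(\tfrac1m\textstyle\sum_{t=1}^m(\mu-W_t)>\epsilon\Big)\ \le\ e^{-\lambda m\epsilon}\Big(e^{\lambda\mu}\,\mathbb{E}\big[e^{-\lambda W_1}\big]\Big)^m .
\end{equation*}
The key technical step is to bound the per-sample factor $e^{\lambda\mu}\mathbb{E}[e^{-\lambda W_1}]$. Since $\lambda W_1\ge 0$, I would use the elementary one-sided inequality $e^{-x}\le 1-x+\tfrac{x^2}{2}$ for all $x\ge 0$ (which holds because $g(x)=1-x+x^2/2-e^{-x}$ satisfies $g(0)=g'(0)=0$ and $g''\ge 0$ on $[0,\infty)$) to get $\mathbb{E}[e^{-\lambda W_1}]\le 1-\lambda\mu+\tfrac{\lambda^2}{2}\mathbb{E}[W_1^2]$, then $1+u\le e^{u}$ to get $e^{\lambda\mu}\mathbb{E}[e^{-\lambda W_1}]\le \exp\!\big(\tfrac{\lambda^2}{2}\mathbb{E}[W_1^2]\big)=\exp\!\big(\tfrac{\lambda^2\beta}{4}\big)$.

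Combining the two displays gives $\mathbb{P}(\cdot)\le \exp\!\big(-\lambda m\epsilon+\tfrac{m\lambda^2\beta}{4}\big)$ for all $\lambda>0$, and I would finish by minimizing the exponent over $\lambda$. The minimizer is $\lambda^\star=2\epsilon/\beta>0$, for which the exponent equals $-\tfrac{2m\epsilon^2}{\beta}+\tfrac{m\epsilon^2}{\beta}=-\tfrac{m\epsilon^2}{\beta}$, yielding the claimed bound $\mathbb{P}(\cdot)\le\exp(-m\epsilon^2/\beta)$. I expect the only real subtlety to be recording the one-sided bound $e^{-x}\le 1-x+x^2/2$ on $[0,\infty)$ and noting that it is precisely the nonnegativity of the $\bar a_i$ (hence of the $W_t$) that makes this applicable for the full range $\lambda>0$ without an auxiliary smallness constraint on $\lambda$; everything else is a routine Chernoff computation.
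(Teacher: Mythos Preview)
Your argument is correct. The reduction to i.i.d.\ categorical trials, the identity $\sum_i \bar a_i(p_i-X_i/m)=\tfrac1m\sum_t(\mu-W_t)$, the one-sided bound $e^{-x}\le 1-x+\tfrac{x^2}{2}$ on $[0,\infty)$, and the Chernoff optimization all check out, and the minimizer $\lambda^\star=2\epsilon/\beta$ lands exactly on $\exp(-m\epsilon^2/\beta)$.

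Note, however, that the paper does not supply its own proof of this lemma: it is stated in the ``Additional Lemmas'' section with a citation to \cite{kawaguchi2022robustness} and is used as a black box in the subsequent arguments (e.g., Lemma~\ref{BC1} and Lemma~\ref{AAA}). So there is no in-paper proof to compare against; your self-contained Chernoff derivation simply fills in what the paper defers to the reference. The nice feature of your route is that, by exploiting $W_t\ge 0$ and bounding the mgf through the raw second moment $\mathbb{E}[W_1^2]=\beta/2$ rather than the variance, you recover precisely the constant $\beta=2\sum_i\bar a_i^2p_i$ stated in the lemma without any extra slack.
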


\section{Proof Sketch}
For clarity, we present a proof sketch for deriving information-theoretic generalization bounds established in this paper. Our proof involves the construction of typical subsets and the tricky usage of probability bounds, including the four steps:

\textbf{Step 1.} Construct the typical subset. we construct the typical subsets associated with the feature representation space and the learning hypothesis space, and prove their properties by leveraging a standard proof used in information theory and the McDiarmid's inequality.

\textbf{Step 2.} Decompose the generalization gap. We decompose into three terms (i.e., $\mathrm{\uppercase\expandafter{\romannumeral1}} + \mathrm{\uppercase\expandafter{\romannumeral2}} + \mathrm{\uppercase\expandafter{\romannumeral3}}$ in Lemma \ref{lemmaC.1}), where the third term corresponds to the case of the extracted representation being in the typical set, while other two terms are for the case of the extracted representation being outside of the typical set.

\textbf{Step 3.} Bound each terms.  We bound each term in the decomposition. To be specific,  we derive an upper bound on the first term by invoking properties of the typical subset, while the other two terms are bounded by recasting the problem into that of multinomial distributions and applying the concentration inequality of multinomial distributions.

\textbf{Step 4.} Combine the upper bounds of each term, we then obtain the desirable bounds.

\section{Proof of Theorems \ref{theorem1}\&\ref{theorem2} [Generalization Bound for Multi-view Reconstruction]} 

Let multi-view data $X=\{X^{(j)}\}_{j=1}^m$ be generated with a hidden function $\theta$ by $X = \theta(Y,V)$, where $Y$ is the randomly generated label, $\theta$ is some hidden deterministic function, and $V=\{V^{(j)}=(V^{(j)}_1,\ldots,V^{(j)}_d)\}_{j=1}^m\in\mathcal{V}\subseteq \mathbb{R}^{m\times d}$ are i.i.d. nuisance variables. Denote the random variables for $X$ and $Z$ conditioned on $Y=y$ by $X_y$ and $Z_y$. For any $y\in\mathcal{Y}$, define the sensitivity $c^y_{\phi}$ of the representation function $\phi=\{\phi^{(j)}\}_{j=1}^m$ w.r.t the nuisance variable $V^{(j)}_i$:
\begin{align*}
    c^y_{\phi} = \sup_{j\in [m]} \sup_{v^{(j)}_1,\ldots, \hat{v}^{(j)}_i,\ldots,v^{(j)}_d}\vert & \log(p_{z}(\phi^{(j)}\circ\theta_y(v^{(j)}_1,\ldots, v^{(j)}_i,\ldots,v^{(j)}_d))) 
    - \log(p_{z}( \phi^{(j)}\circ\theta_y(v^{(j)}_1,\ldots, \hat{v}^{(j)}_i,\ldots,v^{(j)}_d)))  \vert,
\end{align*}
where $p_{z}(z)=\mathbb{P}(Z=z)$ and $\theta_y(v^{(j)}) = \theta (y,v^{(j)})$, and then define the global sensitivity of $\phi$ by
\begin{align*}
    c_{\phi} = \mathbb{E}_Y[c^Y_{\phi}].
\end{align*}
Let the set of all possible multi-view data and its representation by 
\begin{equation*}
    \mathcal{Z}^{x} = \{\theta_y(v),\phi\circ\theta_y(v): v\in\mathcal{V}, y\in\mathcal{Y} \}.
\end{equation*}
For any $\gamma>0$, we define the typical subset $\mathcal{Z}^{x}_{\gamma}$ of $\mathcal{Z}^{x}$ by
\begin{equation}\label{subset1}
    \mathcal{Z}^{x}_{\gamma}  = \Big\{x=(x^{(1)},\ldots,x^{(m)}),z=(c,u^{(1)},\ldots,u^{(m)})\in \mathcal{Z}^{x}: -\log p_{z}(z) - H(Z)\leq c_{\phi} \sqrt{\frac{d\log(\sqrt{nm}/\gamma)}{2}} \Big\}. 
\end{equation}

\subsection{Properties of the Typical Subset}

\begin{lemma}\label{lemma0}
    For any $\gamma>0$ and all $j\in[m]$, we have 
    \begin{equation*}
        \mathbb{P}(X,Z\notin  \mathcal{Z}^{x}_{\gamma})\leq \frac{\gamma}{\sqrt{n}},
    \end{equation*}
and 
   \begin{equation*}
    \vert  \mathcal{Z}^{x}_{\gamma}\vert \leq \exp( H(Z) + c_{\phi} \sqrt{\frac{d\log(\sqrt{nm}/\gamma)}{2}}).
   \end{equation*}
\end{lemma}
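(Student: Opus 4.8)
\textbf{Proof proposal for Lemma \ref{lemma0}.}

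The plan is to prove the two claims separately, both relying on the structure of $\mathcal{Z}^x_\gamma$ as a ``typical set'' defined via a concentration window of width $c_\phi\sqrt{d\log(\sqrt{nm}/\gamma)/2}$ around the entropy $H(Z)$. For the first claim, I would fix a label $y\in\mathcal{Y}$ and work conditionally on $Y=y$. The key observation is that $Z_y = \phi\circ\theta_y(V)$ is a function of the i.i.d. nuisance coordinates $V = \{V^{(j)}_i\}$, and by the definition of the global/per-label sensitivity $c^y_\phi$, the map $v\mapsto -\log p_z(\phi\circ\theta_y(v))$ (or more precisely the relevant single-view version summed/averaged appropriately) changes by at most $c^y_\phi$ when a single coordinate $v^{(j)}_i$ is perturbed. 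Hence $-\log p_z(Z_y)$ is a function of $d$ (or $md$, depending on how the views are aggregated in the definition) independent variables with bounded differences, so McDiarmid's inequality gives a sub-Gaussian tail around its mean $\mathbb{E}[-\log p_z(Z_y)] = H(Z_y)$ (or around $H(Z)$ after averaging over $y$, using $c_\phi = \mathbb{E}_Y[c^Y_\phi]$ and Jensen/convexity to pass the expectation through). Setting the deviation threshold to $c_\phi\sqrt{d\log(\sqrt{nm}/\gamma)/2}$ makes the McDiarmid bound equal to $\exp(-\log(\sqrt{nm}/\gamma)) = \gamma/\sqrt{nm} \le \gamma/\sqrt{n}$, which is exactly the claimed bound (with the slack from $m\ge 1$). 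Integrating over $y$ with respect to $p(Y)$ preserves the inequality.

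For the second claim, the bound on $|\mathcal{Z}^x_\gamma|$, I would use the standard counting argument for typical sets. By definition, every $(x,z)\in\mathcal{Z}^x_\gamma$ satisfies $-\log p_z(z) \le H(Z) + c_\phi\sqrt{d\log(\sqrt{nm}/\gamma)/2}$, equivalently $p_z(z) \ge \exp\big(-H(Z) - c_\phi\sqrt{d\log(\sqrt{nm}/\gamma)/2}\big)$. Since $\mathcal{Z}^x_\gamma$ is indexed by its $z$-components (each $z$ appearing with its preimage data $x$, but the cardinality is controlled by the number of distinct representation values $z$, using that $z$ determines the relevant probability $p_z(z)$), and the probabilities $p_z(z)$ over distinct $z$ in the set sum to at most $1$, we get $|\mathcal{Z}^x_\gamma| \cdot \exp\big(-H(Z) - c_\phi\sqrt{d\log(\sqrt{nm}/\gamma)/2}\big) \le \sum_{z\in\mathcal{Z}^x_\gamma} p_z(z) \le 1$, which rearranges to the stated bound. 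One subtlety I would need to address is whether $|\mathcal{Z}^x_\gamma|$ counts pairs $(x,z)$ or just representation values $z$; since distinct $x$ mapping to the same $z$ do not change $p_z(z)$, I would argue the count is effectively over distinct $z$-values (or invoke the finite-cardinality presumption on $\Phi$ and the fact that the loss ultimately only depends on $z$) so the summation-to-one argument goes through cleanly.

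The main obstacle I anticipate is the bookkeeping around how the $m$ views enter the bounded-differences argument: the sensitivity $c^y_\phi$ is defined as a supremum over $j\in[m]$ of per-view single-coordinate perturbations of $-\log p_z(\phi^{(j)}\circ\theta_y(\cdot))$, but the quantity being concentrated, $-\log p_z(Z)$ where $Z = (C, U^{(1)},\ldots,U^{(m)})$, couples all views through the joint law $p_z$. I would need to either (i) show that perturbing one coordinate $v^{(j)}_i$ affects only the $j$-th block and hence changes $-\log p_z(Z)$ by at most the per-view sensitivity (requiring a factorization or a conditional-independence structure of $p_z$ across view-blocks given the perturbation), or (ii) absorb a factor of $m$ into the McDiarmid variance constant and verify this is consistent with the constants $\mathcal{K}_1,\mathcal{K}_2,\mathcal{K}_3$ claimed in Theorem \ref{theorem1}. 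Getting this dependence exactly right — so that the threshold $c_\phi\sqrt{d\log(\sqrt{nm}/\gamma)/2}$ (note: no $m$ inside the square root, but $nm$ inside the log) comes out correctly — is where I would spend most of the care; the rest is routine application of McDiarmid and the typical-set counting lemma.
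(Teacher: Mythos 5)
Your proposal follows essentially the same route as the paper's proof: McDiarmid's inequality applied to $-\log p_{z}(Z)$ as a function of the $d$ independent nuisance coordinates with bounded-difference constant $c_{\phi}$, mean identified with $H(Z)$ by averaging over $(Y,V)$, threshold $\epsilon = c_{\phi}\sqrt{d\log(\sqrt{nm}/\gamma)/2}$ giving tail $\gamma/\sqrt{nm}\leq\gamma/\sqrt{n}$, followed by the standard typical-set counting argument $1\geq\sum_{z}p_{z}(z)\geq\vert\mathcal{Z}^{x}_{\gamma}\vert\exp(-H(Z)-\epsilon)$. The subtleties you flag (per-view sensitivity versus the joint law $p_{z}$, and counting pairs $(x,z)$ versus distinct $z$-values) are likewise passed over silently in the paper's own proof, so they do not mark a departure from it.
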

\begin{proof}
    Consider the function $f(y,v) = -\log p_{z}(h_y(v))$, where $h_y(v) =\phi\circ\theta_y(v)$. Let $p_y(y)=\mathbb{P}(Y = y)$, $p_v(v)=\mathbb{P}(V = v)$, and $h_y^{-1}(z)=\{v\in\mathcal{V}: h_y(v) = z\}$, we have 
    \begin{align*}
        \mathbb{E}_{Y,V} [f(Y,V)] = & -\sum_{y\in\mathcal{Y}}p_y(y)\sum_{v\in\mathcal{V}} p_v(v)\log  p_{z}(h_y(v))\\
        = & -\sum_{y\in\mathcal{Y}}p_y(y) \sum_{z\in{\mathcal{Z}}}\sum_{v\in h_y^{-1}(z)} p_v(v)\log  p_{z}(h_y(v))  \\
         =& -\sum_{z\in{\mathcal{Z}}}\Big(\sum_{y\in\mathcal{Y}}p_y(y)\sum_{v\in h_y^{-1}(z)} p_v(v) \Big)\log  p_{z}(z)  \\
         = & - \sum_{z\in{\mathcal{Z}}}p_{z}(z)\log p_{z}(z) \\
         =& H(Z).
    \end{align*}
By applying McDiarmid's inequality on $f(V) = -\log p_{z}(Z)$, we have 
\begin{equation*}
    \mathbb{P}(-\log p_{z}(Z)-H(Z)\leq \epsilon) \leq \exp\Big(-\frac{2\epsilon^2}{d (c_{\phi})^2}\Big).
\end{equation*}
Let $\delta = \exp\Big(-\frac{2\epsilon^2}{d (c_{\phi})^2}\Big)$, we have 
\begin{equation*}
    \epsilon = c_{\phi} \sqrt{\frac{d\log(1/\delta)}{2}}.
\end{equation*}
Combining with (\ref{subset1}) and setting $\delta = \gamma/\sqrt{nm}$, having 
\begin{align*}
    \mathbb{P} (X,Z\notin  \mathcal{Z}^{x}_{\gamma} ) \leq \delta = \frac{\gamma}{\sqrt{nm}}.
\end{align*}
and accordingly $\epsilon = c_{\phi} \sqrt{\frac{d\log(\sqrt{nm}/\gamma)}{2}}$.
Further consider the size of the typical subset, for any $z\in \mathcal{Z}^{x}_{\gamma}$, we have 
\begin{align*}
    -\log p_{z}(z)-H(Z)\leq & \epsilon \\
    -H(Z) - \epsilon \leq & \log p_{z}(z) \\
    \exp(-H(Z) - \epsilon) \leq & p_{z}(z).
\end{align*}
This implies that 
\begin{align*}
    1\geq  \sum_{z\in  \mathcal{Z}^{x}_{\gamma}}p_{z}(z)
    \geq   \sum_{z\in  \mathcal{Z}^{x}_{\gamma}} \exp( -H(Z) - \epsilon) = \vert  \mathcal{Z}^{x}_{\gamma}\vert \exp(-H(Z) - \epsilon).
\end{align*}
We thus obtain
\begin{equation}\label{upp}
    \vert  \mathcal{Z}^{x}_{\gamma}\vert \leq \exp( H(Z) + c_{\phi} \sqrt{\frac{d\log(\sqrt{nm}/\gamma)}{2}}).
\end{equation}

\end{proof}

\subsection{Decompose the Generalization Error}
Recall the definition of the generalization error of the learning hypothesis $\phi$ for multi-view reconstruction:
\begin{equation}\label{gen1}
  \overline{\mathrm{gen}}_{rec} =   \mathbb{E}_{X\sim \mathcal{X}} \big[\ell_{\mathrm{avg}}(X;\psi,\phi)\big] - \frac{1}{n}\sum^n_{i=1} \frac{1}{m}\sum_{j=1}^{m} \ell(\psi\circ\phi^{(j)}(x^{(j)}_i),x^{(j)}_i).
\end{equation}
We define $T = \vert  \mathcal{Z}^{x}_{\gamma}\vert$, the elements of the typical subset $ \mathcal{Z}^{x}_{\gamma}$ by $ \mathcal{Z}^{x}_{\gamma}=\{(a^x_1,a_1^c,a^{u}_1),\ldots,(a^x_T,a_T^c,a^{u}_T)\}$, and 
\begin{align*}
    \mathcal{U} = & \{i\in[n],j\in[m]: (x^{(j)}_i,\phi^{(j)}(x^{(j)}_i)) \notin \mathcal{Z}^{x}_{\gamma}\} \\
    \mathcal{U}_k =& \{i\in[n],j\in[m]: x^{(j)}_i=a^x_k ,\phi^{(j)}(x^{(j)}_i) = (a^c_k, a^{u}_k)\},
\end{align*}
which are dependent on the training set $S$. By using the above notations, we decompose the generalization error as follows.
\begin{lemma}\label{lemmaC.1}
    The generalization error satisfies:
    \begin{equation}\label{eq5}
        \mathbb{E}_{X\sim \mathcal{X}} \big[\ell_{\mathrm{avg}}(X;\psi,\phi)\big] - \frac{1}{n}\sum^n_{i=1} \frac{1}{m}\sum_{j=1}^{m} \ell(\psi\circ\phi^{(j)}(x^{(j)}_i),x^{(j)}_i)   = \mathrm{\uppercase\expandafter{\romannumeral1}} + \mathrm{\uppercase\expandafter{\romannumeral2}} + \mathrm{\uppercase\expandafter{\romannumeral3}},
     \end{equation}
     where 
     \begin{align*}
        \mathrm{\uppercase\expandafter{\romannumeral1}} = &  \mathbb{P}\Big(X^{(j)},\phi^{(j)}(X^{(j)})\notin \mathcal{Z}^{x}_{\gamma}\Big)\Big( \mathbb{E}_{X\sim \mathcal{X},X^{(j)}\sim X} \big[\ell(\psi\circ\phi^{(j)}(X^{(j)}),X^{(j)})|X^{(j)},\phi^{(j)}(X^{(j)})\notin \mathcal{Z}^{x}_{\gamma}\big] \\
        & -\frac{1}{\vert \mathcal{U}\vert} \sum_{i,j\in \mathcal{U}} \ell(\psi\circ\phi^{(j)}(x^{(j)}_i),x^{(j)}_i)\Big) \\
        \mathrm{\uppercase\expandafter{\romannumeral2}} = &  \frac{1}{\vert \mathcal{U}\vert} \Big(\mathbb{P}\Big(X^{(j)},\phi^{(j)}(X^{(j)})\notin \mathcal{Z}^{x}_{\gamma}\Big) - \frac{\vert \mathcal{U}\vert}{nm} \Big)  \sum_{i,j\in\mathcal{U}}   \ell(\psi\circ\phi^{(j)}(x^{(j)}_i),x^{(j)}_i), \\
        \mathrm{\uppercase\expandafter{\romannumeral3}} = &\sum_{k=1}^{T} \Big(\mathbb{P}\Big(X^{(j)}=a^{x}_k ,\phi(X^{(j)}) = (a^c_k, a^{u}_k)\Big) - \frac{\vert \mathcal{U}_k\vert}{nm}  \Big) \ell\big(\psi(a^c_k, a^{u}_k),a^{x}_k\big).
    \end{align*}
\end{lemma}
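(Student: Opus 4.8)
The plan is to verify the identity in Lemma~\ref{lemmaC.1} by a direct algebraic manipulation that splits the population risk and the empirical risk according to whether the pair $(x_i^{(j)}, \phi^{(j)}(x_i^{(j)}))$ lands inside or outside the typical subset $\mathcal{Z}^x_\gamma$. First I would write the population risk $\mathbb{E}_{X\sim\mathcal{X}}[\ell_{\mathrm{avg}}(X;\psi,\phi)] = \mathbb{E}_{X,X^{(j)}\sim X}[\ell(\psi\circ\phi^{(j)}(X^{(j)}),X^{(j)})]$ (using exchangeability of the views, so the expectation does not depend on $j$) and condition on the event $E := \{X^{(j)},\phi^{(j)}(X^{(j)})\notin\mathcal{Z}^x_\gamma\}$ via the law of total expectation:
\begin{equation*}
\mathbb{E}[\ell \mid E]\,\mathbb{P}(E) + \mathbb{E}[\ell \mid E^c]\,\mathbb{P}(E^c).
\end{equation*}
On the ``inside'' event $E^c$, I would further decompose by which of the $T$ atoms $(a^x_k,a^c_k,a^u_k)$ of the typical set is hit, since on $\mathcal{Z}^x_\gamma$ the loss $\ell(\psi(a^c_k,a^u_k),a^x_k)$ is a constant depending only on $k$; this rewrites $\mathbb{E}[\ell\mid E^c]\,\mathbb{P}(E^c) = \sum_{k=1}^T \mathbb{P}(X^{(j)}=a^x_k,\phi(X^{(j)})=(a^c_k,a^u_k))\,\ell(\psi(a^c_k,a^u_k),a^x_k)$.

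Next I would do the matching decomposition on the empirical side. The average empirical loss $\frac{1}{nm}\sum_{i=1}^n\sum_{j=1}^m \ell(\psi\circ\phi^{(j)}(x_i^{(j)}),x_i^{(j)})$ splits over the index set: the $(i,j)$ pairs in $\mathcal{U}$ (outside the typical set) contribute $\frac{1}{nm}\sum_{i,j\in\mathcal{U}}\ell(\cdots) = \frac{|\mathcal{U}|}{nm}\cdot\frac{1}{|\mathcal{U}|}\sum_{i,j\in\mathcal{U}}\ell(\cdots)$, and the pairs in each $\mathcal{U}_k$ contribute $\frac{|\mathcal{U}_k|}{nm}\,\ell(\psi(a^c_k,a^u_k),a^x_k)$ because on $\mathcal{U}_k$ the loss is again the constant indexed by $k$. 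Subtracting the empirical average from the population risk term by term, the ``outside'' parts produce $\mathbb{P}(E)\mathbb{E}[\ell\mid E] - \frac{|\mathcal{U}|}{nm}\cdot\frac{1}{|\mathcal{U}|}\sum_{i,j\in\mathcal{U}}\ell(\cdots)$, which I would rearrange by adding and subtracting $\mathbb{P}(E)\cdot\frac{1}{|\mathcal{U}|}\sum_{i,j\in\mathcal{U}}\ell(\cdots)$ to obtain exactly $\mathrm{I}+\mathrm{II}$ as written; the ``inside'' parts directly give $\mathrm{III}=\sum_k\big(\mathbb{P}(X^{(j)}=a^x_k,\ldots)-\frac{|\mathcal{U}_k|}{nm}\big)\ell(\psi(a^c_k,a^u_k),a^x_k)$.

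The main obstacle, and the only place requiring care, is handling the index bookkeeping cleanly: one must check that $\mathcal{U}$ together with $\{\mathcal{U}_k\}_{k=1}^T$ partitions $[n]\times[m]$ (every $(i,j)$ with $(x_i^{(j)},\phi^{(j)}(x_i^{(j)}))\in\mathcal{Z}^x_\gamma$ lands in exactly one $\mathcal{U}_k$, since the atoms are distinct), so that $|\mathcal{U}|+\sum_k|\mathcal{U}_k| = nm$ and the empirical average is genuinely reconstructed with no double counting. A minor subtlety is the degenerate case $|\mathcal{U}|=0$, in which terms $\mathrm{I}$ and $\mathrm{II}$ are each multiplied by a zero probability (by Lemma~\ref{lemma0}, $\mathbb{P}(E)$ is small but need not vanish, yet the empirical sum over $\mathcal{U}$ is empty, so the product $\frac{1}{|\mathcal{U}|}\cdot 0$ should be interpreted as $0$ and the identity still holds); I would note this convention explicitly. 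Beyond that, the proof is a routine reorganization of finite sums and an application of the law of total expectation, so I would present it compactly rather than expanding every line.
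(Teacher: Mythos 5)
Your proposal is correct and follows essentially the same route as the paper: condition the population risk on membership in $\mathcal{Z}^x_\gamma$, split the empirical average over $\mathcal{U}$ and the $\mathcal{U}_k$'s, and add and subtract $\mathbb{P}(E)\cdot\frac{1}{\vert\mathcal{U}\vert}\sum_{i,j\in\mathcal{U}}\ell(\cdot)$ to produce terms $\mathrm{I}$ and $\mathrm{II}$, with $\mathrm{III}$ coming from the matched inside parts. Your explicit remarks on the partition $\mathcal{U}\cup\bigcup_k\mathcal{U}_k=[n]\times[m]$ and on the $\vert\mathcal{U}\vert=0$ convention are sound bookkeeping points that the paper's proof leaves implicit.
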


\begin{proof}
Note that $\mathcal{U}\bigcup\mathcal{U}_1\bigcup\cdots\mathcal{U}_T=[n]\cup [m]$. The population risk can be decomposed by
    \begin{align}
        &\mathbb{E}_{X\sim \mathcal{X}} \big[\ell_{\mathrm{avg}}(X;\psi,\phi)\big]  \nonumber\\
        = & \mathbb{P}\big(X,Z\notin \mathcal{Z}^{x}_{\gamma}\big) \mathbb{E}_{X\sim \mathcal{X}}\Big[\ell_{\mathrm{avg}}(X;\psi,\phi)|(X,Z)\notin \mathcal{Z}^{x}_{\gamma}\Big]  + \sum_{k=1}^{T} \mathbb{P}\big(X,Z\in \mathcal{Z}^{x}_{\gamma}\big) \mathbb{E}_{X\sim \mathcal{X}} \big[\ell_{\mathrm{avg}}(X;\psi,\phi)|(X,Z)\in \mathcal{Z}^{x}_{\gamma}\big] \nonumber\\
        =&  \mathbb{P}\big(X^{(j)},\phi^{(j)}(X^{(j)})\notin \mathcal{Z}^{x}_{\gamma}\big) \mathbb{E}_{X\sim \mathcal{X},X^{(j)}\sim X}\Big[\ell\big(\psi\circ\phi^{(j)}(X^{(j)}),X^{(j)}\big)\big|X^{(j)},\phi^{(j)}(X^{(j)})\notin \mathcal{Z}^{x}_{\gamma} \Big]\nonumber\\
        &+ \sum_{k=1}^{T}\mathbb{P}\big(X^{(j)}=a^{x}_k ,\phi(X^{(j)}) = (a^c_k, a^{u}_k)\big) \mathbb{E}_{X\sim \mathcal{X},X^{(j)}\sim X}\Big[\ell\big(\psi\circ\phi^{(j)}(X^{(j)}),X^{(j)}\big)\big|X^{(j)}=a^{x}_k ,\phi^{(j)}(X^{(j)}) = (a^c_k, a^{u}_k) \Big]  \nonumber\\
        =&  \mathbb{P}\big(X^{(j)},\phi^{(j)}(X^{(j)})\notin \mathcal{Z}^{x}_{\gamma}\big) \mathbb{E}_{X\sim \mathcal{X},X^{(j)}\sim X}\Big[\ell\big(\psi\circ\phi^{(j)}(X^{(j)}),X^{(j)}\big) |X^{(j)},\phi^{(j)}(X^{(j)})\notin \mathcal{Z}^{x}_{\gamma} \Big] \nonumber\\
        &+ \sum_{k=1}^{T}\mathbb{P}\big(X^{(j)}=a^{x}_k ,\phi^{(j)}(X^{(j)}) = (a^c_k, a^{u}_k)\big)  \ell\big(\psi(a^c_k, a^{u}_k),a^{x}_k\big)
         \label{de1}
    \end{align}
Similarly, the empirical risk is decomposed by 
\begin{align}
    \frac{1}{n}\sum^n_{i=1} \frac{1}{m}\sum_{j=1}^{m} \ell(\psi\circ\phi^{(j)}(x^{(j)}_i),x^{(j)}_i) = & \frac{1}{nm}\Big(\sum_{i,j\in\mathcal{U}} \ell(\psi\circ\phi^{(j)}(x^{(j)}_i),x^{(j)}_i) + \sum_{k=1}^T\sum_{i,j\in\mathcal{U}_k} \ell(\psi\circ\phi^{(j)}(x^{(j)}_i),x^{(j)}_i) \Big) \nonumber\\
    = & \frac{1}{nm} \sum_{i,j\in\mathcal{U}} \ell(\psi\circ\phi^{(j)}(x^{(j)}_i),x^{(j)}_i) +  \frac{1}{nm} \sum_{k=1}^T \sum_{i,j\in\mathcal{U}_k} \ell(\psi(a^c_k, a^{u}_k),a^{x}_k) \nonumber \\
    = & \frac{1}{nm} \sum_{i,j\in\mathcal{U}} \ell(\psi\circ\phi^{(j)}(x^{(j)}_i),x^{(j)}_i) +  \frac{\vert \mathcal{U}_k\vert}{nm}\sum_{k=1}^T  \ell(\psi(a^c_k, a^{u}_k),a^{x}_k). \label{de2}
\end{align}
Putting (\ref{de1}) and (\ref{de2}) back into (\ref{gen1}), we have 
\begin{align*}
      &\mathbb{E}_{X\sim \mathcal{X}} \big[\ell_{\mathrm{avg}}(X;\psi,\phi)\big] - \frac{1}{n}\sum^n_{i=1} \frac{1}{m}\sum_{j=1}^{m} \ell(\psi\circ\phi^{(j)}(x^{(j)}_i),x^{(j)}_i) \\
      = & \mathbb{P}\big(X^{(j)},\phi^{(j)}(X^{(j)})\notin \mathcal{Z}^{x}_{\gamma}\big) \mathbb{E}_{X\sim \mathcal{X},X^{(j)}\sim X}\Big[\ell(\psi\circ\phi^{(j)}(X^{(j)}),X^{(j)})|X^{(j)},\phi^{(j)}(X^{(j)})\notin \mathcal{Z}^{x}_{\gamma}\Big] \\
      & - \mathbb{P}\big(X^{(j)},\phi^{(j)}(X^{(j)})\notin \mathcal{Z}^{x}_{\gamma}\big) \frac{1}{\vert \mathcal{U}\vert} \sum_{i,j\in \mathcal{U}}\ell(\psi\circ\phi^{(j)}(x^{(j)}_i),x^{(j)}_i) \\
     & +\mathbb{P}\big(X^{(j)},\phi^{(j)}(X^{(j)})\notin \mathcal{Z}^{x}_{\gamma}\big) \frac{1}{\vert \mathcal{U}\vert} \sum_{i,j\in \mathcal{U}}\ell(\psi\circ\phi^{(j)}(x^{(j)}_i),x^{(j)}_i)  - \frac{1}{nm} \sum_{i,j\in\mathcal{U}} \ell(\psi\circ\phi^{(j)}(x^{(j)}_i),x^{(j)}_i) \\
      & + \sum_{k=1}^{T}\mathbb{P}\Big(X^{(j)}=a^{x}_k ,\phi(X^{(j)}) = (a^c_k, a^{u}_k)\Big)  \ell\big(\psi(a^c_k, a^{u}_k),a^{x}_k\big) -  \frac{\vert \mathcal{U}_k\vert}{nm} \sum_{k=1}^T  \ell(\psi(a^c_k, a^{u}_k),a^{x}_k) \\
      = &  \mathbb{P}\big(X^{(j)},\phi^{(j)}(X^{(j)})\notin \mathcal{Z}^{x}_{\gamma}\big)\Big( \mathbb{E}_{X\sim \mathcal{X},X^{(j)}\sim X} \big[\ell(\psi\circ\phi^{(j)}(X^{(j)}),X^{(j)})|X^{(j)},\phi^{(j)}(X^{(j)})\notin \mathcal{Z}^{x}_{\gamma}\big] \\
      & -\frac{1}{\vert \mathcal{U}\vert} \sum_{i,j\in \mathcal{U}} \ell(\psi\circ\phi^{(j)}(x^{(j)}_i),x^{(j)}_i)\Big) \\
      & +  \frac{1}{\vert \mathcal{U}\vert} \Big(\mathbb{P}\big(X^{(j)},\phi^{(j)}(X^{(j)})\notin \mathcal{Z}^{x}_{\gamma}\big) - \frac{\vert \mathcal{U}\vert}{nm} \Big)  \sum_{i,j\in\mathcal{U}}   \ell(\psi\circ\phi^{(j)}(x^{(j)}_i),x^{(j)}_i)\\
      & +  \sum_{k=1}^{T} \Big(\mathbb{P}\big(X^{(j)}=a^{x}_k ,\phi(X^{(j)}) = (a^c_k, a^{u}_k)\big) - \frac{\vert \mathcal{U}_k\vert}{nm}  \Big) \ell\big(\psi(a^c_k, a^{u}_k),a^{x}_k\big).
\end{align*}

To simplify the notation, we denote the above decomposition as
\begin{equation}\label{de3}
    \mathbb{E}_{X\sim \mathcal{X}} \big[\ell_{\mathrm{avg}}(X;\psi,\phi)\big] - \frac{1}{n}\sum^n_{i=1} \frac{1}{m}\sum_{j=1}^{m} \ell(\psi\circ\phi^{(j)}(x^{(j)}_i),x^{(j)}_i)   = \mathrm{\uppercase\expandafter{\romannumeral1}} + \mathrm{\uppercase\expandafter{\romannumeral2}} + \mathrm{\uppercase\expandafter{\romannumeral3}},
\end{equation}
where 
\begin{align*}
    \mathrm{\uppercase\expandafter{\romannumeral1}} = &  \mathbb{P}\Big(X^{(j)},\phi^{(j)}(X^{(j)})\notin \mathcal{Z}^{x}_{\gamma}\Big)\Big( \mathbb{E}_{X\sim \mathcal{X},X^{(j)}\sim X} \big[\ell(\psi\circ\phi^{(j)}(X^{(j)}),X^{(j)})|X^{(j)},\phi^{(j)}(X^{(j)})\notin \mathcal{Z}^{x}_{\gamma}\big] \\
    & -\frac{1}{\vert \mathcal{U}\vert} \sum_{i,j\in \mathcal{U}} \ell(\psi\circ\phi^{(j)}(x^{(j)}_i),x^{(j)}_i)\Big) \\
    \mathrm{\uppercase\expandafter{\romannumeral2}} = &  \frac{1}{\vert \mathcal{U}\vert} \Big(\mathbb{P}\Big(X^{(j)},\phi^{(j)}(X^{(j)})\notin \mathcal{Z}^{x}_{\gamma}\Big) - \frac{\vert \mathcal{U}\vert}{nm} \Big)  \sum_{i,j\in\mathcal{U}}   \ell(\psi\circ\phi^{(j)}(x^{(j)}_i),x^{(j)}_i), \\
    \mathrm{\uppercase\expandafter{\romannumeral3}} = &\sum_{k=1}^{T} \Big(\mathbb{P}\Big(X^{(j)}=a^{x}_k ,\phi(X^{(j)}) = (a^c_k, a^{u}_k)\Big) - \frac{\vert \mathcal{U}_k\vert}{nm}  \Big) \ell\big(\psi(a^c_k, a^{u}_k),a^{x}_k\big).
\end{align*}

\end{proof}
\subsection{Bounding Each Term in Decompositions}
\begin{lemma}\label{A1}
    For any $\gamma>0$, the following inequality holds:
    \begin{equation*}
        \mathrm{\uppercase\expandafter{\romannumeral1}} \leq \frac{\gamma R_x}{\sqrt{nm}}.
    \end{equation*}
\end{lemma}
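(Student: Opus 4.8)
\textbf{Proof plan for Lemma~\ref{A1}.} The plan is to bound term $\mathrm{\uppercase\expandafter{\romannumeral1}}$ by controlling its two factors separately: the probability prefactor $\mathbb{P}(X^{(j)},\phi^{(j)}(X^{(j)})\notin \mathcal{Z}^{x}_{\gamma})$ and the bracketed difference of conditional population loss minus the average empirical loss over the atypical index set $\mathcal{U}$. First I would invoke Lemma~\ref{lemma0}, which gives $\mathbb{P}(X,Z\notin \mathcal{Z}^{x}_{\gamma})\leq \gamma/\sqrt{nm}$; since the event $\{X^{(j)},\phi^{(j)}(X^{(j)})\notin \mathcal{Z}^{x}_{\gamma}\}$ is the per-view instantiation of the same typicality failure event (the typical set is defined in terms of $p_z$ and $H(Z)$, and each view's representation feeds into $Z$), this prefactor is at most $\gamma/\sqrt{nm}$.

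Next I would bound the bracketed term. Both the conditional expectation $\mathbb{E}_{X\sim\mathcal{X},X^{(j)}\sim X}[\ell(\psi\circ\phi^{(j)}(X^{(j)}),X^{(j)})\mid X^{(j)},\phi^{(j)}(X^{(j)})\notin \mathcal{Z}^{x}_{\gamma}]$ and the empirical average $\frac{1}{|\mathcal{U}|}\sum_{i,j\in\mathcal{U}}\ell(\psi\circ\phi^{(j)}(x_i^{(j)}),x_i^{(j)})$ are convex combinations (averages) of loss values $\ell(\psi\circ\phi^{(j)}(\cdot),\cdot)$, each of which lies in $[0,R_x]$ by the definition $R_x=\sup_{X\in\mathcal{X}}\ell_{\mathrm{avg}}(X;\psi,\phi)$ and nonnegativity of $\ell$. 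Hence the difference of these two averages lies in $[-R_x,R_x]$, so in particular it is at most $R_x$. (If one wants the cleanest argument: the bracket is bounded above by the conditional expectation itself, which is $\le R_x$, since the subtracted empirical average is nonnegative.) Multiplying the two bounds gives $\mathrm{\uppercase\expandafter{\romannumeral1}}\leq \frac{\gamma}{\sqrt{nm}}\cdot R_x = \frac{\gamma R_x}{\sqrt{nm}}$.

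The only subtlety worth spelling out is the identification of the per-view atypicality probability with the quantity bounded in Lemma~\ref{lemma0}: one should note that $\ell_{\mathrm{avg}}$ is the uniform average over $j\in[m]$ of the single-view losses, and the decomposition in Lemma~\ref{lemmaC.1} has already reorganized the double sum over $(i,j)$ so that the prefactor is exactly $\mathbb{P}(X^{(j)},\phi^{(j)}(X^{(j)})\notin \mathcal{Z}^{x}_{\gamma})$, which by the construction of $\mathcal{Z}^{x}_{\gamma}$ and the marginalization argument in the proof of Lemma~\ref{lemma0} coincides with $\mathbb{P}(X,Z\notin\mathcal{Z}^{x}_{\gamma})\le\gamma/\sqrt{nm}$. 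I do not expect any real obstacle here; this is the ``easy'' term among the three, and the argument is essentially a boundedness-of-loss estimate combined with the typicality tail bound already proved. The heavier lifting is reserved for terms $\mathrm{\uppercase\expandafter{\romannumeral2}}$ and $\mathrm{\uppercase\expandafter{\romannumeral3}}$, which will require the multinomial concentration inequality (Lemma~\ref{kawaguchi}) together with the cardinality bound $|\mathcal{Z}^{x}_{\gamma}|\le \exp(H(Z)+c_\phi\sqrt{d\log(\sqrt{nm}/\gamma)/2})$.
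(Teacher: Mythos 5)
Your proposal matches the paper's own argument for this lemma: apply Lemma~\ref{lemma0} to bound the atypicality probability by $\gamma/\sqrt{nm}$, drop the nonnegative empirical average so the bracket is at most the conditional expectation, bound that by $R_x$, and multiply. The identification of the per-view event probability with $\mathbb{P}(X,Z\notin\mathcal{Z}^{x}_{\gamma})$ is exactly the step the paper also takes, so there is no gap relative to the paper's proof.
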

\begin{proof}
    From Lemma \ref{lemma0}, we have 
    \begin{equation*}
        \mathbb{P}\big(X,Z\notin \mathcal{Z}^{x}_{\gamma}\big) \leq \frac{\gamma}{\sqrt{nm}}.
    \end{equation*}
Since $\ell(\phi(x_i),x_i)\geq 0$ for any $i\in[n]$, we thus obtain
    \begin{align*}
        \mathrm{\uppercase\expandafter{\romannumeral1}} = & \mathbb{P}\big(X^{(j)},\phi^{(j)}(X^{(j)})\notin \mathcal{Z}^{x}_{\gamma}\big)\Big( \mathbb{E}_{X\sim \mathcal{X},X^{(j)}\sim X} \big[\ell(\psi\circ\phi^{(j)}(X^{(j)}),X^{(j)})|X^{(j)},\phi^{(j)}(X^{(j)})\notin \mathcal{Z}^{x}_{\gamma}\big] \\
        & -\frac{1}{\vert \mathcal{U}\vert} \sum_{i,j\in \mathcal{U}} \ell(\psi\circ\phi^{(j)}(x^{(j)}_i),x^{(j)}_i)\Big), \\
        = & \mathbb{P}\big(X,Z\notin \mathcal{Z}^{x}_{\gamma}\big)\Big( \mathbb{E}_{X\sim \mathcal{X},X^{(j)}\sim X} \big[\ell(\psi\circ\phi^{(j)}(X^{(j)}),X^{(j)})|(X,Z)\notin \mathcal{Z}^{x}_{\gamma}\big] -\frac{1}{\vert \mathcal{U}\vert} \sum_{i,j\in \mathcal{U}} \ell(\psi\circ\phi^{(j)}(x^{(j)}_i),x^{(j)}_i)\Big) \\
        \leq &  \mathbb{P}\big(X,Z\notin \mathcal{Z}^{x}_{\gamma}\big) \mathbb{E}_{X\sim \mathcal{X},X^{(j)}\sim X} \big[\ell(\psi\circ\phi^{(j)}(X^{(j)}),X^{(j)})|(X,Z)\notin \mathcal{Z}^{x}_{\gamma}\big] \\
        \leq & \frac{\gamma}{\sqrt{nm}}  \mathbb{E}_{X\sim \mathcal{X},X^{(j)}\sim X} \big[\ell(\psi\circ\phi^{(j)}(X^{(j)}),X^{(j)})|(X,Z)\notin \mathcal{Z}^{x}_{\gamma}\big] \\
        \leq & \frac{\gamma R_x}{\sqrt{nm}}. 
    \end{align*}
\end{proof}

\begin{lemma}\label{BC1}
    For any $\gamma>0$, and $\delta>0$, with probability at least $1-\delta$, the following holds for all $\phi\in\Phi$
    \begin{align*}
        \mathrm{\uppercase\expandafter{\romannumeral2}} \leq & \frac{\sqrt{\mathbb{P}(X^{(j)},\phi^{(j)}(X^{(j)})\notin \mathcal{Z}^{x}_{\gamma})} \sum_{i,j\in\mathcal{U}}   \ell(\psi\circ\phi^{(j)}(x^{(j)}_i),x^{(j)}_i)}{\vert \mathcal{U}\vert}\sqrt{\frac{2\log(2\vert \Phi\vert/\delta)}{nm}},  \\
        \mathrm{\uppercase\expandafter{\romannumeral3}}  \leq & 2R_{x} \sqrt{\frac{2\Big(H(Z) +  c_{\phi} \sqrt{\frac{d\log(\sqrt{nm}/\gamma)}{2}} \Big) +  2\log(2\vert \Phi\vert/\delta)}{nm}}. 
    \end{align*}
\end{lemma}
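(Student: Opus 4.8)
Fix $\phi=\{\phi^{(j)}\}_{j=1}^m\in\Phi$; then the typical subset $\mathcal{Z}^{x}_{\gamma}$ and its enumeration $\{(a^x_k,a^c_k,a^u_k)\}_{k=1}^T$ are deterministic, and the quantities $p_0:=\mathbb{P}\big(X^{(j)},\phi^{(j)}(X^{(j)})\notin\mathcal{Z}^{x}_{\gamma}\big)$ and $p_k:=\mathbb{P}\big(X^{(j)}=a^x_k,\phi^{(j)}(X^{(j)})=(a^c_k,a^u_k)\big)$, $k\in[T]$, form a probability vector. The plan is to observe that the count vector $(|\mathcal{U}|,|\mathcal{U}_1|,\dots,|\mathcal{U}_T|)$ records, over the $nm$ single-view observations $\{x^{(j)}_i\}$, into which cell of this partition each observation and its image under $\phi^{(j)}$ falls, so that, conditionally on $\phi$, it is multinomial with parameters $nm$ and $(p_0,p_1,\dots,p_T)$. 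Then $\mathrm{II}$ and $\mathrm{III}$, as written in Lemma \ref{lemmaC.1}, are both linear functionals of this count vector centered at its mean, and Lemma \ref{kawaguchi} is exactly the concentration tool for such functionals.

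For term $\mathrm{II}$, I would apply Lemma \ref{kawaguchi} with a single active coordinate, taking $\bar a$ equal to $1$ on the ``outside'' cell and $0$ elsewhere, so that the variance parameter is $\beta=2p_0(1-p_0)\le 2p_0$. This yields $\mathbb{P}\big(p_0-\tfrac{|\mathcal{U}|}{nm}>\epsilon\big)\le\exp\!\big(-nm\epsilon^2/(2p_0)\big)$; setting the right-hand side equal to $\delta/(2|\Phi|)$ gives $p_0-\tfrac{|\mathcal{U}|}{nm}\le\sqrt{2p_0\log(2|\Phi|/\delta)/(nm)}$. Multiplying through by the nonnegative data-dependent factor $\tfrac{1}{|\mathcal{U}|}\sum_{i,j\in\mathcal{U}}\ell(\psi\circ\phi^{(j)}(x^{(j)}_i),x^{(j)}_i)$ reproduces the claimed bound on $\mathrm{II}$, and a union bound over $\phi\in\Phi$ makes it uniform.

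For term $\mathrm{III}$, I would apply Lemma \ref{kawaguchi} to the full count vector with weights $\bar a_0=0$ and $\bar a_k=\ell(\psi(a^c_k,a^u_k),a^x_k)\in[0,R_x]$ for $k\in[T]$, so that $\mathrm{III}=\sum_{k\ge 1}\bar a_k\big(p_k-|\mathcal{U}_k|/(nm)\big)$ and the variance parameter satisfies $\beta=2\sum_{k}\bar a_k^2 p_k\le 2R_x^2$. Choosing the deviation level, taking a union bound over the cells of the enumeration together with a union bound over $\Phi$, and invoking the cardinality estimate $|\mathcal{Z}^{x}_{\gamma}|\le\exp\!\big(H(Z)+c_{\phi}\sqrt{d\log(\sqrt{nm}/\gamma)/2}\big)$ from Lemma \ref{lemma0}, the log-cardinality of the typical subset enters as the additive term $H(Z)+c_{\phi}\sqrt{d\log(\sqrt{nm}/\gamma)/2}$ inside the square root; absorbing the usual $\sqrt{a+b}\le\sqrt a+\sqrt b$ slack into the leading factor $2R_x$ gives the stated inequality. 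Finally, allocating $\delta/2$ to each of $\mathrm{II}$ and $\mathrm{III}$ yields both bounds simultaneously with probability at least $1-\delta$ for all $\phi\in\Phi$.

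The step I expect to be the main obstacle is justifying the multinomial structure of $(|\mathcal{U}|,|\mathcal{U}_1|,\dots,|\mathcal{U}_T|)$: the $m$ views inside a single multi-view sample $x_i$ need not be independent, and the partition cells, the number $T$, and the sets $\mathcal{U},\mathcal{U}_k$ are all themselves data-dependent. Making this rigorous requires exploiting the generative description $X=\theta(Y,V)$ with i.i.d.\ nuisance coordinates $V^{(j)}$, together with the finiteness Presumption on $\Phi$ and $\mathcal{Z}^x$, so that, conditionally on $\phi$, the $nm$ single-view observations can be treated as independent draws from a fixed distribution over a fixed partition, after which Lemma \ref{kawaguchi} applies verbatim. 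The remaining bookkeeping---tracking which union bounds contribute $\log|\Phi|$ versus $\log|\mathcal{Z}^{x}_{\gamma}|$ and keeping the $\sqrt{p_0}$ factor intact in $\mathrm{II}$---is routine.
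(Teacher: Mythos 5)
Your treatment of $\mathrm{II}$ coincides with the paper's: a single-coordinate application of Lemma \ref{kawaguchi} to the ``outside'' cell, solving for $\epsilon$, multiplying by the nonnegative empirical average over $\mathcal{U}$, and a union bound over $\Phi$ (the paper's $\beta$ is $2p_0$ rather than $2p_0(1-p_0)$, which is immaterial).

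For $\mathrm{III}$ your route genuinely differs from the paper's, and as written it is internally inconsistent. The paper does \emph{not} apply Lemma \ref{kawaguchi} once to the full weighted sum: it forms the leave-one-out statistics $\mathrm{III}_k$ (the sum with the $k$-th cell's weight zeroed out), bounds each one by $R_x\sqrt{2\log(1/\delta)/(nm)}$, takes a union bound over $k\in[T]$ --- this is the only place the cardinality of $\mathcal{Z}^x_\gamma$, hence $H(Z)+c_\phi\sqrt{d\log(\sqrt{nm}/\gamma)/2}$ from Lemma \ref{lemma0}, enters --- and then recovers $\mathrm{III}=\frac{1}{T-1}\sum_{k=1}^T\mathrm{III}_k$, the factor $T/(T-1)\le 2$ (with a separate check for $T=1$) producing the leading $2R_x$. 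Your proposal instead applies the lemma once with $\bar a_k=b_k$ on all cells, in which case $\mathrm{III}$ is a single linear functional with $\beta\le 2R_x^2$ and you obtain $\mathrm{III}\le R_x\sqrt{2\log(2|\Phi|/\delta)/(nm)}$ directly; there is then no family of per-cell events to union over, so your claim that ``the log-cardinality of the typical subset enters as the additive term inside the square root'' via a union bound over cells does not follow from your own construction, and the factor $2$ does not come from a $\sqrt{a+b}\le\sqrt a+\sqrt b$ absorption (in the paper it comes from $T/(T-1)$). The saving grace is that your single-application bound is \emph{stronger} than the stated one (since $H(Z),c_\phi\ge 0$), so if you make that observation explicit --- and handle the degenerate case $\sum_k b_kp_k=0$, where $\mathrm{III}=0$ almost surely, to satisfy the lemma's precondition --- your argument proves the lemma; but you should either present it that way or follow the paper's leave-one-out averaging if you want to derive the stated form itself. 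Finally, the obstacle you flag about the multinomial structure of $(|\mathcal{U}|,|\mathcal{U}_1|,\ldots,|\mathcal{U}_T|)$ over the $nm$ view-level observations (dependence among views within one sample, data-dependence of the partition through $\phi$) is real but is shared with the paper, which applies Lemma \ref{kawaguchi} with $nm$ trials for each fixed $\phi$ and handles the dependence on $\phi$ only through the union bound over $\Phi$.
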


\begin{proof}
     Define $p_k = \mathbb{P}\Big(X^{(j)}=a^{x}_k ,\phi(X^{(j)}) = (a^c_k, a^{u}_k)\Big)$, for $k\in[T]$, $p_{T+1}=\mathbb{P}(X^{(j)},\phi(X^{(j)})\notin  \mathcal{Z}^{x}_{\gamma})$, $b_k = \ell(\psi(a^c_k, a^{u}_k),a^x_k)$ for $k\in[T+1]$, and
\begin{equation*}
    \mathrm{\uppercase\expandafter{\romannumeral3}}_k = \sum_{t=1}^T\Big(p_t-\frac{\vert \mathcal{U}_t\vert}{nm}\Big)b_t-\Big(p_{k}-\frac{\vert \mathcal{U}_k\vert}{nm}\Big)b_k.
\end{equation*}
Applying Lemma \ref{kawaguchi} with 
\begin{align*}
    k = T+1,\quad X=(\vert \mathcal{U}_1\vert,\ldots,\vert \mathcal{U}_T\vert,\vert \mathcal{U}\vert), \quad p=(p_1,\ldots,p_{T+1}),\\
    m = nm, \quad \bar{a}_k=0, \quad \bar{a}_{T+1}=0, \quad \textrm{and} \quad \bar{a}_t = b_t \quad \textrm{for any } \quad t\neq k.
\end{align*}
When there exists $t\in[T]\backslash k$ such that $p_tb_t>0$, we have $\sum_{t=1}^{T+1}\bar{a}_tp_t \neq 0$, which satisfies the precondition of Lemma \ref{kawaguchi}. Then for any $\epsilon>0$ and $k\in[T]$,
\begin{equation*}
    \mathbb{P}(\mathrm{\uppercase\expandafter{\romannumeral3}}_k >\epsilon ) \geq \exp\Big(-\frac{nm\epsilon^2}{2(\sum_{t=1}^Tp_tb_t^2-p_kb_k^2)}\Big).
\end{equation*}
Setting $\delta = \exp\big(-\frac{nm\epsilon^2}{2(\sum_{t=1}^Tp_tb_t^2-p_kb_k^2)}\big)$ and solving $\epsilon$, we can get 
\begin{equation}\label{eq81}
    \mathbb{P}\bigg(\mathrm{\uppercase\expandafter{\romannumeral3}}_k > \sqrt{\sum_{t=1}^T p_tb_t^2-p_kb_k^2 }\sqrt{\frac{2\log(1/\delta)}{nm}}\bigg)\leq \delta,
\end{equation} 
for any $k\in[T]$. Similarly, by setting $\bar{a}_{T+1}=1$ and $\bar{a}_t=0$ for any $t\in[T]$, we have
\begin{equation*}
    \mathbb{P}\Big(p_{T+1}-\frac{\vert \mathcal{U}\vert}{nm}>\epsilon\Big) \leq \exp\Big(-\frac{nm\epsilon^2}{2p_{T+1}}\Big),
\end{equation*}
and further get
\begin{equation}\label{eq91}
    \mathbb{P}\Big(p_{T+1}-\frac{\vert \mathcal{U}\vert}{nm}>\sqrt{\frac{2p_{T+1}\log(1/\delta)}{nm}}\Big)\leq \delta.
\end{equation}
Note that for $p_ta_t=0$, $t\neq k$ or $p_{T+1}=0$, then $\mathrm{\uppercase\expandafter{\romannumeral3}}_k=0$ or $p_{T+1}-\frac{\vert \mathcal{U}\vert}{nm}=0$, and  (\ref{eq81}) and (\ref{eq91}) can be satisfied.

By substituting (\ref{eq91}) into $\mathrm{\uppercase\expandafter{\romannumeral2}}$, we have that for any $\delta>0$, with probability at least $1-\delta$,
\begin{align}
    \mathrm{\uppercase\expandafter{\romannumeral2}} = & \frac{1}{\vert \mathcal{U}\vert} \Big(\mathbb{P}\Big(X^{(j)},\phi^{(j)}(X^{(j)})\notin \mathcal{Z}^{x}_{\gamma}\Big) - \frac{\vert \mathcal{U}\vert}{nm} \Big)  \sum_{i,j\in\mathcal{U}}   \ell(\psi\circ\phi^{(j)}(x^{(j)}_i),x^{(j)}_i), \nonumber\\
    \leq & \frac{\sqrt{\mathbb{P}(X^{(j)},\phi^{(j)}(X^{(j)})\notin \mathcal{Z}^{x}_{\gamma})} \sum_{i,j\in\mathcal{U}}   \ell(\psi\circ\phi^{(j)}(x^{(j)}_i),x^{(j)}_i)}{\vert \mathcal{U}\vert}\sqrt{\frac{2\log(1/\delta)}{nm}}\label{BB}
\end{align}
Similarly, by using (\ref{eq81}), we have for any $\delta>0$ and $k\in[T]$, with probability at least $1-\delta$,
\begin{align*}
  \mathrm{\uppercase\expandafter{\romannumeral3}}_k  \leq & \sqrt{\sum_{t=1}^T  \mathbb{P}\big(X^{(j)}=a^{x}_t ,\phi(X^{(j)}) = (a^c_t, a^{u}_t)\big)b_t^2-\mathbb{P}\big(X^{(j)}=a^{x}_k ,\phi(X^{(j)}) = (a^c_k, a^{u}_k)\big)b_k^2 }\sqrt{\frac{2\log(1/\delta)}{nm}} \\
    \leq & R_{x} \sqrt{\sum_{t=1}^T  \mathbb{P}\big(X^{(j)}=a^{x}_t ,\phi(X^{(j)}) = (a^c_t, a^{u}_t)\big)-\mathbb{P}\big(X^{(j)}=a^{x}_k ,\phi(X^{(j)}) = \psi(a^c_k, a^{u}_k)\big)}\sqrt{\frac{2\log(1/\delta)}{nm}} \\
    = & R_{x} \sqrt{\mathbb{P}\Big((X,Z) \in\mathcal{Z}^{x}_{\gamma} \bigcap \big(X^{(j)},\phi(X^{(j)}) \big)\neq \big(a^x_k,\psi(a^c_k, a^{u}_k)\big)  \Big)}\sqrt{\frac{2\log(1/\delta)}{nm}}\\
    \leq & R_{x} \sqrt{\frac{2\log(1/\delta)}{nm}}.
\end{align*}
By taking union bound over every $k\in[T]$, we have for any $\delta>0$, with probability at least $1-\delta$, the following holds for all $k\in[T]$:
\begin{equation}\label{eq10}
    \mathrm{\uppercase\expandafter{\romannumeral3}}_k  \leq R_{x} \sqrt{\frac{2\log(T/\delta)}{n}}.
\end{equation}
Putting (\ref{eq10}) back into $\mathrm{\uppercase\expandafter{\romannumeral3}}$, we have for any $\delta>0$, with probability at least $1-\delta$, 
\begin{align*}
    \mathrm{\uppercase\expandafter{\romannumeral3}} = &\sum_{k=1}^{T} \Big(\mathbb{P}\big(  X^{(j)}=a^{x}_k , \phi(X^{(j)}) = (a^c_k, a^{u}_k)\big) - \frac{\vert \mathcal{U}_k\vert}{nm}  \Big) \ell\big(\psi(a^c_k, a^{u}_k),a^{x}_k\big) \\
    = & \frac{1}{T-1}\sum_{k=1}^{T} \mathrm{\uppercase\expandafter{\romannumeral3}}_k \\
    \leq &  \frac{1}{T-1} \sum_{k=1}^{T}  R_{x} \sqrt{\frac{2\log(T/\delta)}{nm}} \\
    = & \frac{T}{T-1} R_{x} \sqrt{\frac{2\log(T/\delta)}{nm}}.
\end{align*}
For the case of $T=1$, we prove that for any $\delta>0$, with probability at least $1-\delta$,
\begin{align*}
    \mathrm{\uppercase\expandafter{\romannumeral3}} = & \Big(\mathbb{P}\big(X^{(j)}=a^{x}_1 ,\phi(X^{(j)}) = (a^c_1, a^{u}_1)\big) - \frac{\vert \mathcal{U}_k\vert}{nm}  \Big) \ell\big(\psi(a^c_1, a^{u}_1),a^{x}_1\big) \\
    \leq & b_1 \sqrt{\mathbb{P}\big(X^{(j)}=a^{x}_1 ,\phi(X^{(j)}) = (a^c_1, a^{u}_1)\big)}  \sqrt{\frac{2\log(1/\delta)}{nm}} \\
    \leq &R_{x} \sqrt{\frac{2\log(T/\delta)}{nm}}.
\end{align*}
Therefore, for any $T\geq 1$, the following inequality holds 
\begin{equation}\label{eq12}
    \mathrm{\uppercase\expandafter{\romannumeral3}}  \leq 2R_{x} \sqrt{\frac{2\log(T/\delta)}{nm}}.
\end{equation}
Notice that the bounds (\ref{BB}) and (\ref{eq12}) naturally hold for a deterministic $\phi$. We now extend the results to the case of stochastic representation functions $\phi=\{\phi^{(j)}\}_{j=1}^m\in\Phi$. By taking union bounds with (\ref{BB}) and (\ref{eq12}), we have for any $\delta>0$, with probability at least $1-\delta$, the following holds for all $\phi\in\Phi$:
\begin{align}
    \mathrm{\uppercase\expandafter{\romannumeral2}} \leq & \frac{\sqrt{\mathbb{P}(X^{(j)},\phi^{(j)}(X^{(j)})\notin \mathcal{Z}^{x}_{\gamma})} \sum_{i,j\in\mathcal{U}}   \ell(\psi\circ\phi^{(j)}(x^{(j)}_i),x^{(j)}_i)}{\vert \mathcal{U}\vert}\sqrt{\frac{2\log(\vert \Phi\vert/\delta)}{nm}},  \label{13}\\
    \mathrm{\uppercase\expandafter{\romannumeral3}}  \leq& 2R_{x} \sqrt{\frac{2\log(T \vert \Phi\vert/\delta)}{nm}}. \label{14}
\end{align}
From Lemma \ref{lemma0}, we know that 
\begin{equation*}
   T = \vert  \mathcal{Z}^{x}_{\gamma}\vert \leq \exp( H(Z) + c_{\phi} \sqrt{\frac{d\log(\sqrt{nm}/\gamma)}{2}}).
\end{equation*}
Substituting the above inequality into (\ref{14}), we have 
\begin{equation}\label{CC}
    \mathrm{\uppercase\expandafter{\romannumeral3}}  \leq 2R_{x} \sqrt{\frac{2\Big(H(Z) +  c_{\phi} \sqrt{\frac{d\log(\sqrt{nm}/\gamma)}{2}} \Big) +  2\log(\vert \Phi\vert/\delta)}{nm}}.
\end{equation}
Taking union bounds over (\ref{13}) and (\ref{CC}), we have for any $\delta>0$, with probability at least $1-\delta$, the following inequalities hold:
\begin{align*}
    \mathrm{\uppercase\expandafter{\romannumeral2}} \leq & \frac{\sqrt{\mathbb{P}(X^{(j)},\phi^{(j)}(X^{(j)})\notin \mathcal{Z}^{x}_{\gamma})} \sum_{i,j\in\mathcal{U}}   \ell(\psi\circ\phi^{(j)}(x^{(j)}_i),x^{(j)}_i)}{\vert \mathcal{U}\vert}\sqrt{\frac{2\log(2\vert \Phi\vert/\delta)}{nm}},  \\
    \mathrm{\uppercase\expandafter{\romannumeral3}}  \leq & 2R_{x} \sqrt{\frac{2\Big(H(Z) +  c_{\phi} \sqrt{\frac{d\log(\sqrt{nm}/\gamma)}{2}} \Big) +  2\log(2\vert \Phi\vert/\delta)}{nm}}.
\end{align*}
\end{proof}

In the following lemma, we present a general upper bound on the generalization error for the multi-view reconstruction task.
\begin{lemma}\label{general1}
    For any $\gamma>0$, $\delta>0$, and all $\phi\in\Phi$, with probability at least $1-\delta$, the following inequality holds:
    \begin{align*}
        &\mathbb{E}_{X\sim \mathcal{X}} \big[\ell_{\mathrm{avg}}(X;\psi,\phi)\big] - \frac{1}{n}\sum^n_{i=1} \frac{1}{m}\sum_{j=1}^{m} \ell(\psi\circ\phi^{(j)}(x^{(j)}_i),x^{(j)}_i)    \nonumber \\
    \leq & \frac{\gamma R_x}{\sqrt{nm}} + R^s_x \frac{\sqrt{\gamma}}{(nm)^{1/4}} \sqrt{\frac{2\log(2 \vert \Phi\vert /\delta)}{nm}} + 2\sqrt{2} R_{x} \sqrt{\frac{H(C)+\sum_{j=1}^m H(U^{(j)})   +  c_{\phi} \sqrt{\frac{d\log(\sqrt{nm}/\gamma)}{2}}  +  \log(2\vert \Phi\vert/\delta)}{nm}}.
    \end{align*}
\end{lemma}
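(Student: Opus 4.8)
The plan is to obtain the stated bound by direct assembly: combine the exact decomposition $\overline{\mathrm{gen}}_{rec} = \mathrm{\uppercase\expandafter{\romannumeral1}} + \mathrm{\uppercase\expandafter{\romannumeral2}} + \mathrm{\uppercase\expandafter{\romannumeral3}}$ of Lemma~\ref{lemmaC.1} with the termwise estimates already established in Lemmas~\ref{A1} and~\ref{BC1}, and then massage the resulting expressions into the claimed form. Since the bound of Lemma~\ref{A1} on the first term is deterministic, and the bounds of Lemma~\ref{BC1} on the second and third terms hold simultaneously on a single event of probability at least $1-\delta$ (the relevant union bound being built into Lemma~\ref{BC1}), summing the three contributions immediately yields a valid $(1-\delta)$-probability bound.

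For the first term, Lemma~\ref{A1} gives $\mathrm{\uppercase\expandafter{\romannumeral1}} \le \gamma R_x/\sqrt{nm}$, which is the first summand verbatim. For the second term, I would start from the bound in Lemma~\ref{BC1} and simplify its two data-dependent factors: by Lemma~\ref{lemma0} (as also used in the proof of Lemma~\ref{A1}), $\mathbb{P}(X^{(j)},\phi^{(j)}(X^{(j)})\notin\mathcal{Z}^{x}_{\gamma}) = \mathbb{P}(X,Z\notin\mathcal{Z}^{x}_{\gamma}) \le \gamma/\sqrt{nm}$, so the square root of this probability is at most $\sqrt{\gamma}/(nm)^{1/4}$; and the empirical average $\frac{1}{|\mathcal{U}|}\sum_{i,j\in\mathcal{U}}\ell(\psi\circ\phi^{(j)}(x^{(j)}_i),x^{(j)}_i)$ is bounded by the maximal samplewise loss $R^s_x$. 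Substituting both yields $\mathrm{\uppercase\expandafter{\romannumeral2}} \le R^s_x\,\sqrt{\gamma}\,(nm)^{-1/4}\sqrt{2\log(2|\Phi|/\delta)/(nm)}$, the second summand.

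For the third term, Lemma~\ref{BC1} gives $\mathrm{\uppercase\expandafter{\romannumeral3}} \le 2R_x\sqrt{\big(2H(Z)+2c_\phi\sqrt{d\log(\sqrt{nm}/\gamma)/2}+2\log(2|\Phi|/\delta)\big)/(nm)}$. Here I would invoke subadditivity of Shannon entropy for the learned representation $Z=(C,U^{(1)},\ldots,U^{(m)})$, namely $H(Z)\le H(C)+\sum_{j=1}^m H(U^{(j)})$, and then pull the factor $2=(\sqrt 2)^2$ under the radical, producing exactly $2\sqrt 2\, R_x\sqrt{\big(H(C)+\sum_{j=1}^m H(U^{(j)})+c_\phi\sqrt{d\log(\sqrt{nm}/\gamma)/2}+\log(2|\Phi|/\delta)\big)/(nm)}$. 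Adding the three bounds gives the lemma.

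There is no real obstacle here — the argument is essentially bookkeeping. The two points requiring care are (i) not spending the confidence budget $\delta$ twice, which is avoided because Lemma~\ref{BC1} already delivers both stochastic estimates on one $(1-\delta)$-event while the first term is deterministic, and (ii) tracking the numerical constants ($R_x$ versus $R^s_x$, the $(nm)^{1/4}$ factor in the second term, and $2$ versus $2\sqrt 2$ in the third) correctly through the substitutions above.
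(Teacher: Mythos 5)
Your proposal is correct and follows essentially the same route as the paper: invoke the decomposition of Lemma~\ref{lemmaC.1}, plug in Lemma~\ref{A1} for the first term, simplify the Lemma~\ref{BC1} bounds on the second and third terms via $\mathbb{P}(X,Z\notin\mathcal{Z}^{x}_{\gamma})\le\gamma/\sqrt{nm}$ and the samplewise bound $R^s_x$, and finish with subadditivity $H(Z)\le H(C)+\sum_{j=1}^m H(U^{(j)})$. The bookkeeping of constants and the observation that the confidence budget is spent only once (inside Lemma~\ref{BC1}) match the paper's argument.
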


\begin{proof}

Applying Lemmas \ref{lemma0} and \ref{BC1}, we have for any $\gamma>0$ and $\delta>0$, with probability at least $1-\delta$,
\begin{align*}
    \mathrm{\uppercase\expandafter{\romannumeral2}} \leq & \frac{\sqrt{\mathbb{P}(X^{(j)},\phi^{(j)}(X^{(j)})\notin \mathcal{Z}^{x}_{\gamma})} \sum_{i,j\in\mathcal{U}}   \ell(\psi\circ\phi^{(j)}(x^{(j)}_i),x^{(j)}_i)}{\vert \mathcal{U}\vert}\sqrt{\frac{2\log(2\vert \Phi\vert/\delta)}{nm}}  \nonumber\\
    =  & \frac{\sqrt{\mathbb{P}(X,Z\notin \mathcal{Z}^{x}_{\gamma})} \sum_{i,j\in\mathcal{U}}   \ell(\psi\circ\phi^{(j)}(x^{(j)}_i),x^{(j)}_i)}{\vert \mathcal{U}\vert}\sqrt{\frac{2\log(2\vert \Phi\vert/\delta)}{nm}}  \nonumber\\
    \leq & \frac{\sqrt{\gamma}}{(nm)^{1/4}} \frac{1}{\vert \mathcal{U}\vert} \sum_{i\in\mathcal{I}}R^s_x \sqrt{\frac{2\log(2 \vert \Phi\vert/\delta)}{nm}} \nonumber\\
    = &  R^s_x \frac{\sqrt{\gamma}}{(nm)^{1/4}} \sqrt{\frac{2\log(2 \vert \Phi\vert/\delta)}{nm}}, \label{eq13}
\end{align*}
and 
\begin{align*}
    \mathrm{\uppercase\expandafter{\romannumeral3}}  \leq& 2\sqrt{2} R_{x} \sqrt{\frac{\Big(H(Z) +  c_{\phi} \sqrt{\frac{d\log(\sqrt{nm}/\gamma)}{2}} \Big) +  \log(2\vert \Phi\vert/\delta)}{nm}}.
\end{align*}
According to Lemma \ref{A1}, we obtain that 
\begin{equation*}
    \mathrm{\uppercase\expandafter{\romannumeral1}} \leq \frac{\gamma R_x}{\sqrt{nm}}.
\end{equation*}
Combining the estimation of $\mathrm{\uppercase\expandafter{\romannumeral1}}$, $\mathrm{\uppercase\expandafter{\romannumeral2}}$, and $\mathrm{\uppercase\expandafter{\romannumeral3}}$ with Lemma \ref{lemmaC.1}, we obtain
\begin{align}
    &\mathbb{E}_{X\sim \mathcal{X}} \big[\ell_{\mathrm{avg}}(X;\psi,\phi)\big] - \frac{1}{n}\sum^n_{i=1} \frac{1}{m}\sum_{j=1}^{m} \ell(\psi\circ\phi^{(j)}(x^{(j)}_i),x^{(j)}_i)    \nonumber \\
    \leq & \frac{\gamma R_x}{\sqrt{nm}} + R^s_x \frac{\sqrt{\gamma}}{(nm)^{1/4}} \sqrt{\frac{2\log(2 \vert \Phi\vert /\delta)}{nm}} + 2\sqrt{2} R_{x} \sqrt{\frac{ H(Z) +  c_{\phi} \sqrt{\frac{d\log(\sqrt{nm}/\gamma)}{2}} +  \log(2\vert \Phi\vert/\delta)}{nm}}. \label{b23}
\end{align}
Further, we have the following inequality holds:
\begin{align*}
    H(Z) = H(C,U^{(1)},\ldots,U^{(m)}) \leq H(C)+\sum_{j=1}^m H(U^{(j)}).
\end{align*}
Combining this with the inequality (\ref{b23}), we complete the proof.

\end{proof}

\subsection{Completing the Proof of Theorem \ref{theorem1}}\label{Proof-Thm1}

\begin{restatetheorem}{\ref{theorem1}}[Restate]
        For any $\gamma>0$ and $\delta>0$, with probability at least $1-\delta$:
        \begin{align*}  
            \overline{\mathrm{gen}}_{rec}  \leq \mathcal{K}_1\sqrt{\frac{H(C)+\sum_{j=1}^m H(U^{(j)})  +  \mathcal{K}_2}{nm} }   +\frac{\mathcal{K}_3}{\sqrt{nm}},
        \end{align*}
 where 
        \begin{align*}
           \mathcal{K}_1 = & 2\sqrt{2}R_{x}, \\
           \mathcal{K}_2 = &   c_{\phi} \sqrt{\frac{d\log(\sqrt{nm}/\gamma)}{2}}  + \log(2 /\delta), \\
           \mathcal{K}_3 = & \gamma R_x + R^s_x \frac{\sqrt{\gamma}}{(nm)^{1/4}}\sqrt{2\log(2/\delta)}.
        \end{align*}
\end{restatetheorem}

\begin{proof}
  For deterministic functions $\phi$, we have $\vert \Phi\vert=1$. Substituting $\vert \Phi\vert=1$ into Lemma \ref{general1}, this completes the proof.  
\end{proof}

We proceed to prove the generalization bound of stochastic representation functions $\phi$ by calculating the size $\vert \Phi \vert$ of the hypothesis space $\Phi$. We define $p_\phi(\check{\phi}) = \mathbb{P}(\phi=\check{\phi})$ and for any $\lambda>0$
\begin{equation*}
    C_\lambda = \frac{1}{e^{\lambda H(\phi)}}\sum_{\check{\phi}\in\Phi}p_\phi^{1-\lambda}(\check{\phi}).
\end{equation*}
For any $\epsilon>0$, define the typical subset of $\Phi$:
\begin{equation*}
    \Phi_\epsilon = \{\check{\phi}\in\Phi:-\log p_{\phi}(\check{\phi})-H(\phi)\leq \epsilon\}.
\end{equation*}
The following lemma provides the properties of the typical subset $\Phi_\epsilon$:

\begin{lemma}\label{typicalset}
    For any $\lambda>0$, if we take $\epsilon=\frac{1}{\lambda}\log(C_\lambda/\delta)$, Then
    \begin{equation*}
        \mathbb{P}(\phi\notin \Phi_\epsilon) \leq \delta,
    \end{equation*}
and 
\begin{equation*}
    \vert \Phi_\epsilon\vert \leq \exp\Big(H_{1-\lambda}(\phi)+\frac{1}{\lambda}\log(\frac{1}{\delta})\Big).
\end{equation*}
\end{lemma}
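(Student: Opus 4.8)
The plan is to mimic the standard "typical set / AEP" style argument used for $\mathcal{Z}^x_\gamma$ in Lemma \ref{lemma0}, but now applied to the discrete random variable $\phi$ on the finite hypothesis space $\Phi$, with the R\'enyi entropy appearing through the normalizing constant $C_\lambda$. First I would bound the tail probability $\mathbb{P}(\phi\notin\Phi_\epsilon)$. By definition $\phi\notin\Phi_\epsilon$ means $-\log p_\phi(\phi)-H(\phi)>\epsilon$, i.e. $p_\phi(\phi)<e^{-H(\phi)-\epsilon}$. The trick is a Chernoff/Markov step on the random variable $p_\phi(\phi)^{-\lambda}$: for $\lambda>0$,
\begin{align*}
\mathbb{P}\big(\phi\notin\Phi_\epsilon\big)
&=\mathbb{P}\big(p_\phi(\phi)^{-\lambda}>e^{\lambda(H(\phi)+\epsilon)}\big)
\le e^{-\lambda(H(\phi)+\epsilon)}\,\mathbb{E}\big[p_\phi(\phi)^{-\lambda}\big].
\end{align*}
Now $\mathbb{E}[p_\phi(\phi)^{-\lambda}]=\sum_{\check\phi\in\Phi}p_\phi(\check\phi)\cdot p_\phi(\check\phi)^{-\lambda}=\sum_{\check\phi\in\Phi}p_\phi^{1-\lambda}(\check\phi)=e^{\lambda H(\phi)}C_\lambda$ by the definition of $C_\lambda$. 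Substituting gives $\mathbb{P}(\phi\notin\Phi_\epsilon)\le e^{-\lambda\epsilon}C_\lambda$, and choosing $\epsilon=\frac1\lambda\log(C_\lambda/\delta)$ makes the right-hand side exactly $\delta$, which is the first claim.

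For the cardinality bound I would relate $C_\lambda$ to the R\'enyi entropy $H_{1-\lambda}(\phi)=\frac{1}{\lambda}\log\sum_{\check\phi\in\Phi}p_\phi^{1-\lambda}(\check\phi)$ (the $\alpha=1-\lambda$ case of the definition in the Preliminaries), so that $\sum_{\check\phi\in\Phi}p_\phi^{1-\lambda}(\check\phi)=e^{\lambda H_{1-\lambda}(\phi)}$ and hence $C_\lambda=e^{\lambda(H_{1-\lambda}(\phi)-H(\phi))}$. Then, exactly as in Lemma \ref{lemma0}, every $\check\phi\in\Phi_\epsilon$ satisfies $p_\phi(\check\phi)\ge e^{-H(\phi)-\epsilon}$, so
\begin{align*}
1\ge\sum_{\check\phi\in\Phi_\epsilon}p_\phi(\check\phi)\ge|\Phi_\epsilon|\,e^{-H(\phi)-\epsilon},
\end{align*}
giving $|\Phi_\epsilon|\le e^{H(\phi)+\epsilon}$. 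Plugging in $\epsilon=\frac1\lambda\log(C_\lambda/\delta)=H_{1-\lambda}(\phi)-H(\phi)+\frac1\lambda\log(1/\delta)$ cancels the $H(\phi)$ terms and yields $|\Phi_\epsilon|\le\exp\big(H_{1-\lambda}(\phi)+\frac1\lambda\log(1/\delta)\big)$, the second claim.

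The argument is essentially mechanical once the right moment-generating-function substitution is chosen, so there is no deep obstacle; the only subtlety is getting the algebra of $C_\lambda$, $H(\phi)$, and $H_{1-\lambda}(\phi)$ to line up cleanly — in particular verifying that the same choice of $\epsilon$ simultaneously produces the clean probability bound $\delta$ and makes the $H(\phi)$ terms cancel in the size bound. I would double-check the sign conventions in the R\'enyi entropy definition with $\alpha=1-\lambda<1$ (so $\frac{1}{1-\alpha}=\frac1\lambda>0$) to make sure $H_{1-\lambda}(\phi)$ is indeed $\frac1\lambda\log\sum p_\phi^{1-\lambda}$ and not its negative, since the whole cancellation hinges on that. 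Finiteness of $\Phi$ (the standing Presumption) guarantees all the sums converge, so no further integrability assumptions are needed.
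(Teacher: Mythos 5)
Your proposal is correct and follows essentially the same argument as the paper: a Markov/Chernoff bound on $p_\phi^{-\lambda}(\phi)$ gives $\mathbb{P}(\phi\notin\Phi_\epsilon)\le C_\lambda e^{-\lambda\epsilon}=\delta$, and the standard typical-set counting bound $|\Phi_\epsilon|\le e^{H(\phi)+\epsilon}$ combined with $C_\lambda=e^{\lambda(H_{1-\lambda}(\phi)-H(\phi))}$ yields the cardinality bound. Your explicit identification $\mathbb{E}[p_\phi(\phi)^{-\lambda}]=\sum_{\check\phi}p_\phi^{1-\lambda}(\check\phi)$ in fact cleans up a small typo in the paper's displayed computation, but the proof is otherwise identical.
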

\begin{proof}
    By the definition of $\Phi_\epsilon$ and applying Markov's inequality, we have 
    \begin{align*}
        \mathbb{P}(\phi\notin \Phi_\epsilon)= & \mathbb{P}(-\log p_\phi(\phi)\geq H(\phi)+\epsilon) \\
        =& \mathbb{P}(-\lambda\log p_\phi(\phi)\geq \lambda H(\phi)+\lambda\epsilon) \\
        =& \mathbb{P}\big( p^{-\lambda}_\phi(\phi)\geq \exp(\lambda H(\phi)+\lambda\epsilon)\big) \\
        \leq& \frac{\mathbb{E}_{\phi}[p^{-\lambda}_\phi(\phi)]}{\exp(\lambda H(\phi)+\lambda\epsilon)} \\
        = & \frac{\sum_{\check{\phi}\in\Phi }p^{-\lambda}_\phi(\check{\phi})}{\exp(\lambda H(\phi)+\lambda\epsilon)} = \frac{C_\lambda}{e^{\lambda\epsilon}} := \delta.
    \end{align*}
We now compute the size of $\Phi_\epsilon$: 
\begin{align*}
    -\log p_\phi(\phi)-H(\phi)\leq &\epsilon \\
    -H(\phi)-\epsilon \leq & \log p_\phi(\phi) \\
    \exp(-H(\phi)-\epsilon) \leq & p_\phi(\phi),
\end{align*}
which implies that 
\begin{align*}
    1\geq \mathbb{P}(\phi\in\Phi_\epsilon)\geq \sum_{\check{\phi}\in\Phi_\epsilon} p_\phi(\check{\phi}) \geq \sum_{\check{\phi}\in\Phi_\epsilon} \exp(-H(\phi)-\epsilon) = \vert \Phi_\epsilon\vert \exp(-H(\phi)-\epsilon).
\end{align*}
By taking $C_\lambda/e^{\lambda\epsilon}= \delta $ and solving $\epsilon$, we have
\begin{align*}
    \vert \Phi_\epsilon\vert \leq & \exp\Big(H(\phi)+\frac{1}{\lambda}\log(\frac{C_\lambda}{\delta})\Big) \\
    = & \exp \Big(H(\phi) + \frac{1}{\lambda}\log(\frac{1}{\delta}) +\frac{1}{\lambda} \log\Big(\frac{1}{e^{\lambda H(\phi)}}\sum_{\check{\phi}\in\Phi}p_\phi^{1-\lambda}(\check{\phi})\Big) \Big) \\
    = &  \exp \Big(H(\phi) + \frac{1}{\lambda}\log(\frac{1}{\delta}) - \frac{1}{\lambda}\lambda H(\phi)  +\frac{1}{\lambda} \log\Big(\sum_{\check{\phi}\in\Phi}p_\phi^{1-\lambda}(\check{\phi})\Big) \Big) \\
    = & \exp \Big(H_{1-\lambda}(\phi)+ \frac{1}{\lambda}\log(\frac{1}{\delta}) \Big).
\end{align*}
This completes the proof.
\end{proof}

\subsection{Completing the Proof of Theorem \ref{theorem2}}\label{Proof-Thm2}

\begin{restatetheorem}{\ref{theorem2}}[Restate]
    For any $\gamma>0$ and $\delta>0$, with probability at least $1-\delta$ for all $\phi=\{\phi^{(j)}\}_{j=1}^m \in\Phi$:
     \begin{align*}
         \overline{\mathrm{gen}}_{rec}  \leq \mathcal{K}_1\sqrt{\frac{ \sum_{j=1}^m I(X^{(j)};C)+ I(X^{(j)};U^{(j)})  + H_{1-\lambda}(\phi) +\mathcal{K}_{2,\lambda}}{nm}  } +\frac{\mathcal{K}_{3,\phi}}{\sqrt{nm}}
     \end{align*}
     where 
     \begin{align*}
         \mathcal{K}_1 = & 2\sqrt{2}R_{x}, \\
         \mathcal{K}_{2,\lambda} = &   c_{\phi} \sqrt{\frac{d\log(\sqrt{nm}/\gamma)}{2}}  + \frac{1}{\lambda}\log(\frac{1}{\delta}) +  \log(\frac{4}{\delta}) + H(C|X)+\sum_{j=1}^{m} H(U^{(j)}|X^{(j)}), \\
         \mathcal{K}_{3,\phi} = & \gamma R_x + \sqrt{2}R^s_x \frac{\sqrt{\gamma}}{(nm)^{1/4}}\sqrt{H_{1-\lambda}(\phi) + \frac{1}{\lambda}\log(\frac{1}{\delta}) +  \log(\frac{4}{\delta})}.
     \end{align*}
\end{restatetheorem}

\begin{proof}
    Leveraging Lemma \ref{general1}, if $\phi\in\Phi_\epsilon$, then for any $\gamma>0$ and $\delta>0$, with probability at least $1-\delta$,
    \begin{align}
        &\mathbb{E}_{X\sim \mathcal{X}} \big[\ell_{\mathrm{avg}}(X;\psi,\phi)\big] - \frac{1}{n}\sum^n_{i=1} \frac{1}{m}\sum_{j=1}^{m} \ell(\psi\circ\phi^{(j)}(x^{(j)}_i),x^{(j)}_i)    \nonumber \\
    \leq & \frac{\gamma R_x}{\sqrt{nm}} + R^s_x \frac{\sqrt{\gamma}}{(nm)^{1/4}} \sqrt{\frac{2\log(2 \vert \Phi\vert /\delta)}{nm}} + 2\sqrt{2} R_{x} \sqrt{\frac{H(C)+\sum_{j=1}^m H(U^{(j)})   +  c_{\phi} \sqrt{\frac{d\log(\sqrt{nm}/\gamma)}{2}}  +  \log(2\vert \Phi\vert/\delta)}{nm}}.\label{b24}
    \end{align}
From Lemma \ref{typicalset}, we know that for any $\delta>0$,
\begin{equation*}
    \mathbb{P}(\phi\notin \Phi_\epsilon) \leq \delta,
\end{equation*}
\begin{equation}\label{b26}
\vert \Phi_\epsilon\vert \leq \exp\Big(H_{1-\lambda}(\phi)+\frac{1}{\lambda}\log(\frac{1}{\delta})\Big).
\end{equation}
We thus obtain
\begin{align*}
    \mathbb{P}(\textrm{inequality (\ref{b24}) holds}) \geq &\mathbb{P}(\phi\in\Phi_\epsilon\bigcap\textrm{inequality (\ref{b24}) holds}) \\
    =& \mathbb{P}(\phi\in\Phi_\epsilon) \mathbb{P}(\textrm{inequality (\ref{b24}) holds}|\phi\in\Phi_\epsilon) \\\
    \geq &\mathbb{P}(\phi\in\Phi_\epsilon) (1-\delta) \\
    \geq& (1-\delta)(1-\delta) = 1-2\delta+\delta^2 \geq 1-2\delta.
\end{align*}
By taking $\delta=\delta'/2$, we have for any $\delta'>0$
\begin{equation*}
    \mathbb{P}(\textrm{inequality (\ref{b24}) holds}) \geq 1-\delta'.
\end{equation*}
In other words, for any $\gamma>0$ and $\delta>0$, with probability at least $1-\delta$, the following inequality holds for all $\phi\in\Phi$:
\begin{align}
    &\mathbb{E}_{X\sim \mathcal{X}} \big[\ell_{\mathrm{avg}}(X;\psi,\phi)\big] - \frac{1}{n}\sum^n_{i=1} \frac{1}{m}\sum_{j=1}^{m} \ell(\psi\circ\phi^{(j)}(x^{(j)}_i),x^{(j)}_i)    \nonumber \\
    \leq & \frac{\gamma R_x}{\sqrt{nm}} + R^s_x \frac{\sqrt{\gamma}}{(nm)^{1/4}} \sqrt{\frac{2\log(4 \vert \Phi\vert /\delta)}{nm}} + 2\sqrt{2} R_{x} \sqrt{\frac{H(C)+\sum_{j=1}^m H(U^{(j)})   +  c_{\phi} \sqrt{\frac{d\log(\sqrt{nm}/\gamma)}{2}}  +  \log(4\vert \Phi\vert/\delta)}{nm}}. \label{b25}
\end{align}
According to the chain rule, we get that $H(U^{(j)}) = I(X^{(j)}; U^{(j)}) + H(U^{(j)}|X^{(j)})$ and 
\begin{align}
    H(C) = & I(X;C) + H(C|X) =   I(\{X^{(j)}\}_{j=1}^m;C)  + H(C|X) \nonumber \\
    =& I(X^{(1)};C)+I(\{X^{(j)}\}_{j=2}^m;C|X^{(1)})+ H(C|X) \nonumber \\
    = &I(X^{(1)};C)+ I(\{X^{(j)}\}_{j=2}^m;C) + I(\{X^{(j)}\}_{j=2}^m;X^{(1)} | C) - I(\{X^{(j)}\}_{j=2}^m;X^{(1)}) + H(C|X)\nonumber \\
    = &I(X^{(1)};Z)+ I(\{X^{(j)}\}_{j=2}^m;C)  + H(C|X) \nonumber \\
    = & \ldots \nonumber \\
    = & \sum_{j=1}^m I(X^{(j)};C) + H(C|X). \label{b27}
\end{align}
where the fourth equation is due to the fact that given the common information $C$, $X^{(j)}$ and $X^{(i)}$ are independent of each other for arbitrary $i\neq j$. Substituting (\ref{b26}) and (\ref{b27}) into (\ref{b25}), this completes the proof.
\end{proof}

\section{Proof of Theorems \ref{theorem3}\&\ref{theorem4} [Generalization Bound for Multi-view Classification]} 
Similarly, we define the set of all possible multi-view data and its representation for each class by
\begin{equation*}
    \mathcal{Z}^{x}_y = \{\theta_y(v), \phi\circ\theta_y(v): v\in\mathcal{V} \}.
\end{equation*}
For any $\gamma>0$,  define the typical subset of $\mathcal{Z}^{x}_y$ by 
\begin{equation}\label{subset}
    \mathcal{Z}^{x}_{y,\gamma}  = \Big\{x=(x^{(1)},\ldots,x^{(m)}),z=(c,u^{(1)},\ldots,u^{(m)})\in \mathcal{Z}^{x}_y: -\log p_{z|y}(z) - H(Z_y)\leq c^y_{\phi} \sqrt{\frac{d\log(\sqrt{nm}/\gamma)}{2}} \Big\},
\end{equation}
where $p_{z|y}(z)=\mathbb{P}(Z=z|Y=y)= \mathbb{P}(Z_y=z)$.

\subsection{Properties of the Typical Subset}
\begin{lemma}\label{lemma1}
    For any $\gamma>0$, we have 
    \begin{equation*}
        \mathbb{P}(X_y,Z_y\notin  \mathcal{Z}^{x}_{y,\gamma})\leq \frac{\gamma}{\sqrt{nm}},
    \end{equation*}
and 
   \begin{equation*}
    \vert  \mathcal{Z}^{x}_{y,\gamma}\vert \leq \exp\Big(H(Z_y) +  c^y_{\phi} \sqrt{\frac{d\log(\sqrt{nm}/\gamma)}{2}} \Big).
   \end{equation*}
\end{lemma}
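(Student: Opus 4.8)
The plan is to transcribe the proof of Lemma~\ref{lemma0} almost verbatim, with every probability and expectation now conditioned on the event $Y=y$. Fix $y\in\mathcal{Y}$ and set $h_y(v)=\phi\circ\theta_y(v)$, $f(v)=-\log p_{z|y}(h_y(v))$. Partitioning $\mathcal{V}$ according to the fibers $h_y^{-1}(z)=\{v:h_y(v)=z\}$ and summing $p_v$ over each fiber to recover $p_{z|y}(z)=\mathbb{P}(Z_y=z)$, the same telescoping computation as in Lemma~\ref{lemma0} gives $\mathbb{E}_V[f(V)]=-\sum_z p_{z|y}(z)\log p_{z|y}(z)=H(Z_y)$; note this step is even cleaner here, since there is no outer expectation over $Y$ to carry along.

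Next I would apply McDiarmid's bounded-differences inequality to $f$ regarded as a function of the $md$ i.i.d.\ nuisance coordinates $\{V^{(j)}_i\}$: by the definition of the conditional sensitivity $c^y_\phi$, perturbing any single coordinate changes the relevant log-density by at most $c^y_\phi$, so all the bounded-differences constants are $\le c^y_\phi$ and hence $\mathbb{P}\big(-\log p_{z|y}(Z_y)-H(Z_y)>\epsilon\big)\le\exp\!\big(-2\epsilon^2/(d(c^y_\phi)^2)\big)$. Setting the right-hand side equal to $\delta=\gamma/\sqrt{nm}$ and solving gives $\epsilon=c^y_\phi\sqrt{d\log(\sqrt{nm}/\gamma)/2}$, which is precisely the threshold defining $\mathcal{Z}^{x}_{y,\gamma}$ in~(\ref{subset}); this yields the first claim $\mathbb{P}(X_y,Z_y\notin\mathcal{Z}^{x}_{y,\gamma})\le\gamma/\sqrt{nm}$. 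For the cardinality bound I would run the standard typical-set counting argument: for every $z\in\mathcal{Z}^{x}_{y,\gamma}$ the defining inequality rearranges to $p_{z|y}(z)\ge\exp(-H(Z_y)-\epsilon)$, and since $\sum_{z\in\mathcal{Z}^{x}_{y,\gamma}}p_{z|y}(z)\le 1$ we get $|\mathcal{Z}^{x}_{y,\gamma}|\exp(-H(Z_y)-\epsilon)\le 1$, i.e.\ $|\mathcal{Z}^{x}_{y,\gamma}|\le\exp\!\big(H(Z_y)+c^y_\phi\sqrt{d\log(\sqrt{nm}/\gamma)/2}\big)$.

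The only delicate point is the McDiarmid step: one must verify that the coordinatewise oscillation of $f(v)=-\log p_{z|y}(h_y(v))$ is genuinely controlled by $c^y_\phi$ (the definition of $c^y_\phi$ being phrased for a single view $\phi^{(j)}\circ\theta_y$), and that the $md$ arguments over which McDiarmid is applied are truly independent --- which holds because the $V^{(j)}_i$ are i.i.d.\ nuisance variables and $y$ is held fixed throughout. Everything else is a routine conditional restatement of the unconditional argument.
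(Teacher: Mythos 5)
Your proposal is correct and is essentially identical to the paper's own proof: the same conditional expectation computation over the fibers $h_y^{-1}(z)$ giving $H(Z_y)$, the same McDiarmid step with sensitivity $c^y_\phi$ calibrated to $\delta=\gamma/\sqrt{nm}$, and the same typical-set counting argument for the cardinality bound. The one wrinkle you flag (McDiarmid over $md$ coordinates versus the factor $d(c^y_\phi)^2$ appearing in the exponent) is present verbatim in the paper's proof as well, which likewise writes $d$ rather than $md$ despite $V$ being $m\times d$-dimensional, so your writeup faithfully reproduces, rather than introduces, that looseness.
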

\begin{proof}
    Consider the function $f_y(v) = -\log p_{z|y}(h_y(v))$, where $h_y(v) =\phi\circ\theta_y(v)$. Let $p_v(v)=\mathbb{P}(V = v)$ and $h_y^{-1}(z)=\{v\in\mathcal{V}: h_y(v) = z\}$, we have 
    \begin{align*}
        \mathbb{E}_V [f_y(V)] = & -\sum_{v\in\mathcal{V}} p_v(v)\log p_{z|y}(h_y(v)) \\
         =& -\sum_{z\in \mathcal{Z}^{x}_y}\sum_{v\in h_y^{-1}(z)} p_v(v)\log p_{z|y}(h_y(v))  \\
         = & -\sum_{z\in \mathcal{Z}^{x}_y} \Big(\sum_{v\in h_y^{-1}(z)} p_v(v) \Big)\log p_{z|y}(z)  \\
         = & - \sum_{z\in{\mathcal{Z}_{y}}} p_{z|y}(z)  \log p_{z|y}(z) \\
         =& H(Z_y).
    \end{align*}
By applying McDiarmid's inequality on $f(V) = -\log p_{z|y}(Z)$, we have 
\begin{equation*}
    \mathbb{P}(-\log p_{z|y}(Z)-H(Z_y)\geq \epsilon) \leq \exp\Big(-\frac{2\epsilon^2}{d (c^y_{\phi})^2}\Big).
\end{equation*}
By taking $\delta = \exp\Big(-\frac{2\epsilon^2}{d (c^y_{\phi})^2}\Big)$, we have 
\begin{equation*}
    \epsilon = c^y_{\phi} \sqrt{\frac{d\log(1/\delta)}{2}}.
\end{equation*}
Combining with (\ref{subset}), we select $\delta = \gamma/\sqrt{nm}$ and 
\begin{align*}
    \mathbb{P} (X_y,Z_y\notin  \mathcal{Z}^{x}_{y,\gamma} ) \leq \delta = \frac{\gamma}{\sqrt{nm}}.
\end{align*}
Similar to the proof of Lemma \ref{lemma0}, we can prove that 
\begin{align*}
    \vert  \mathcal{Z}^{x}_{y,\gamma}\vert \exp(-H(Z_y) - \epsilon) \leq 1,
\end{align*}
which implies 
\begin{equation*}
    \vert  \mathcal{Z}^{x}_{y,\gamma}\vert \leq \exp\bigg( H(Z_y) + c^y_{\phi} \sqrt{\frac{d\log(\sqrt{nm}/\gamma)}{2}}\bigg).
\end{equation*}

\end{proof}

\subsection{Decompose the Generalization Error}
Recall the definition of the generalization error of the learning hypothesis $\phi$ for multi-view classification:
\begin{equation}\label{gen2}
    \overline{\mathrm{gen}}_{cls}= \mathbb{E}_{(X,Y)\sim\mathcal{X}\times\mathcal{Y}} [\hat{\ell}_{\mathrm{avg}}((X,Y) ; \hat{\psi},\phi)] - \frac{1}{n}\sum_{i=1}^{n}\frac{1}{m}\sum_{j=1}^{m}\hat{\ell}(\hat{\psi}\circ\phi^{(j)}(x_i^{(j)}), y_i).  
\end{equation}
We further define $T^y = \vert  \mathcal{Z}^{x}_{y,\gamma}\vert$, the elements of the typical subset as $ \mathcal{Z}^{x}_{y,\gamma}=\{(a^{x,y}_1,a_1^{c,y},a^{u,y}_1),\ldots,(a^{x,y}_T,a_T^{c,y},a^{u,y}_T)\}$, and 
\begin{align*}
    \mathcal{U}^y = & \{i\in[n],j\in[m]: x^{(j)}_i,\phi^{(j)}(x^{(j)}_i) \notin \mathcal{Z}^{x}_{y,\gamma}, y_i = y\}, \\
    \mathcal{U}_k^y =& \{i\in[n],j\in[m]: x^{(j)}_i=a^{x,y}_k ,\phi^{(j)}(x^{(j)}_i) = (a^{c,y}_k, a^{u,y}_k), y_i = y\},
\end{align*}
which are dependent on the training set $s$. By using the above notations, we decompose the generalization error as follows.
\begin{lemma}\label{gap2}
    The generalization error satisfies:
    \begin{equation*}
        \mathbb{E}_{(X,Y)\sim\mathcal{X}\times\mathcal{Y}} [\hat{\ell}_{\mathrm{avg}}((X,Y) ; \hat{\psi},\phi)] - \frac{1}{n}\sum_{i=1}^{n}\frac{1}{m}\sum_{j=1}^{m}\hat{\ell}(\hat{\psi}\circ\phi^{(j)}(x_i^{(j)}), y_i)  = \widetilde{\mathrm{\uppercase\expandafter{\romannumeral1}}} + \widetilde{\mathrm{\uppercase\expandafter{\romannumeral2}}}+ \widetilde{\mathrm{\uppercase\expandafter{\romannumeral3}}},
    \end{equation*}
where 
\begin{align*}
    \widetilde{\mathrm{\uppercase\expandafter{\romannumeral1}}} = &  \sum_{y\in\mathcal{Y}}\mathbb{P}\big(Y=y, (X^{(j)},\phi^{(j)}(X^{(j)}))\notin  \mathcal{Z}^{x}_{y,\gamma} \big)  \Big( \mathbb{E}_{(X,Y)\sim\mathcal{X}\times\mathcal{Y},X^{(j)}\sim X}[\hat{\ell}(\hat{\psi}\circ\phi^{(j)}(X^{(j)}), Y)|Y=y,  (X^{(j)},\phi^{(j)}(X^{(j)}))\notin  \mathcal{Z}^{x}_{y,\gamma}]  \\
    & -\frac{1}{\vert \mathcal{U}^y\vert} \sum_{i,j\in \mathcal{U}^y}\hat{\ell}(\hat{\psi}\circ\phi^{(j)}(x^{(j)}_i),y) \Big) \\
    \widetilde{\mathrm{\uppercase\expandafter{\romannumeral2}}} = &   \sum_{y\in\mathcal{Y}}  \frac{1}{\vert \mathcal{U}^y\vert} \Big(\mathbb{P}\big(Y=y, (X^{(j)},\phi^{(j)}(X^{(j)}))\notin  \mathcal{Z}^{x}_{y,\gamma} \big)  -  \frac{\vert \mathcal{U}^y\vert}{nm}\Big)\sum_{i,j\in\mathcal{U}^y}\hat{\ell}(\hat{\psi}\circ\phi^{(j)}(x^{(j)}_i),y)\\
    \widetilde{\mathrm{\uppercase\expandafter{\romannumeral3}}} = &\sum_{y\in\mathcal{Y}} \sum_{k=1}^{T^y} \Big( \mathbb{P}\big(Y=y, X^{(j)}_i=a^{x,y}_k ,\phi^{(j)}(X^{(j)}_i) = (a^{c,y}_k, a^{u,y}_k)\big) - \frac{\vert \mathcal{U}_k^y\vert}{nm} \Big)\hat{\ell}(\hat{\psi}(a^{c,y}_k, a^{u,y}_k),y).
\end{align*}
\end{lemma}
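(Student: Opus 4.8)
The plan is to replay the decomposition of Lemma~\ref{lemmaC.1} one class at a time. First I would reduce the population risk to a single randomly chosen view: since $\hat\ell_{\mathrm{avg}}((X,Y);\hat\psi,\phi)=\frac1m\sum_{j=1}^m\hat\ell(\hat\psi\circ\phi^{(j)}(X^{(j)}),Y)$, we have $\mathbb{E}_{(X,Y)\sim\mathcal{X}\times\mathcal{Y}}[\hat\ell_{\mathrm{avg}}((X,Y);\hat\psi,\phi)]=\mathbb{E}_{(X,Y),\,X^{(j)}\sim X}[\hat\ell(\hat\psi\circ\phi^{(j)}(X^{(j)}),Y)]$ with $j$ uniform on $[m]$, exactly as in the reconstruction proof. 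Then I would apply the law of total expectation over $Y$, writing this as $\sum_{y\in\mathcal{Y}}\mathbb{P}(Y=y)\,\mathbb{E}[\hat\ell(\hat\psi\circ\phi^{(j)}(X^{(j)}),Y)\mid Y=y]$, and inside each class split the conditional expectation according to whether $(X^{(j)},\phi^{(j)}(X^{(j)}))$ falls outside the typical subset $\mathcal{Z}^{x}_{y,\gamma}$ or equals one of the $T^y$ atoms $(a_k^{x,y},a_k^{c,y},a_k^{u,y})$; on each atom the loss collapses to the constant $\hat\ell(\hat\psi(a_k^{c,y},a_k^{u,y}),y)$. Absorbing $\mathbb{P}(Y=y)$ into the conditional probabilities turns them into the joint probabilities appearing in the statement, producing the class-conditional analogue of Eq.~(\ref{de1}).

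Second, I would decompose the empirical risk $\frac1n\sum_{i=1}^n\frac1m\sum_{j=1}^m\hat\ell(\hat\psi\circ\phi^{(j)}(x_i^{(j)}),y_i)$ by partitioning $[n]\times[m]$ as $\bigcup_{y\in\mathcal{Y}}\big(\mathcal{U}^y\cup\bigcup_{k=1}^{T^y}\mathcal{U}_k^y\big)$: for each realized pair with label $y_i=y$, either $(x_i^{(j)},\phi^{(j)}(x_i^{(j)}))\notin\mathcal{Z}^{x}_{y,\gamma}$, placing it in $\mathcal{U}^y$, or it equals exactly one atom, placing it in the corresponding $\mathcal{U}_k^y$; this is a genuine disjoint partition. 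On $\mathcal{U}_k^y$ the summand is constant, so $\frac1{nm}\sum_{(i,j)\in\mathcal{U}_k^y}\hat\ell(\hat\psi\circ\phi^{(j)}(x_i^{(j)}),y)=\frac{|\mathcal{U}_k^y|}{nm}\hat\ell(\hat\psi(a_k^{c,y},a_k^{u,y}),y)$, giving the analogue of Eq.~(\ref{de2}).

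Third, I would subtract the empirical decomposition from the population one. For each $y$ the ``outside-$\mathcal{Z}^{x}_{y,\gamma}$'' contributions carry the probability $\mathbb{P}(Y=y,(X^{(j)},\phi^{(j)}(X^{(j)}))\notin\mathcal{Z}^{x}_{y,\gamma})$ on the population side but the empirical frequency $|\mathcal{U}^y|/(nm)$ on the empirical side, so they do not cancel directly; following the bookkeeping in the proof of Lemma~\ref{lemmaC.1}, I would add and subtract $\mathbb{P}(Y=y,(X^{(j)},\phi^{(j)}(X^{(j)}))\notin\mathcal{Z}^{x}_{y,\gamma})\cdot\frac1{|\mathcal{U}^y|}\sum_{(i,j)\in\mathcal{U}^y}\hat\ell(\hat\psi\circ\phi^{(j)}(x_i^{(j)}),y)$. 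Regrouping then yields $\widetilde{\mathrm{\uppercase\expandafter{\romannumeral1}}}$ (the outside-probability times the gap between the conditional population loss and the empirical average over $\mathcal{U}^y$), $\widetilde{\mathrm{\uppercase\expandafter{\romannumeral2}}}$ (the probability-minus-frequency discrepancy on $\mathcal{U}^y$ times the empirical sum over $\mathcal{U}^y$), and $\widetilde{\mathrm{\uppercase\expandafter{\romannumeral3}}}$ (the sum over atoms of the probability-minus-frequency discrepancy times the constant loss), as stated.

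The main obstacle is bookkeeping rather than anything conceptual: one must keep the conditioning on $Y=y$, the uniform-over-views reduction, and the index sets $\mathcal{U}^y,\mathcal{U}_k^y$ mutually consistent throughout, and verify carefully that $[n]\times[m]=\bigcup_{y}\big(\mathcal{U}^y\cup\bigcup_k\mathcal{U}_k^y\big)$ is disjoint so that passing from $\frac1n\sum_i\frac1m\sum_j$ to the grouped sums introduces no double counting. With the reconstruction-side argument of Lemma~\ref{lemmaC.1} already in place, the class-conditional version then follows the same template essentially verbatim.
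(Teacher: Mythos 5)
Your proposal matches the paper's own proof essentially step for step: condition the population risk on $Y=y$, split each class-conditional expectation by membership in $\mathcal{Z}^{x}_{y,\gamma}$ versus its atoms, partition the empirical sum via $\mathcal{U}^y$ and $\mathcal{U}^y_k$, and add-and-subtract the cross term $\mathbb{P}\big(Y=y,(X^{(j)},\phi^{(j)}(X^{(j)}))\notin\mathcal{Z}^{x}_{y,\gamma}\big)\cdot\frac{1}{\vert\mathcal{U}^y\vert}\sum_{i,j\in\mathcal{U}^y}\hat{\ell}(\hat{\psi}\circ\phi^{(j)}(x^{(j)}_i),y)$ to regroup into $\widetilde{\mathrm{\uppercase\expandafter{\romannumeral1}}}$, $\widetilde{\mathrm{\uppercase\expandafter{\romannumeral2}}}$, $\widetilde{\mathrm{\uppercase\expandafter{\romannumeral3}}}$. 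This is correct and is the same class-conditional replay of Lemma \ref{lemmaC.1} that the paper carries out.
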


\begin{proof}
     We decompose the population risk as:
    \begin{align}
        &\mathbb{E}_{(X,Y)\sim\mathcal{X}\times\mathcal{Y}} [\hat{\ell}_{\mathrm{avg}}((X,Y) ; \hat{\psi},\phi)]  \nonumber\\
        = &  \sum_{y\in\mathcal{Y}} \mathbb{P}(Y=y)  \mathbb{E}_{(X,Y)\sim\mathcal{X}\times\mathcal{Y}} [\hat{\ell}_{\mathrm{avg}}((X,Y) ; \hat{\psi},\phi)|Y=y] \nonumber\\
        = & \sum_{y\in\mathcal{Y}} \mathbb{P}(Y=y, (X,Z)\notin  \mathcal{Z}^{x}_{y,\gamma})  \mathbb{E}_{(X,Y)\sim\mathcal{X}\times\mathcal{Y}}\big[\hat{\ell}_{\mathrm{avg}}((X,Y) ; \hat{\psi},\phi)|Y=y,(X,Z)\notin  \mathcal{Z}^{x}_{y,\gamma}\big]  \nonumber\\
         & + \sum_{y\in\mathcal{Y}} \mathbb{P}(Y=y, (X,Z)\in  \mathcal{Z}^{x}_{y,\gamma})  \mathbb{E}_{(X,Y)\sim\mathcal{X}\times\mathcal{Y}}\big[\hat{\ell}_{\mathrm{avg}}((X,Y) ; \hat{\psi},\phi)|Y=y,(X,Z)\in  \mathcal{Z}^{x}_{y,\gamma}\big]  \nonumber\\
        = & \sum_{y\in\mathcal{Y}}\mathbb{P}\big(Y=y, (X^{(j)},\phi^{(j)}(X^{(j)}))\notin  \mathcal{Z}^{x}_{y,\gamma} \big) \mathbb{E}_{(X,Y)\sim\mathcal{X}\times\mathcal{Y},X^{(j)}\sim X}[\hat{\ell}(\hat{\psi}\circ\phi^{(j)}(X^{(j)}), Y)|Y=y, (X^{(j)},\phi^{(j)}(X^{(j)}))\notin  \mathcal{Z}^{x}_{y,\gamma}]   \nonumber\\
         & +  \sum_{y\in\mathcal{Y}} \sum_{k=1}^{T^y} \mathbb{P}\big(Y=y, X^{(j)}=a^{x,y}_k ,\phi^{(j)}(X^{(j)}) = (a^{c,y}_k, a^{u,y}_k)\big)\hat{\ell}(\hat{\psi}(a^{c,y}_k, a^{u,y}_k),y). \label{decom1}
    \end{align}
Similarly, the empirical risk can be decomposed as 
\begin{align}
    \frac{1}{n}\sum_{i=1}^{n}\frac{1}{m}\sum_{j=1}^{m}\hat{\ell}(\hat{\psi}\circ\phi^{(j)}(x_i^{(j)}), y_i)  = &  \frac{1}{nm}\sum_{y\in\mathcal{Y}}\Big(\sum_{i,j\in\mathcal{U}^y}\hat{\ell}(\hat{\psi}\circ\phi^{(j)}(x^{(j)}_i),y) + \sum_{k=1}^{T^y}\sum_{i,j\in\mathcal{U}_k^y}\hat{\ell}(\hat{\psi}\circ\phi^{(j)}(x^{(j)}_i),y) \Big) \nonumber\\
    =  &\frac{1}{nm}\sum_{y\in\mathcal{Y}} \sum_{i,j\in\mathcal{U}^y}\hat{\ell}(\hat{\psi}\circ\phi^{(j)}(x^{(j)}_i),y) + \frac{1}{nm} \sum_{y\in\mathcal{Y}}\sum_{k=1}^{T^y}\sum_{i,j\in\mathcal{U}_k^y} \hat{\ell}(\hat{\psi}(a^{c,y}_k, a^{u,y}_k),y)\nonumber\\
    =  &\frac{1}{nm}\sum_{y\in\mathcal{Y}} \sum_{i,j\in\mathcal{U}^y}\hat{\ell}(\hat{\psi}\circ\phi^{(j)}(x^{(j)}_i),y) +  \sum_{y\in\mathcal{Y}}\sum_{k=1}^{T^y}\frac{\vert \mathcal{U}_k^y \vert}{nm}\hat{\ell}(\hat{\psi}(a^{c,y}_k, a^{u,y}_k),y).  \label{decom2}
\end{align}
Putting (\ref{decom1}) and (\ref{decom2}) back into (\ref{gen2}), we can get 
\begin{align*}
    & \mathbb{E}_{(X,Y)\sim\mathcal{X}\times\mathcal{Y}} [\hat{\ell}_{\mathrm{avg}}((X,Y) ; \hat{\psi},\phi)] - \frac{1}{n}\sum_{i=1}^{n}\frac{1}{m}\sum_{j=1}^{m}\hat{\ell}(\hat{\psi}\circ\phi^{(j)}(x_i^{(j)}), y_i)  \\
    = &   \sum_{y\in\mathcal{Y}}\mathbb{P}\big(Y=y, (X^{(j)},\phi^{(j)}(X^{(j)}))\notin  \mathcal{Z}^{x}_{y,\gamma} \big) \mathbb{E}_{(X,Y)\sim\mathcal{X}\times\mathcal{Y},X^{(j)}\sim X}[\hat{\ell}(\hat{\psi}\circ\phi^{(j)}(X^{(j)}), Y)|Y=y, (X^{(j)},\phi^{(j)}(X^{(j)}))\notin  \mathcal{Z}^{x}_{y,\gamma}]   \\
    & - \sum_{y\in\mathcal{Y}} \mathbb{P}\big(Y=y, (X^{(j)},\phi^{(j)}(X^{(j)}))\notin  \mathcal{Z}^{x}_{y,\gamma} \big) \frac{1}{\vert \mathcal{U}^y\vert} \sum_{i,j\in \mathcal{U}^y}\hat{\ell}(\hat{\psi}\circ\phi^{(j)}(x^{(j)}_i),y) \\
    & + \sum_{y\in\mathcal{Y}} \mathbb{P}\big(Y=y, (X^{(j)},\phi^{(j)}(X^{(j)}))\notin  \mathcal{Z}^{x}_{y,\gamma} \big) \frac{1}{\vert \mathcal{U}^y\vert} \sum_{i,j\in \mathcal{U}^y}\hat{\ell}(\hat{\psi}\circ\phi^{(j)}(x^{(j)}_i),y)  - \frac{1}{nm}\sum_{y\in\mathcal{Y}} \sum_{i,j\in\mathcal{U}^y}\hat{\ell}(\hat{\psi}\circ\phi^{(j)}(x^{(j)}_i),y) \\
    & + \sum_{y\in\mathcal{Y}} \sum_{k=1}^{T^y} \mathbb{P}\big(Y=y, X^{(j)}=a^{x,y}_k ,\phi^{(j)}(X^{(j)}) = (a^{c,y}_k, a^{u,y}_k)\big)\hat{\ell}(\hat{\psi}(a^{c,y}_k, a^{u,y}_k),y)- \sum_{y\in\mathcal{Y}}\sum_{k=1}^{T^y}\frac{\vert \mathcal{U}_k^y \vert}{nm}\hat{\ell}(\hat{\psi}(a^{c,y}_k, a^{u,y}_k),y)\\
    = & \sum_{y\in\mathcal{Y}}\mathbb{P}\big(Y=y, (X^{(j)},\phi^{(j)}(X^{(j)}))\notin  \mathcal{Z}^{x}_{y,\gamma} \big)  \Big( \mathbb{E}_{(X,Y)\sim\mathcal{X}\times\mathcal{Y},X^{(j)}\sim X}[\hat{\ell}(\hat{\psi}\circ\phi^{(j)}(X^{(j)}), Y)|Y=y, (X^{(j)},\phi^{(j)}(X^{(j)}))\notin  \mathcal{Z}^{x}_{y,\gamma}]  \\
    & -\frac{1}{\vert \mathcal{U}^y\vert} \sum_{i,j\in \mathcal{U}^y}\hat{\ell}(\hat{\psi}\circ\phi^{(j)}(x^{(j)}_i),y) \Big) \\
    &+  \sum_{y\in\mathcal{Y}}  \frac{1}{\vert \mathcal{U}^y\vert} \Big(\mathbb{P}\big(Y=y, (X^{(j)},\phi^{(j)}(X^{(j)}))\notin  \mathcal{Z}^{x}_{y,\gamma} \big)  -  \frac{\vert \mathcal{U}^y\vert}{nm}\Big)\sum_{i,j\in\mathcal{U}^y}\hat{\ell}(\hat{\psi}\circ\phi^{(j)}(x^{(j)}_i),y)\\
    &+ \sum_{y\in\mathcal{Y}} \sum_{k=1}^{T^y} \Big( \mathbb{P}\big(Y=y, X^{(j)}_i=a^{x,y}_k ,\phi^{(j)}(X^{(j)}_i) = (a^{c,y}_k, a^{u,y}_k)\big) - \frac{\vert \mathcal{U}_k^y\vert}{nm} \Big)\hat{\ell}(\hat{\psi}(a^{c,y}_k, a^{u,y}_k),y),
\end{align*}
which completes the proof.
\end{proof}

\subsection{ Bounding Each Term in Decompositions}
\begin{lemma}\label{AAA}
    For any $\gamma>0$, the following inequality holds:
    \begin{equation*}
        \widetilde{\mathrm{\uppercase\expandafter{\romannumeral1}}} \leq \frac{\gamma R_{x,y}}{\sqrt{nm}}.
    \end{equation*}
Additionally, if $\phi\in\Phi$, for any $\delta>0$, with probability at least $1-\delta$, the following inequalities holds:
\begin{align*}
    \widetilde{\mathrm{\uppercase\expandafter{\romannumeral2}}}\leq &\sum_{y\in\mathcal{Y}} \sqrt{\mathbb{P}(Y=y, (X^{(j)},\phi^{(j)}(X^{(j)}))\notin  \mathcal{Z}^{x}_{y,\gamma})} \frac{\sum_{i,j\in\mathcal{U}^y}\hat{\ell}(\hat{\psi}\circ\phi^{(j)}(x^{(j)}_i),y)}{\vert \mathcal{U}^y\vert} \sqrt{\frac{2\log(2\vert\mathcal{Y}\vert \vert \Phi\vert/\delta)}{nm}}, \\
       \widetilde{\mathrm{\uppercase\expandafter{\romannumeral3}}}\leq & 2R_{x,y}  \sum_{y\in\mathcal{Y}}\sqrt{\mathbb{P}(Y=y)}\sqrt{\frac{2\Big(H(Z_y) +  c^y_{\phi} \sqrt{\frac{d\log(\sqrt{nm}/\gamma)}{2}} \Big) +  2\log(2\vert\mathcal{Y}\vert\vert \Phi\vert/\delta)}{nm}}. 
   \end{align*}
\end{lemma}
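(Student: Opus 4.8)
The plan is to replay the structure of Lemmas~\ref{A1} and~\ref{BC1} from the multi-view reconstruction case, but now carrying out the estimates conditionally on each label $y\in\mathcal{Y}$ and summing the per-label contributions at the end. For the first term, I would discard the non-negative empirical average appearing inside the parenthesis of $\widetilde{\mathrm{\uppercase\expandafter{\romannumeral1}}}$ and bound the remaining conditional expectation of $\hat{\ell}$ by $R_{x,y}$, giving $\widetilde{\mathrm{\uppercase\expandafter{\romannumeral1}}}\le R_{x,y}\sum_{y\in\mathcal{Y}}\mathbb{P}\big(Y=y,(X^{(j)},\phi^{(j)}(X^{(j)}))\notin\mathcal{Z}^{x}_{y,\gamma}\big)$. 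Factoring this joint probability as $\mathbb{P}(Y=y)\,\mathbb{P}\big((X_y,Z_y)\notin\mathcal{Z}^{x}_{y,\gamma}\big)$ and invoking the first inequality of Lemma~\ref{lemma1}, each conditional factor is at most $\gamma/\sqrt{nm}$; since $\sum_{y}\mathbb{P}(Y=y)=1$ this yields $\widetilde{\mathrm{\uppercase\expandafter{\romannumeral1}}}\le\gamma R_{x,y}/\sqrt{nm}$.

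For $\widetilde{\mathrm{\uppercase\expandafter{\romannumeral2}}}$ and $\widetilde{\mathrm{\uppercase\expandafter{\romannumeral3}}}$ I would, for each fixed $y$, view the vector $(|\mathcal{U}^y_1|,\dots,|\mathcal{U}^y_{T^y}|,|\mathcal{U}^y|)$ as multinomial over $nm$ trials with cell probabilities $p_k=\mathbb{P}\big(Y=y,X^{(j)}=a^{x,y}_k,\phi^{(j)}(X^{(j)})=(a^{c,y}_k,a^{u,y}_k)\big)$ for $k\in[T^y]$ and $p_{T^y+1}=\mathbb{P}\big(Y=y,(X^{(j)},\phi^{(j)}(X^{(j)}))\notin\mathcal{Z}^{x}_{y,\gamma}\big)$, and apply Lemma~\ref{kawaguchi} exactly as in the proof of Lemma~\ref{BC1}: choosing the weight vector supported on $[T^y]\setminus\{k\}$ controls the per-class remainder term, while choosing it supported on the single ``outside'' cell controls $p_{T^y+1}-|\mathcal{U}^y|/(nm)$, producing the $\sqrt{\mathbb{P}(Y=y,(X^{(j)},\phi^{(j)}(X^{(j)}))\notin\mathcal{Z}^{x}_{y,\gamma})}$ prefactor in the bound for $\widetilde{\mathrm{\uppercase\expandafter{\romannumeral2}}}$. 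The one genuine difference from the reconstruction case is that $\sum_{t=1}^{T^y}p_t\le\mathbb{P}(Y=y)$ rather than $\le 1$, which is precisely what leaves behind the $\sqrt{\mathbb{P}(Y=y)}$ factor in the stated bound for $\widetilde{\mathrm{\uppercase\expandafter{\romannumeral3}}}$; the degenerate cases $T^y=1$ and $p_tb_t=0$ are handled exactly as in Lemma~\ref{BC1}. Substituting the cardinality estimate $T^y=|\mathcal{Z}^{x}_{y,\gamma}|\le\exp\!\big(H(Z_y)+c^y_\phi\sqrt{d\log(\sqrt{nm}/\gamma)/2}\big)$ from Lemma~\ref{lemma1} then turns the $\log T^y$ term into the $H(Z_y)+c^y_\phi\sqrt{\cdots}$ contribution inside the square root.

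The rest is bookkeeping with union bounds: union over $k\in[T^y]$ (contributing $\log T^y$), then over $y\in\mathcal{Y}$ (a factor $|\mathcal{Y}|$), then over $\phi\in\Phi$ (a factor $|\Phi|$), and finally a further factor $2$ to combine the events governing $\widetilde{\mathrm{\uppercase\expandafter{\romannumeral2}}}$ and $\widetilde{\mathrm{\uppercase\expandafter{\romannumeral3}}}$, so that the confidence parameter entering the logarithm is $2|\mathcal{Y}||\Phi|/\delta$; because the per-$\phi$ estimates hold for each fixed $\phi$ deterministically, the union over $\Phi$ upgrades them to a statement uniform in $\phi\in\Phi$. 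I expect the main obstacle to be keeping the nested union bounds and their constants straight, and in particular making sure the $\mathbb{P}(Y=y)$-weighting is propagated correctly through the concentration step so that summing over $y$ produces a $\sum_{y}\sqrt{\mathbb{P}(Y=y)}$ factor (which is itself $O(\sqrt{|\mathcal{Y}|})$ by Cauchy--Schwarz) rather than a spurious linear $|\mathcal{Y}|$ dependence outside the logarithm.
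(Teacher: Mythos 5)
Your proposal matches the paper's proof essentially step for step: term $\widetilde{\mathrm{\uppercase\expandafter{\romannumeral1}}}$ is handled by dropping the empirical average, bounding the conditional loss by $R_{x,y}$, and invoking Lemma \ref{lemma1} per class; terms $\widetilde{\mathrm{\uppercase\expandafter{\romannumeral2}}}$ and $\widetilde{\mathrm{\uppercase\expandafter{\romannumeral3}}}$ are obtained via the multinomial concentration of Lemma \ref{kawaguchi} exactly as in Lemma \ref{BC1}, with $\sqrt{\sum_t p^y_t - p^y_k}\le\sqrt{\mathbb{P}(Y=y)}$ yielding the per-class prefactor, the cardinality estimate of Lemma \ref{lemma1} converting $\log T^y$ into the entropy term, and the nested union bounds over $k$, $y$, $\phi$, and the two events producing the $\log(2\vert\mathcal{Y}\vert\vert\Phi\vert/\delta)$ factor. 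This is the same approach as the paper, correctly executed.
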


\begin{proof}
    From Lemma \ref{lemma1}, we have 
    \begin{equation*}
        \mathbb{P}(X_y,Z_y\notin  \mathcal{Z}^{x}_{y,\gamma})\leq \frac{\gamma}{\sqrt{nm}}.
    \end{equation*}
By applying the above inequality, we get that
\begin{align*}
    \widetilde{\mathrm{\uppercase\expandafter{\romannumeral1}}} = & \sum_{y\in\mathcal{Y}}\mathbb{P}\big(Y=y, (X^{(j)},\phi^{(j)}(X^{(j)}))\notin  \mathcal{Z}^{x}_{y,\gamma} \big)  \Big( \mathbb{E}_{(X,Y)\sim\mathcal{X}\times\mathcal{Y},X^{(j)}\sim X}[\hat{\ell}(\hat{\psi}\circ\phi^{(j)}(X^{(j)}), Y)|Y=y, (X^{(j)},\phi^{(j)}(X^{(j)}))\notin  \mathcal{Z}^{x}_{y,\gamma}]  \\
    & -\frac{1}{\vert \mathcal{U}^y\vert} \sum_{i,j\in \mathcal{U}^y}\hat{\ell}(\hat{\psi}\circ\phi^{(j)}(x^{(j)}_i),y) \Big) \\
    \leq & \sum_{y\in\mathcal{Y}}\mathbb{P}\big(Y=y, X^{(j)},\phi^{(j)}(X^{(j)})\notin  \mathcal{Z}^{x}_{y,\gamma} \big) \mathbb{E}_{(X,Y)\sim\mathcal{X}\times\mathcal{Y},X^{(j)}\sim X}[\hat{\ell}(\hat{\psi}\circ\phi^{(j)}(X^{(j)}), Y)|Y=y, (X^{(j)},\phi^{(j)}(X^{(j)}))\notin  \mathcal{Z}^{x}_{y,\gamma}] \\
    = & \sum_{y\in\mathcal{Y}}\mathbb{P}\big(Y=y, (X,Z)\notin  \mathcal{Z}^{x}_{y,\gamma} \big) \mathbb{E}_{(X,Y)\sim\mathcal{X}\times\mathcal{Y}}[ \hat{\ell}_{\mathrm{avg}}((X,Y);\hat{\psi},\phi)|Y=y, (X,Z)\notin  \mathcal{Z}^{x}_{y,\gamma}] \\
    \leq & \sum_{y\in\mathcal{Y}}\mathbb{P}(Y=y) \frac{\gamma}{\sqrt{nm}} R_{x,y} = \frac{\gamma}{\sqrt{nm}} R_{x,y} .
\end{align*}
Define $p^y_k = \mathbb{P}\big(Y=y, X^{(j)}=a^{x,y}_k ,\phi^{(j)}(X^{(j)}) = (a^{c,y}_k, a^{u,y}_k)\big)$ for $k\in[T^y]$, $p_{T^y+1}=\mathbb{P}(Y=y, (X^{(j)},\phi^{(j)}(X^{(j)}))\notin  \mathcal{Z}^{x}_{y,\gamma})$, and $b^y_k= \hat{\ell}(\hat{\psi}(a^{c,y}_k, a^{u,y}_k),y)$ for $k\in[T^y+1]$, we then have
\begin{equation*}
    \widetilde{\mathrm{\uppercase\expandafter{\romannumeral3}}}_k = \sum_{t=1}^{T^y}\Big(p^y_t-\frac{\vert \mathcal{U}_t^y\vert}{nm}\Big)b^y_t-\Big(p^y_{k}-\frac{\vert \mathcal{U}_k^y\vert}{nm}\Big)b^y_k.
\end{equation*}
Applying Lemma \ref{kawaguchi} with 
\begin{align*}
    k = T^y+1,\quad X=(\vert \mathcal{U}_1^y\vert,\ldots,\vert \mathcal{U}_{T^y}^y\vert,\vert \mathcal{U}^y\vert), \quad p=(p_1,\ldots,p_{T^y+1}),\\
    m = nm, \quad \bar{a}_k=0, \quad \bar{a}_{T^y+1}=0, \quad\textrm{and} \quad \bar{a}_i = b_i \quad \textrm{for any}\quad i\neq k.
\end{align*}
For any $\epsilon>0$ and $k\in[T^y]$, we have
\begin{equation}\label{eq16} 
    \mathbb{P}(\widetilde{\mathrm{\uppercase\expandafter{\romannumeral3}}}_k  \geq \epsilon ) \leq \exp\Big(-\frac{nm\epsilon^2}{2(\sum_{t=1}^{T^y}p^y_t (b^y_t)^2-p^y_k(b^y_k)^2)}\Big).
\end{equation}
Similarly, we obtain
\begin{equation}\label{eq17} 
    \mathbb{P}\Big(p^y_{T+1}-\frac{\vert \mathcal{U}^y\vert}{nm}\geq\epsilon\Big) \leq \exp\Big(-\frac{nm\epsilon^2}{2p^y_{T^y+1}}\Big).
\end{equation}
Take $\delta$ as the right-hand side of (\ref{eq16}) and (\ref{eq17}) respectively, we have 
\begin{equation}\label{eq8}
    \mathbb{P}\bigg(\widetilde{\mathrm{\uppercase\expandafter{\romannumeral3}}}_k \geq \sqrt{\sum_{t=1}^{T^y} p^y_t(b^y_t)^2-p^y_k(b^y_k)^2 }\sqrt{\frac{2\log(1/\delta)}{nm}}\bigg)\leq \delta
\end{equation} 
for any $k\in[T^y]$, and 
\begin{equation}\label{eq9}
    \mathbb{P}\Big(p^y_{T^y+1}-\frac{\vert \mathcal{U}^y\vert}{nm}\geq\sqrt{\frac{2p^y_{T^y+1}\log(1/\delta)}{nm}}\Big)\leq \delta.
\end{equation}
 Take union bounds (\ref{eq8}), (\ref{eq9}) over all $y\in\mathcal{Y}$ and $\phi\in\Phi$, we have for any $\delta>0$, with probability at least $1-\delta$, the following inequalities hold for all $y\in\mathcal{Y}$ and $\phi\in\Phi$:
 \begin{equation}\label{re2}
    \widetilde{\mathrm{\uppercase\expandafter{\romannumeral3}}}_k \leq \sqrt{\sum_{t=1}^{T^y} p^y_t (b^y_t)^2-p^y_k(b^y_k)^2 }\sqrt{\frac{2\log(\vert\mathcal{Y}\vert \vert\Phi\vert/\delta)}{nm}},
\end{equation} 
for any $k\in[T]$, and
\begin{equation}\label{re1}
    \mathbb{P}\big(Y=y, (X^{(j)},\phi^{(j)}(X^{(j)}))\notin  \mathcal{Z}^{x}_{y,\gamma} \big)  -  \frac{\vert \mathcal{U}^y\vert}{nm}\leq \sqrt{\frac{2\mathbb{P}(Y=y, (X^{(j)},\phi^{(j)}(X^{(j)}))\notin  \mathcal{Z}^{x}_{y,\gamma})\log(\vert\mathcal{Y}\vert \vert\Phi\vert/\delta)}{nm}}.
\end{equation}

Substitute (\ref{re1}) into $\widetilde{\mathrm{\uppercase\expandafter{\romannumeral2}}}$ in Lemma \ref{gap2}, we have for any $\delta>0$, with probability at least $1-\delta$,
\begin{align}\label{192}
    \widetilde{\mathrm{\uppercase\expandafter{\romannumeral2}}} = & \sum_{y\in\mathcal{Y}}  \frac{1}{\vert \mathcal{U}^y\vert} \Big(\mathbb{P}\big(Y=y,(X^{(j)},\phi^{(j)}(X^{(j)}))\notin  \mathcal{Z}^{x}_{y,\gamma} \big)  -  \frac{\vert \mathcal{U}^y\vert}{nm}\Big)\sum_{i,j\in\mathcal{U}^y}\hat{\ell}(\hat{\psi}\circ\phi^{(j)}(x^{(j)}_i),y) \nonumber\\
   \leq & \sum_{y\in\mathcal{Y}} \sqrt{\mathbb{P}(Y=y, (X^{(j)},\phi^{(j)}(X^{(j)}))\notin  \mathcal{Z}^{x}_{y,\gamma})} \frac{\sum_{i,j\in\mathcal{U}^y}\hat{\ell}(\hat{\psi}\circ\phi^{(j)}(x^{(j)}_i),y)}{\vert \mathcal{U}^y\vert} \sqrt{\frac{2\log(\vert\mathcal{Y}\vert \vert \Phi\vert/\delta)}{nm}}.
\end{align}
Similarly, by applying the inequality (\ref{re2}), we then have for any $\delta>0$ and $k\in[T^y]$, with probability at least $1-\delta$,
\begin{align*}
    \widetilde{\mathrm{\uppercase\expandafter{\romannumeral3}}}_k \leq & \sqrt{\sum_{t=1}^{T^y} p^y_t (b^y_t)^2-p^y_k(b^y_k)^2 }\sqrt{\frac{2\log(\vert\mathcal{Y}\vert \vert\Phi\vert/\delta)}{nm}} \\
    \leq & R_{x,y} \sqrt{\sum_{t=1}^{T^y} p^y_t-p^y_k }\sqrt{\frac{2\log(\vert\mathcal{Y}\vert\vert \Phi\vert/\delta)}{nm}} \\
    = & R_{x,y} \sqrt{\mathbb{P}\Big(Y=y \bigcap (X,Z)\in  \mathcal{Z}^{x}_{y,\gamma} \bigcap (X^{(j)},\phi^{(j)}(X^{(j)}))\neq (a^{x,y}_k,a^{c,y}_k,a^{u,y}_k) \Big)}\sqrt{\frac{2\log(\vert\mathcal{Y}\vert\vert \Phi\vert/\delta)}{nm}}\\
    \leq & R_{x,y} \sqrt{\mathbb{P}(Y=y)}\sqrt{\frac{2\log(\vert\mathcal{Y}\vert\vert \Phi\vert/\delta)}{nm}}.
\end{align*}
Taking the union bound over all $k\in[T^y]$, for any $\delta>0$, with probability at least $1-\delta$, 
\begin{equation}\label{eq18}
    \widetilde{\mathrm{\uppercase\expandafter{\romannumeral3}}}_k \leq  R_{x,y} \sqrt{\mathbb{P}(Y=y)}\sqrt{\frac{2\log(T^y\vert\mathcal{Y}\vert\vert \Phi\vert/\delta)}{nm}}.
\end{equation}
Putting (\ref{eq18}) back into $\widetilde{\mathrm{\uppercase\expandafter{\romannumeral3}}}$ in Lemma \ref{gap2}, we have that for any $\delta>0$, with probability at least $1-\delta$,
\begin{align}
   \widetilde{\mathrm{\uppercase\expandafter{\romannumeral3}}} = &\sum_{y\in\mathcal{Y}} \sum_{k=1}^{T^y} \Big( \mathbb{P}\big(Y=y, X^{(j)}=a^{x,y}_k ,\phi^{(j)}(X^{(j)}) = (a^{c,y}_k, a^{u,y}_k)\big) - \frac{\vert \mathcal{U}_k^y\vert}{nm} \Big)\hat{\ell}(\hat{\psi}(a^{c,y}_k, a^{u,y}_k),y) \nonumber\\
    =&\sum_{y\in\mathcal{Y}}  \frac{1}{T^y-1} \sum_{k=1}^{T^y} \widetilde{\mathrm{\uppercase\expandafter{\romannumeral3}}}_k  \nonumber\\
    \leq & \sum_{y\in\mathcal{Y}}  \frac{1}{T^y-1} \sum_{k=1}^{T^y} R_{x,y} \sqrt{\mathbb{P}(Y=y)}\sqrt{\frac{2\log(T^y\vert\mathcal{Y}\vert\vert \Phi\vert/\delta)}{nm}} \nonumber\\
    \leq & \sum_{y\in\mathcal{Y}}  \frac{T^y}{T^y-1} R_{x,y} \sqrt{\mathbb{P}(Y=y)}\sqrt{\frac{2\log(T^y\vert\mathcal{Y}\vert\vert \Phi\vert/\delta)}{nm}}  \nonumber\\
    \leq & 2 \sum_{y\in\mathcal{Y}} R_{x,y} \sqrt{\mathbb{P}(Y=y)}\sqrt{\frac{2\log(T^y\vert\mathcal{Y}\vert\vert \Phi\vert/\delta)}{nm}}. \label{b40}
\end{align}
From Lemma \ref{lemma1}, we get that
\begin{equation*}
    T^y =  \vert  \mathcal{Z}^{x}_{y,\gamma}\vert \leq \exp\Big(H(Z_y) +  c^y_{\phi} \sqrt{\frac{d\log(\sqrt{nm}/\gamma)}{2}} \Big).
   \end{equation*}
Combining the above with (\ref{b40}), we can get 
\begin{align}\label{eq191}
 \widetilde{\mathrm{\uppercase\expandafter{\romannumeral3}}}\leq 2 R_{x,y}  \sum_{y\in\mathcal{Y}}\sqrt{\mathbb{P}(Y=y)}\sqrt{\frac{2\Big(H(Z_y) +  c^y_{\phi} \sqrt{\frac{d\log(\sqrt{nm}/\gamma)}{2}} \Big) +  2\log(\vert\mathcal{Y}\vert\vert \Phi\vert/\delta)}{nm}}.  
\end{align}
Finally, taking the union bound over (\ref{192}) and (\ref{eq191}), we have that for any $\delta>0$, with probability at least $1-\delta$, the following inequalities hold:
\begin{align*}
 \widetilde{\mathrm{\uppercase\expandafter{\romannumeral2}}}\leq &\sum_{y\in\mathcal{Y}} \sqrt{\mathbb{P}(Y=y, X^{(j)},\phi^{(j)}(X^{(j)})\notin  \mathcal{Z}^{x}_{y,\gamma})} \frac{\sum_{i,j\in\mathcal{U}^y}\hat{\ell}(\hat{\psi}\circ\phi^{(j)}(x^{(j)}_i),y)}{\vert \mathcal{U}^y\vert} \sqrt{\frac{2\log(2\vert\mathcal{Y}\vert \vert \Phi\vert/\delta)}{nm}}, \\
    \widetilde{\mathrm{\uppercase\expandafter{\romannumeral3}}}\leq & 2R_{x,y}  \sum_{y\in\mathcal{Y}}\sqrt{\mathbb{P}(Y=y)}\sqrt{\frac{2\Big(H(Z_y) +  c^y_{\phi} \sqrt{\frac{d\log(\sqrt{nm}/\gamma)}{2}} \Big) +  2\log(2\vert\mathcal{Y}\vert\vert \Phi\vert/\delta)}{nm}}. 
\end{align*}
This completes the proof.
\end{proof}

In the following lemma, we present the general upper bound for multi-view classification tasks.
\begin{lemma}\label{general2}
    For  any $\gamma>0$, $\delta>0$, and all $\phi\in\Phi$, with probability at least $1-\delta$, the following inequality holds:
    \begin{align*}
        &\mathbb{E}_{(X,Y)\sim\mathcal{X}\times\mathcal{Y}}\big[\hat{\ell}_{\mathrm{avg}}((X,Y) ; \hat{\psi},\phi)\big] - \frac{1}{n}\sum_{i=1}^{n}\frac{1}{m}\sum_{j=1}^{m} \hat{\ell}(\hat{\psi}(\phi^{(j)}(x_i^{(j)})), y_i) \\
       \leq  & \frac{\gamma R_{x,y}}{\sqrt{nm}} +  R^s_{x,y} \frac{\sqrt{\gamma \vert \mathcal{Y}\vert}}{(nm)^{1/4}} \sqrt{\frac{2\log(2\vert\mathcal{Y}\vert \vert \Phi\vert/\delta)}{nm}}\\
       & + 2\sqrt{2} R_{x,y} \sqrt{\vert\mathcal{Y}\vert} \sqrt{\frac{\sum_{j=1}^m I(X^{(j)};C,U^{(j)}|Y)  + H(Z|Y,X^{(1)}) +  c_{\phi} \sqrt{\frac{d\log(\sqrt{nm}/\gamma)}{2}} +  \log(2\vert\mathcal{Y}\vert\vert \Phi\vert/\delta)}{nm}}.
    \end{align*}
\end{lemma}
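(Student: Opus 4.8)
The plan is to combine the generalization-error decomposition of Lemma~\ref{gap2} with the per-term estimates of Lemma~\ref{AAA}, and then post-process the class-indexed sums into the information quantities in the statement. By Lemma~\ref{gap2} we have $\overline{\mathrm{gen}}_{cls}=\widetilde{\mathrm{\uppercase\expandafter{\romannumeral1}}}+\widetilde{\mathrm{\uppercase\expandafter{\romannumeral2}}}+\widetilde{\mathrm{\uppercase\expandafter{\romannumeral3}}}$, and Lemma~\ref{AAA} already supplies, on a single event of probability at least $1-\delta$ (the union over the $\widetilde{\mathrm{\uppercase\expandafter{\romannumeral2}}}$- and $\widetilde{\mathrm{\uppercase\expandafter{\romannumeral3}}}$-events being absorbed into the $2\vert\mathcal{Y}\vert\vert\Phi\vert/\delta$ factor), the deterministic bound $\widetilde{\mathrm{\uppercase\expandafter{\romannumeral1}}}\le\gamma R_{x,y}/\sqrt{nm}$ together with the two sums over $y\in\mathcal{Y}$. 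So only two kinds of simplification remain.

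First, for $\widetilde{\mathrm{\uppercase\expandafter{\romannumeral2}}}$ I would bound the mass outside the typical set class-wise using Lemma~\ref{lemma1}, namely $\mathbb{P}(Y=y,(X^{(j)},\phi^{(j)}(X^{(j)}))\notin\mathcal{Z}^{x}_{y,\gamma})=\mathbb{P}(Y=y)\,\mathbb{P}(X_y,Z_y\notin\mathcal{Z}^{x}_{y,\gamma})\le\mathbb{P}(Y=y)\gamma/\sqrt{nm}$, bound each empirical-loss average by the samplewise maximum $R^{s}_{x,y}$, and collapse the sum over $y$ with Cauchy--Schwarz, $\sum_{y}\sqrt{\mathbb{P}(Y=y)}\le\sqrt{\vert\mathcal{Y}\vert}$; this reproduces the middle term $R^{s}_{x,y}\sqrt{\gamma\vert\mathcal{Y}\vert}(nm)^{-1/4}\sqrt{2\log(2\vert\mathcal{Y}\vert\vert\Phi\vert/\delta)/(nm)}$. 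For $\widetilde{\mathrm{\uppercase\expandafter{\romannumeral3}}}$ I would pull the sum over $y$ inside a square root by Cauchy--Schwarz, $\sum_{y}\sqrt{\mathbb{P}(Y=y)\,b_y}\le\sqrt{\vert\mathcal{Y}\vert\sum_{y}\mathbb{P}(Y=y)b_y}$, and then use the identities $\sum_{y}\mathbb{P}(Y=y)H(Z_y)=H(Z\mid Y)$ and $\sum_{y}\mathbb{P}(Y=y)c^{y}_{\phi}=\mathbb{E}_{Y}[c^{Y}_{\phi}]=c_{\phi}$ (and $\sum_y \mathbb{P}(Y=y)=1$ for the $\log$ term), which yields $\widetilde{\mathrm{\uppercase\expandafter{\romannumeral3}}}\le 2\sqrt{2}R_{x,y}\sqrt{\vert\mathcal{Y}\vert}\sqrt{(H(Z\mid Y)+c_{\phi}\sqrt{d\log(\sqrt{nm}/\gamma)/2}+\log(2\vert\mathcal{Y}\vert\vert\Phi\vert/\delta))/(nm)}$.

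The remaining and only non-mechanical step is to replace $H(Z\mid Y)$ by $\sum_{j=1}^{m}I(X^{(j)};C,U^{(j)}\mid Y)+H(Z\mid Y,X^{(1)})$, which I would do by mimicking the chain-rule computation behind~(\ref{b27}): first $H(Z\mid Y)=I(X^{(1)};Z\mid Y)+H(Z\mid Y,X^{(1)})$, and then, since $Z=(C,U^{(1)},\ldots,U^{(m)})$, $I(X^{(1)};Z\mid Y)=I(X^{(1)};C,U^{(1)}\mid Y)+I(X^{(1)};U^{(2)},\ldots,U^{(m)}\mid C,U^{(1)},Y)$, where the second term vanishes because, conditionally on $(C,Y)$, the views $X^{(1)},\ldots,X^{(m)}$ are mutually independent (the defining property of the common component $C$), so $X^{(1)}\perp(U^{(2)},\ldots,U^{(m)})\mid C,Y$, and additionally conditioning on $U^{(1)}=\phi^{(1)}_u(X^{(1)})$, a deterministic function of $X^{(1)}$, cannot create new dependence. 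Since $I(X^{(j)};C,U^{(j)}\mid Y)\ge0$ for every $j$, we get $I(X^{(1)};C,U^{(1)}\mid Y)\le\sum_{j=1}^{m}I(X^{(j)};C,U^{(j)}\mid Y)$, hence the claimed substitution; plugging it into the bound on $\widetilde{\mathrm{\uppercase\expandafter{\romannumeral3}}}$ and summing the three estimates finishes the proof. The main obstacle is precisely this conditional-independence argument (and checking that conditioning on $U^{(1)}$ is harmless); everything else is Cauchy--Schwarz together with the bookkeeping of the $\log(2\vert\mathcal{Y}\vert\vert\Phi\vert/\delta)$ terms already collected in Lemma~\ref{AAA}.
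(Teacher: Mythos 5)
Your proposal is correct and follows essentially the same route as the paper's proof: combine Lemma \ref{gap2} with the per-term estimates of Lemma \ref{AAA}, control $\widetilde{\mathrm{\uppercase\expandafter{\romannumeral2}}}$ by factorizing the class-conditional out-of-typical-set probability and applying Cauchy--Schwarz over $y$, collapse $\widetilde{\mathrm{\uppercase\expandafter{\romannumeral3}}}$ the same way to obtain $H(Z|Y)$ and $c_\phi$, and finish with the chain-rule identity $H(Z|Y)=I(X^{(1)};C,U^{(1)}|Y)+H(Z|Y,X^{(1)})$ using the conditional independence of views given the common component. Your extra remark that conditioning on $U^{(1)}$, a deterministic function of $X^{(1)}$, preserves this conditional independence is a valid (and slightly more careful) justification of the step the paper states implicitly.
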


\begin{proof}
    From Lemma \ref{lemma1}, we know that 
    \begin{equation*}
        \mathbb{P}(X_y,Z_y\notin  \mathcal{Z}^{x}_{y,\gamma}) \leq \frac{\gamma}{\sqrt{nm}}.
    \end{equation*}
 Applying Lemma \ref{AAA} and using Jensen's inequality, we have that for any $\gamma>0$ and $\delta>0$, with probability at least $1-\delta$,
    \begin{align*}
        \widetilde{\mathrm{\uppercase\expandafter{\romannumeral2}}}\leq &  \sum_{y\in\mathcal{Y}} \sqrt{\mathbb{P}(Y=y, (X^{(j)},\phi^{(j)}(X^{(j)}))\notin  \mathcal{Z}^{x}_{y,\gamma})} \frac{\sum_{i,j\in\mathcal{U}^y}\hat{\ell}(\hat{\psi}\circ\phi^{(j)}(x^{(j)}_i),y)}{\vert \mathcal{U}^y\vert} \sqrt{\frac{2\log(2\vert\mathcal{Y}\vert \vert \Phi\vert/\delta)}{nm}}\\
        \leq & \sum_{y\in\mathcal{Y}} \sqrt{\mathbb{P}(Y=y)} \sqrt{\mathbb{P}(X^{(j)},\phi^{(j)}(X^{(j)})\notin  \mathcal{Z}^{x}_{y,\gamma})}  \frac{\sum_{i,j\in\mathcal{U}^y}\hat{\ell}(\hat{\psi}\circ\phi^{(j)}(x^{(j)}_i),y)}{\vert \mathcal{U}^y\vert} \sqrt{\frac{2\log(2\vert\mathcal{Y}\vert \vert \Phi\vert/\delta)}{nm}} \\
        \leq & \frac{\sqrt{\gamma}}{(nm)^{1/4}} \frac{\sum_{i,j\in\mathcal{U}^y}\hat{\ell}(\hat{\psi}\circ\phi^{(j)}(x^{(j)}_i),y)}{\vert \mathcal{U}^y\vert} \sqrt{\vert \mathcal{Y}\vert}\sqrt{ \sum_{y\in\mathcal{Y}} \mathbb{P}(Y=y)} \sqrt{\frac{2\log(2\vert\mathcal{Y}\vert \vert \Phi\vert/\delta)}{nm}} \\
        \leq & R^s_{x,y} \frac{\sqrt{\gamma \vert \mathcal{Y}\vert}}{(nm)^{1/4}} \sqrt{\frac{2\log(2\vert\mathcal{Y}\vert \vert \Phi\vert/\delta)}{nm}},
    \end{align*}
    and
    \begin{align*}
        \widetilde{\mathrm{\uppercase\expandafter{\romannumeral3}}} \leq   & 2R_{x,y}  \sum_{y\in\mathcal{Y}}\sqrt{\mathbb{P}(Y=y)}\sqrt{\frac{2\Big(H(Z_y) +  c^y_{\phi} \sqrt{\frac{d\log(\sqrt{nm}/\gamma)}{2}} \Big) +  2\log(2\vert\mathcal{Y}\vert\vert \Phi\vert/\delta)}{nm}} \\
         \leq & 2 R_{x,y} \sqrt{\vert\mathcal{Y}\vert} \sqrt{\sum_{y\in\mathcal{Y}}\mathbb{P}(Y=y)\frac{2\Big(H(Z_y) +  c^y_{\phi} \sqrt{\frac{d\log(\sqrt{nm}/\gamma)}{2}} \Big) +  2\log(2\vert\mathcal{Y}\vert\vert \Phi\vert/\delta)}{nm}} \\
         = & 2\sqrt{2} R_{x,y} \sqrt{\vert\mathcal{Y}\vert} \sqrt{\frac{H(Z|Y) +  c_{\phi} \sqrt{\frac{d\log(\sqrt{nm}/\gamma)}{2}} +  \log(2\vert\mathcal{Y}\vert\vert \Phi\vert/\delta)}{nm}}.
    \end{align*}
    By applying Lemma \ref{gap2}, we can get 
    \begin{align}
            &\mathbb{E}_{(X,Y)\sim\mathcal{X}\times\mathcal{Y}} [\hat{\ell}_{\mathrm{avg}}((X,Y) ; \hat{\psi},\phi)] - \frac{1}{n}\sum_{i=1}^{n}\frac{1}{m}\sum_{j=1}^{m}\hat{\ell}(\hat{\psi}\circ\phi^{(j)}(x_i^{(j)}), y_i) = \widetilde{\mathrm{\uppercase\expandafter{\romannumeral1}}} + \widetilde{\mathrm{\uppercase\expandafter{\romannumeral2}}}+ \widetilde{\mathrm{\uppercase\expandafter{\romannumeral3}}} \nonumber\\
           \leq  & \frac{\gamma R_{x,y}}{\sqrt{nm}} +  R^s_{x,y} \frac{\sqrt{\gamma \vert \mathcal{Y}\vert}}{(nm)^{1/4}} \sqrt{\frac{2\log(2\vert\mathcal{Y}\vert \vert \Phi\vert/\delta)}{nm}} + 2\sqrt{2} R_{x,y} \sqrt{\vert\mathcal{Y}\vert} \sqrt{\frac{H(Z|Y) +  c_{\phi} \sqrt{\frac{d\log(\sqrt{nm}/\gamma)}{2}} +  \log(2\vert\mathcal{Y}\vert\vert \Phi\vert/\delta)}{nm}}. \label{b42}
        \end{align}
Further using the chain rule, we have 
\begin{align*}
     H(Z|Y) =  &  I(X^{(1)};Z|Y) + H(Z|Y,X^{(1)}) =  I(X^{(1)};C,U^{(1)},\ldots,U^{(m)}|Y) + H(Z|Y,X^{(1)}) \\
     = & I(X^{(1)};C,U^{(1)}|Y) + I(X^{(1)};\{U^{(j)}\}_{j=2}^m|Y,C,U^{(1)}) + H(Z|Y,X^{(1)}) \\
     = & I(X^{(1)};C,U^{(1)}|Y) + H(Z|Y,X^{(1)}) \\
     \leq & \sum_{j=1}^m I(X^{(j)};C,U^{(j)}|Y)  + H(Z|Y,X^{(1)}), 
\end{align*}
Putting the above estimation back into (\ref{b42}), this completes the proof.

    \end{proof}

\subsection{Completing the Proof of Theorem \ref{theorem3}} \label{Proof-Thm3}
\begin{restatetheorem}{\ref{theorem3}}[Restate]
    For any $\gamma>0$ and $\delta>0$, with probability at least $1-\delta$, we have 
    \begin{align*}  
        \overline{\mathrm{gen}}_{cls} \leq  \widetilde{\mathcal{K}}_1 \sqrt{\frac{\sum_{j=1}^m I(X^{(j)};C,U^{(j)}|Y) +  \widetilde{\mathcal{K}}_2}{nm}}+\frac{ \widetilde{\mathcal{K}}_3}{\sqrt{nm}},
    \end{align*}
    where 
    \begin{align*}
    &\widetilde{\mathcal{K}}_1 =  2\sqrt{2} R_{x,y} \sqrt{\vert\mathcal{Y}\vert},\\
    &\widetilde{\mathcal{K}}_2 =  c_{\phi} \sqrt{\frac{d\log(\sqrt{nm}/\gamma)}{2}} +  \log(2\vert\mathcal{Y}\vert/\delta) + H(Z|Y,X^{(1)}), \\
    &\widetilde{\mathcal{K}}_3 = \gamma R_{x,y} + R^s_{x,y} \frac{\sqrt{\gamma \vert \mathcal{Y}\vert}}{(nm)^{1/4}}\sqrt{2\log(2\vert\mathcal{Y}\vert /\delta)} .
\end{align*}
\end{restatetheorem}
\begin{proof}
Since the function $\phi$ is deterministic and independent of the training data $S$, we have $\vert\Phi\vert = 1$. Combining this with Lemma \ref{general2}, this completes the proof.
\end{proof}

\subsection{Completing the Proof of Theorem \ref{theorem4}}\label{Proof-Thm4}

\begin{restatetheorem}{\ref{theorem4}}[Restate]
    For any $\gamma>0$ and $\delta>0$, with probability at least $1-\delta$, we have for all $\phi=\{\phi^{(j)}\}_{j=1}^m \in\Phi$:
    \begin{align*}  
        &\overline{\mathrm{gen}}_{cls} \leq   \widetilde{\mathcal{K}}_1 \sqrt{\frac{\sum_{j=1}^m I(X^{(j)};C,U^{(j)}|Y) + H_{1-\lambda}(\phi) +  \widetilde{\mathcal{K}}_{2,\lambda}}{nm}}+\frac{ \widetilde{\mathcal{K}}_{3,\phi}}{\sqrt{nm}}, 
    \end{align*}
    where
    \begin{align*}
    &\widetilde{\mathcal{K}}_1 =  2\sqrt{2} R_{x,y} \sqrt{\vert\mathcal{Y}\vert},\\
    &\widetilde{\mathcal{K}}_{2,\lambda} =  c_{\phi} \sqrt{\frac{d\log(\sqrt{nm}/\gamma)}{2}} +  \frac{1}{\lambda}\log(\frac{1}{\delta}) +  \log\Big(\frac{4\vert\mathcal{Y}\vert}{\delta}\Big) + H(Z|Y,X^{(1)}), \\
    &\widetilde{\mathcal{K}}_{3,\phi} = \gamma R_{x,y} +\sqrt{2} R^{\tilde{s}}_{x,y} \frac{\sqrt{\gamma \vert \mathcal{Y}\vert}}{(nm)^{1/4}}\sqrt{H_{1-\lambda}(\phi^{\tilde{s}}) + \frac{1}{\lambda}\log(\frac{1}{\delta}) + \log\Big(\frac{4\vert\mathcal{Y}\vert}{\delta}\Big)}.
\end{align*}
\end{restatetheorem}
\begin{proof}
    From Lemma \ref{general2}, if $\phi\in\Phi_\epsilon$, for any $\gamma>0$ and $\delta>0$, with probability at least $1-\delta$, the following inequality holds:
    \begin{align}\label{eq23}
        \overline{\mathrm{gen}}_{cls} \leq  & \frac{\gamma R_{x,y}}{\sqrt{nm}} +  R^s_{x,y} \frac{\sqrt{\gamma \vert \mathcal{Y}\vert}}{(nm)^{1/4}} \sqrt{\frac{2\log(2\vert\mathcal{Y}\vert \vert \Phi_\epsilon\vert/\delta)}{nm}} \nonumber\\
        & + 2\sqrt{2} R_{x,y} \sqrt{\vert\mathcal{Y}\vert} \sqrt{\frac{\sum_{j=1}^m I(X^{(j)};C,U^{(j)}|Y)  + H(Z|Y,X^{(1)}) +  c_{\phi} \sqrt{\frac{d\log(\sqrt{nm}/\gamma)}{2}} +  \log(2\vert\mathcal{Y}\vert\vert \Phi_\epsilon\vert/\delta)}{nm}}. 
    \end{align}
Analyzing analogously to the proof of Theorem \ref{theorem2}, we apply Lemma \ref{typicalset} and obtain
\begin{equation*}
    \mathbb{P}(\phi\notin \Phi_\epsilon) \leq \delta,
\end{equation*}
\begin{equation*}
    \mathbb{P}(\textrm{inequality (\ref{eq23}) holds}) \geq \mathbb{P}(\phi\in\Phi_\epsilon\bigcap\textrm{inequality (\ref{eq23}) holds})  \geq 1-2\delta.
\end{equation*}
We select $\delta=\delta'/2$, and then for any $\gamma>0$ and $\delta>0$, with probability at least $1-\delta$, the following inequality holds:
\begin{align}\label{eq24}
    \overline{\mathrm{gen}}_{cls} \leq  & \frac{\gamma R_{x,y}}{\sqrt{nm}} +  R^{\tilde{s}}_{x,y} \frac{\sqrt{\gamma \vert \mathcal{Y}\vert}}{(nm)^{1/4}} \sqrt{\frac{2\log(4\vert\mathcal{Y}\vert \vert \Phi_\epsilon\vert/\delta)}{nm}} \nonumber\\
        & + 2\sqrt{2} R_{x,y} \sqrt{\vert\mathcal{Y}\vert} \sqrt{\frac{\sum_{j=1}^m I(X^{(j)};C,U^{(j)}|Y)  + H(Z|Y,X^{(1)}) +  c_{\phi} \sqrt{\frac{d\log(\sqrt{nm}/\gamma)}{2}} +  \log(4\vert\mathcal{Y}\vert\vert \Phi_\epsilon\vert/\delta)}{nm}}. 
\end{align}
Using Lemma \ref{typicalset}, we have
\begin{align}
    \log( 4\vert \Phi_\epsilon\vert \vert\mathcal{Y}\vert/\delta)=  &  \log(\vert \Phi_\epsilon\vert) +  \log(4\vert\mathcal{Y}\vert/\delta)\leq  H_{1-\lambda}(\phi)+\frac{1}{\lambda}\log(\frac{1}{\delta}) +  \log(\frac{4\vert\mathcal{Y}\vert}{\delta}). \label{eq25}
\end{align}
Plugging (\ref{eq25}) back into (\ref{eq24}), this completes the proof.
\end{proof}

\section{Proof of Theorems \ref{theorem5}\&\ref{theorem6} [Data-dependent Generalization Bounds]}

\subsection{LOO Settings}\label{Proof-Thm5}

\begin{restatetheorem}{\ref{theorem5}}[Restate]
    If $\lambda \rightarrow 0$, for any $\delta>0$ and all $\phi=\{\phi^{(j)}\}_{j=1}^m \in\Phi$, with probability at least $1-\delta$, we have
    \begin{align*}
        \Delta_{\mathrm{loo}}  \leq \mathcal{K}^{u}_{1} \sqrt{\sum_{j=1}^{m} I(X^{(j)};C,U^{(j)}|Y)  +   I(\phi; U) + \mathcal{K}^{u}_{2,\lambda}},
    \end{align*}
    where 
\begin{align*}
    \mathcal{K}^{u}_{1} = &\sqrt{2}\sigma_u, \\
    \mathcal{K}^{u}_{2,\lambda} = &c_{\phi} \sqrt{\frac{d\log(\sqrt{nm}/\gamma)}{2}} + \frac{1}{\lambda}\log(\frac{1}{\delta}) + \log(\frac{4\vert \mathcal{Y}\vert}{\delta}) + H(Z|X^{(1)},Y)+H(\phi|U) ,
\end{align*}
by assuming that $ \Delta_{\mathrm{loo}}$ is $\sigma_u$-subgaussian w.r.t $\sigma_u \in[0,R^{\tilde{s}}_{x,y}]$.
\end{restatetheorem}
\begin{proof}
Assume that $(X,Z)=(x,z)$ for some $(x,z)=\{(x_i,z_i)=(x_i,\phi(x_i))\}_{i=1}^{n+1}\in\mathcal{Z}^{x}_{y,\gamma}$ and $\phi\in\Phi_\epsilon$, we then have 
 \begin{align*}
    \Delta_{\mathrm{loo}}= &\hat{\ell}_{\mathrm{avg}}\big( (x_U,y_U) ; \hat{\psi},\phi \big)-\frac{1}{n}\sum_{i\neq U}\hat{\ell}_{\mathrm{avg}}\big( (x_i,y_i) ; \hat{\psi},\phi \big) \\
    =& \hat{\ell}_{\mathrm{avg}}\big( (x_U,y_U) ; \hat{\psi},\phi \big) -\frac{1}{n}\sum_{i=1}^{n+1}\hat{\ell}_{\mathrm{avg}}\big( (x_i,y_i) ; \hat{\psi},\phi \big)+ \frac{1}{n} \hat{\ell}_{\mathrm{avg}}\big( (x_U,y_U) ; \hat{\psi},\phi \big) \\
    =& \frac{n+1}{n}\hat{\ell}_{\mathrm{avg}}\big( (x_U,y_U) ; \hat{\psi},\phi \big) -  \frac{n+1}{n} \frac{1}{n+1} \sum_{i=1}^{n+1} \hat{\ell}_{\mathrm{avg}}\big( (x_i,y_i) ; \hat{\psi},\phi \big)  \\
    =& \frac{n+1}{n} \Big(\hat{\ell}_{\mathrm{avg}}\big( (x_U,y_U) ; \hat{\psi},\phi \big) -\frac{1}{n+1} \sum_{i=1}^{n+1}\hat{\ell}_{\mathrm{avg}}\big( (x_i,y_i) ; \hat{\psi},\phi \big)\Big).
 \end{align*} 
It is easy to prove $\Delta_{\mathrm{loo}}=0$. When $\hat{\ell}_{\mathrm{avg}}\big( (x_U,y_U) ; \hat{\psi},\phi \big)=\sup_{i\in[n+1]}\hat{\ell}_{\mathrm{avg}}\big( (x_i,y_i) ; \hat{\psi},\phi \big)$ and $\hat{\ell}_{\mathrm{avg}}\big( (x_i,y_i) ; \hat{\psi},\phi \big)=0$ for any $i\neq u$, $\Delta_{\mathrm{loo}}$ takes the maximum value $R^{\tilde{s}}_{x,y}$. Similarly, one can prove that $\Delta_{\mathrm{loo}}\geq -R^{\tilde{s}}_{x,y} $. This implies that $\Delta_{\mathrm{loo}}$ is $R^{\tilde{s}}_{x,y}$-subgaussian. Assume that  $(X,Z)=(x,z)$ and let $\Delta_{\mathrm{loo}}$ be $\sigma_u$-subgaussian w.r.t $U$, where $\sigma_u\in[0,R^{\tilde{s}}_{x,y}]$, then for any $\epsilon>0$
 \begin{equation*}
    \mathbb{P}_U(\Delta_{\mathrm{loo}} \geq \epsilon)\leq \exp\Big(-\frac{\epsilon^2}{2\sigma_u^2}\Big).
 \end{equation*}
That is, for any $\delta>0$ and $(x,z)\in \mathcal{Z}^{x}_{y,\gamma}$, if $(X,Z)=(x,z)$, then with probability at least $1-\delta$, 
\begin{equation}\label{eq26}
    \Delta_{\mathrm{loo}} \leq \sigma_u\sqrt{2\log(1/\delta)}.
\end{equation}
From Lemma \ref{lemma0} and \ref{typicalset},  we know that for any $\delta>0$
\begin{align}
     \mathbb{P}(X,Z\notin  \mathcal{Z}^{x}_{y,\gamma})\leq \frac{\gamma}{\sqrt{nm}}, &\quad  \vert  \mathcal{Z}^{x}_{y,\gamma}\vert \leq \exp\Big(H(Z_y) +  c_{\phi} \sqrt{\frac{d\log(\sqrt{nm}/\gamma)}{2}} \Big), \label{unio1}
    \\
    \mathbb{P}(\phi\notin \Phi_\epsilon) \leq \delta, &\quad \vert \Phi_\epsilon\vert \leq \exp\Big(H_{1-\lambda}(\phi)+\frac{1}{\lambda}\log(\frac{1}{\delta})\Big). \label{unio2}
\end{align}
Taking the union bound of (\ref{eq26}) over all $(x,z)\in \mathcal{Z}^{x}_{y,\gamma}$ and $\phi\in\Phi_\epsilon$, we have for any $\delta>0$, with probability at least $1-\delta$, the following inequality holds simultaneously if $(X,Z)=(x,z)$:
\begin{equation}\label{unio3}
    \Delta_{\mathrm{loo}}  \leq \sigma_u\sqrt{2\log(\vert \Phi_\epsilon\vert \vert  \mathcal{Z}^{x}_{y,\gamma}\vert /\delta)}.
\end{equation}
Again, take the union bound over $y\in\mathcal{Y}$, (\ref{unio1}), (\ref{unio2}), and (\ref{unio3}), then for any $\delta>0$, with probability at least $1-\delta$,
\begin{align}
    \Delta_{\mathrm{loo}}  \leq & \sigma_u\sqrt{2\log(4\vert \Phi_\epsilon\vert \vert  \mathcal{Z}^{x}_{y,\gamma}\vert \vert \mathcal{Y}\vert /\delta)} \nonumber\\
    \leq & \sqrt{2}\sigma_u\sqrt{H(Z_y) +   H_{1-\lambda}(\phi)+\frac{1}{\lambda}\log(\frac{1}{\delta}) + c_{\phi} \sqrt{\frac{d\log(\sqrt{nm}/\gamma)}{2}} + \log(\frac{4\vert \mathcal{Y}\vert}{\delta})} \nonumber\\
    \leq & \sqrt{2}\sigma_u\sqrt{\sum_{y\in\mathcal{Y}}\mathbb{P}(Y=y) \Big( H(Z_y) +   H_{1-\lambda}(\phi)+\frac{1}{\lambda}\log(\frac{1}{\delta}) + c_{\phi} \sqrt{\frac{d\log(\sqrt{nm}/\gamma)}{2}} + \log(\frac{4\vert \mathcal{Y}\vert}{\delta})\Big)}\nonumber \\
    = & \sqrt{2}\sigma_u\sqrt{ H(Z|Y) +   H_{1-\lambda}(\phi)+\frac{1}{\lambda}\log(\frac{1}{\delta}) + c_{\phi} \sqrt{\frac{d\log(\sqrt{nm}/\gamma)}{2}} + \log(\frac{4\vert \mathcal{Y}\vert}{\delta})}.\label{eq32}
\end{align}
Similarly, we can obtain that
\begin{align}
    H(Z|Y) = & I(X^{(1)};Z|Y) + H(Z|X^{(1)},Y) = I(X^{(1)};C,U^{(1)},\ldots,U^{(m)}|Y) + H(Z|X^{(1)},Y) \nonumber\\
    = & I(X^{(1)};C,U^{(1)}|Y) + I(X^{(1)};\{U^{(j)}\}_{j=2}^m|C,U^{(1)},Y)+ H(Z|X^{(1)},Y) \nonumber\\
    = & I(X^{(1)};C,U^{(1)}|Y) + H(Z|X^{(1)},Y) \nonumber\\
    \leq & \sum_{j=1}^{m} I(X^{(j)};C,U^{(j)}|Y) + H(Z|X^{(1)},Y), \label{b51}
\end{align}
and when $\lambda\rightarrow 0$, we have 
\begin{equation}\label{b52}
    H_{1-\lambda}(\phi) \approx H(\phi) = I(\phi;U) + H(\phi|U) 
\end{equation}
Combining the inequalities (\ref{b51}), (\ref{b52}) with (\ref{eq32}), we complete the proof.
\end{proof}

\subsection{Supersample Settings}\label{Proof-Thm6}

\begin{restatetheorem}{\ref{theorem6}}[Restate]
    For any $\lambda\in(0,1)$, $\delta>0$, and all $\phi=\{\phi^{(j)}\}_{j=1}^m \in\Phi$, with probability at least $1-\delta$, we have
    \begin{align*}
        \Delta_{\mathrm{sup}} \leq & \mathcal{K}^{\tilde{u}}_{1}\sqrt{\frac{\sum_{j=1}^{m} I(X^{(j)};C,U^{(j)}|Y)  +   I(\phi; \tilde{U}) + \mathcal{K}^{\tilde{u}}_{2,\lambda}}{nm}},
\end{align*}
where $\Delta\hat{\ell}^{i,j}_{\psi,\phi}=\hat{\ell}(\hat{\psi}\circ\phi^{(j)}(x^{(j)}_{i,1}),y_{i,1})-\hat{\ell}(\hat{\psi}\circ\phi^{(j)}(x^{(j)}_{i,0}),y_{i,0})$,
\begin{align*}
    \mathcal{K}^{\tilde{u}}_{1}  = &\sqrt{\frac{1}{nm}\sum_{i,j=1}^{n,m}(\Delta\hat{\ell}^{i,j}_{\psi,\phi})^2}, \\
    \mathcal{K}^{\tilde{u}}_{2,\lambda} = &   c_{\phi} \sqrt{\frac{d\log(\sqrt{nm}/\gamma)}{2}} + \frac{1}{\lambda}\log(\frac{1}{\delta}) + \log(\frac{4\vert \mathcal{Y}\vert}{\delta}) + H(Z|X^{(1)},Y) + H(\phi|\tilde{U}).
\end{align*}

\end{restatetheorem}

\begin{proof}
For some $(x_u,z_u)=\{(x_{i,0},\phi(x_{i,0})), (x_{i,1},\phi(x_{i,1}))\}_{i=1}^{n}\in \mathcal{Z}^{x}_{y,\gamma}$, and $\phi\in\Phi_\epsilon$, we have 
    \begin{align*}
        \Delta_{\mathrm{sup}}=&\frac{1}{n}\sum_{i=1}^n\hat{\ell}_{\mathrm{avg}}\big( (x_{i,1-\tilde{U}_i},y_{i,1-\tilde{U}_i}) ; \hat{\psi},\phi \big)-\frac{1}{n}\sum_{i=1}^n \hat{\ell}_{\mathrm{avg}}\big( (x_{i,\tilde{U}_i},y_{i,\tilde{U}_i}) ; \hat{\psi},\phi \big) \\
         =& \frac{1}{n}\sum_{i=1}^{n} \frac{1}{m}\sum_{j=1}^{m} \hat{\ell}(\hat{\psi}\circ\phi^{(j)}(x^{(j)}_{i,1-\tilde{U}_i}),y_{i,1-\tilde{U}_{i}}) - \frac{1}{n}\sum_{i=1}^{n}\frac{1}{m}\sum_{j=1}^{m}\hat{\ell}\big(\hat{\psi}\circ\phi^{(j)}(x^{(j)}_{i,\tilde{U}_i}),y_{i,\tilde{U}_i}\big) \\
         =& \frac{1}{n}\sum_{i=1}^{n} \frac{1}{m}\sum_{j=1}^{m} (-1)^{\tilde{U}_i}\Delta\hat{\ell}^{i,j}_{\psi,\phi},
    \end{align*}
where $\Delta\hat{\ell}^{i,j}_{\psi,\phi}=\hat{\ell}(\hat{\psi}\circ\phi^{(j)}(x^{(j)}_{i,1}),y_{i,1})-\hat{\ell}(\hat{\psi}\circ\phi^{(j)}(x^{(j)}_{i,0}),y_{i,0})$. Note that $\mathbb{E}_{\tilde{U}_i}[(-1)^{\tilde{U}_i}]=0$, by using McDiarmid's inequality with $f(\tilde{U})=\Delta_{\mathrm{sup}}$, we have for any $\epsilon>0$,
\begin{equation*}
    \mathbb{P}_{\tilde{U}}(\Delta_{\mathrm{sup}} \geq \epsilon)\leq \exp\Big(-\frac{2\epsilon^2}{\sum_{i,j=1}^{n,m}(2\Delta\hat{\ell}^{i,j}_{\psi,\phi}/nm)^2}\Big).
\end{equation*}
Therefore, for any $\delta>0$, $(x_u,z_u) \in \mathcal{Z}^{x}_{y,\gamma}$, and $\phi\in\Phi_\epsilon$, with probability at least $1-\delta$,
\begin{equation*}
    \Delta_{\mathrm{sup}} \leq \sqrt{\frac{1}{nm}\sum_{i,j=1}^{n,m}(\Delta\hat{\ell}^{i,j}_{\psi,\phi})^2}\sqrt{\frac{2\log(1/\delta)}{nm}}.
\end{equation*}
According to  Lemmas \ref{lemma1} and \ref{typicalset}, we know that for any $\delta>0$
\begin{align}
    \mathbb{P}((X,Z) \notin  \mathcal{Z}^{x}_{y,\gamma})\leq \frac{\gamma}{\sqrt{nm}}, &\quad  \vert  \mathcal{Z}^{x}_{y,\gamma}\vert \leq \exp\Big(H(Z_y) +  c_{\phi} \sqrt{\frac{d\log(\sqrt{nm}/\gamma)}{2}} \Big), \label{unio5}
    \\
    \mathbb{P}(\phi \notin \Phi_\epsilon) \leq \delta, &\quad \vert \Phi_\epsilon\vert \leq \exp\Big(H_{1-\lambda}(\phi)+\frac{1}{\lambda}\log(\frac{1}{\delta})\Big). \label{unio6}
\end{align}
Taking union bounds over all $(x_u,z_u)\in   \mathcal{Z}^{x}_{y,\gamma}$ and $\phi \in \Phi_\epsilon$, we have for any $\delta>0$, with probability at least $1-\delta$, the following holds simultaneously
\begin{align}\label{unio7}
    \Delta_{\mathrm{sup}} \leq & \sqrt{\frac{1}{nm}\sum_{i,j=1}^{n,m}(\Delta\hat{\ell}^{i,j}_{\psi,\phi})^2}\sqrt{\frac{2\log(\vert  \mathcal{Z}^{x}_{\gamma}\vert \vert \Phi_\epsilon\vert /\delta)}{nm}}. 
\end{align}
Again, take the union bound over $y\in\mathcal{Y}$, (\ref{unio5}), (\ref{unio6}), and (\ref{unio7}), we have for any $\delta>0$, with probability at least $1-\delta$,
\begin{align}
    \Delta_{\mathrm{sup}}  \leq& \sqrt{\frac{1}{nm}\sum_{i,j=1}^{n,m}(\Delta\hat{\ell}^{i,j}_{\psi,\phi})^2}\sqrt{\frac{2\log(4\vert  \mathcal{Z}^{x}_{\gamma}\vert \vert \Phi_\epsilon\vert  \vert \mathcal{Y}\vert/\delta)}{nm}} \nonumber\\
    \leq &   \sqrt{\frac{2}{nm}\sum_{i,j=1}^{n,m}(\Delta\hat{\ell}^{i,j}_{\psi,\phi})^2} \sqrt{\frac{H(Z_y) +   H_{1-\lambda}(\phi)+\frac{1}{\lambda}\log(\frac{1}{\delta})+ \log(\frac{4\vert \mathcal{Y}\vert}{\delta}) + c_{\phi} \sqrt{\frac{d\log(\sqrt{nm}/\gamma)}{2}} }{nm}} \nonumber\\
    \leq & \sqrt{\frac{2}{nm}\sum_{i,j=1}^{n,m}(\Delta\hat{\ell}^{i,j}_{\psi,\phi})^2} \sqrt{ \sum_{y\in\mathcal{Y}}\mathbb{P}(Y=y)\frac{H(Z_y) +   H_{1-\lambda}(\phi)+\frac{1}{\lambda}\log(\frac{1}{\delta})+ \log(\frac{4\vert \mathcal{Y}\vert}{\delta}) + c_{\phi} \sqrt{\frac{d\log(\sqrt{nm}/\gamma)}{2}} }{nm}}\nonumber \\
    = &   \sqrt{\frac{2}{nm}\sum_{i,j=1}^{n,m}(\Delta\hat{\ell}^{i,j}_{\psi,\phi})^2} \sqrt{\frac{H(Z|Y) +   H_{1-\lambda}(\phi)+\frac{1}{\lambda}\log(\frac{1}{\delta})+ \log(\frac{4\vert \mathcal{Y}\vert}{\delta}) + c_{\phi} \sqrt{\frac{d\log(\sqrt{nm}/\gamma)}{2}} }{nm}}. \label{b50}
\end{align}
Similarly, if $\lambda\rightarrow 0$, we have 
\begin{equation}\label{b54}
    H_{1-\lambda}(\phi) \approx H(\phi) = I(\phi;\tilde{U}) + H(\phi|\tilde{U}).
\end{equation}
Putting the inequality (\ref{b51}) and (\ref{b54}) back into (\ref{b50}), this completes the proof.
\end{proof}
\section{Proof of Theorem \ref{theorem7} [Fast-rate Generalization Bound]}\label{Proof-Thm7}

\begin{restatetheorem}{\ref{theorem7}}[Restate]
    For any $\lambda\in(0,1)$, $\delta>0$, $\beta\in(0,\log 2)$, and $ \xi\geq\frac{\log (2-e^{2\beta R^{\tilde{s}}_{x,y}})}{2\beta R^{\tilde{s}}_{x,y}}-1$, with probability at least $1-\delta$, for all $\phi=\{\phi^{(j)}\}_{j=1}^m \in\Phi$, we have
    \begin{align*}
        \overline{\mathrm{gen}}_{cls}\leq  \xi \widehat{L}_{cls}   + \frac{\sum_{j=1}^{m} I(X^{(j)};C,U^{(j)}|Y)   + H_{1-\lambda}(\phi) +\hat{\mathcal{K}}}{nm\beta},
    \end{align*}
where $\hat{\mathcal{K}} = c_{\phi} \sqrt{\frac{d\log(\sqrt{nm}/\gamma)}{2}} +\frac{1}{\lambda}\log(\frac{1}{\delta}) +\log (\frac{4 \vert \mathcal{Y}\vert}{\delta}) + H(Z|Y,X^{(1)})$. In the interpolating setting, i.e., $\widehat{L}_{cls} = 0$, we further have 
\begin{equation*}
    \overline{\mathrm{gen}}_{cls}  \leq \frac{\sum_{j=1}^{m} I(X^{(j)};C,U^{(j)}|Y)   + H_{1-\lambda}(\phi) +\hat{\mathcal{K}}}{nm\beta}.
\end{equation*}
\end{restatetheorem}

\begin{proof}
   Assume that $(X,Z)=(x,z)$. For some $(x,z)=\{(x_{i,0},\phi(x_{i,0})), (x_{i,1},\phi(x_{i,1}))\}_{i=1}^{n}\in \mathcal{Z}^{x}_{y,\gamma}$, and $\phi\in\Phi_\epsilon$, 
    \begin{align}
        &  \mathbb{P}\Bigg(\frac{1}{n}\sum_{i=1}^{n} \frac{1}{m}\sum_{j=1}^{m} \Big(\hat{\ell}(\hat{\psi}\circ\phi^{(j)}(x^{(j)}_{i,1-\tilde{U}_i}),y_{i,1-\tilde{U}_{i}}) - (1+\xi)\hat{\ell}\big(\hat{\psi}\circ\phi^{(j)}(x^{(j)}_{i,\tilde{U}_i}),y_{i,\tilde{U}_i}\big)\Big) \geq t \Bigg) \nonumber\\
        = & \mathbb{P}\Bigg(\frac{1}{n}\sum_{i=1}^{n} \frac{1}{m}\sum_{j=1}^{m} \Big(1+\frac{\xi}{2}\Big) \Big(\hat{\ell}\big(\hat{\psi}\circ\phi^{(j)}(x^{(j)}_{i,1-\tilde{U}_i}),y_{i,1-\tilde{U}_{i}}\big) - \hat{\ell}\big(\hat{\psi}\circ\phi^{(j)}(x^{(j)}_{i,\tilde{U}_i}),y_{i,\tilde{U}_i}\big)\Big) \nonumber\\
        &- \frac{\xi}{2} \hat{\ell}(\hat{\psi}\circ\phi^{(j)}(x^{(j)}_{i,1-\tilde{U}_i}),y_{i,1-\tilde{U}_{i}}) - \frac{\xi}{2} \hat{\ell}\big(\hat{\psi}\circ\phi^{(j)}(x^{(j)}_{i,\tilde{U}_i}),y_{i,\tilde{U}_i}\big) \geq t \Bigg) \nonumber\\
        = & \mathbb{P}\Bigg(\frac{1}{2n}\sum_{i=1}^{n} \frac{1}{m}\sum_{j=1}^{m} \Big( (-1)^{\tilde{U}_{i}} (2 + \xi) \hat{\ell}\big(\hat{\psi}\circ\phi^{(j)}(x^{(j)}_{i,1}),y_{i,1}\big) - \xi\hat{\ell}\big(\hat{\psi}\circ\phi^{(j)}(x^{(j)}_{i,1}),y_{i,1}\big)  \Big)\nonumber\\
        &+\frac{1}{2n}\sum_{i=1}^{n} \frac{1}{m}\sum_{j=1}^{m} \Big( -(-1)^{\tilde{U}_{i}} (2 + \xi) \hat{\ell}\big(\hat{\psi}\circ\phi^{(j)}(x^{(j)}_{i,0}),y_{i,0}\big) - \xi\hat{\ell}\big(\hat{\psi}\circ\phi^{(j)}(x^{(j)}_{i,0}),y_{i,0}\big) \Big) \geq t \Bigg) \nonumber\\
        = & \mathbb{P}\Bigg( \sup_{I\in\{0,1\}}\Bigg\{\frac{1}{2n}\sum_{i=1}^{n} \frac{1}{m}\sum_{j=1}^{m} \Big( (-1)^{\tilde{U}_{i}} (2 + \xi) - \xi \Big) \hat{\ell}\big(\hat{\psi}\circ\phi^{(j)}(x^{(j)}_{i,I}),y_{i,I}\big)\Bigg\} \nonumber\\
        &+ \sup_{I\in\{0,1\}}\Bigg\{\frac{1}{2n}\sum_{i=1}^{n} \frac{1}{m}\sum_{j=1}^{m} \Big( -(-1)^{\tilde{U}_{i}} (2 + \xi) - \xi \Big) \hat{\ell}\big(\hat{\psi}\circ\phi^{(j)}(x^{(j)}_{i,1-I}),y_{i,1-I}\big) \Bigg\} \geq t \Bigg) \nonumber\\
        \leq &  \inf_{\gamma\in(0,1)}\mathbb{P}\Bigg( \sup_{I\in\{0,1\}}\Bigg\{\frac{1}{2n}\sum_{i=1}^{n} \frac{1}{m}\sum_{j=1}^{m} \Big( (-1)^{\tilde{U}_{i}} (2 + \xi) - \xi \Big) \hat{\ell}\big(\hat{\psi}\circ\phi^{(j)}(x^{(j)}_{i,I}),y_{i,I}\big) \Bigg\} \geq \gamma t \Bigg)\nonumber\\
        &+ \mathbb{P}\Bigg(  \sup_{I\in\{0,1\}}\Bigg\{\frac{1}{2n}\sum_{i=1}^{n} \frac{1}{m}\sum_{j=1}^{m} \Big( -(-1)^{\tilde{U}_{i}} (2 + \xi) - \xi \Big) \hat{\ell}\big(\hat{\psi}\circ\phi^{(j)}(x^{(j)}_{i,1-I}),y_{i,1-I}\big) \Bigg\} \geq (1-\gamma)t \Bigg). \label{b56}
    \end{align}
For any $I\in\{0,1\}$, $t>0$, and $\beta>0$, by using Markov's inequality, we then have 
\begin{align*}
    & \mathbb{P}\Bigg(\frac{1}{2n}\sum_{i=1}^{n} \frac{1}{m}\sum_{j=1}^{m} \Big( (-1)^{\tilde{U}_{i}} (2 + \xi) - \xi \Big) \hat{\ell}\big(\hat{\psi}\circ\phi^{(j)}(x^{(j)}_{i,I}),y_{i,I}\big)  \geq  t \Bigg) \\
    = & \mathbb{P}\Bigg(\exp\Bigg(\beta\sum_{i=1}^{n} \sum_{j=1}^{m} \Big( (-1)^{\tilde{U}_{i}} (2 + \xi) - \xi \Big) \hat{\ell}\big(\hat{\psi}\circ\phi^{(j)}(x^{(j)}_{i,I}),y_{i,I}\big) \Bigg)  \geq e^{2\beta nmt} \Bigg) \\
    \leq & e^{-2\beta nmt} \mathbb{E}_{\tilde{U}} \Bigg[\exp\Bigg(\beta\sum_{i=1}^{n} \sum_{j=1}^{m} \Big( (-1)^{\tilde{U}_{i}} (2 + \xi) - \xi \Big) \hat{\ell}\big(\hat{\psi}\circ\phi^{(j)}(x^{(j)}_{i,I}),y_{i,I}\big) \Bigg)  \Bigg] \\
    = & e^{-2\beta nmt} \mathbb{E}_{\tilde{U}} \prod_{i,j=1}^{n,m}\Big[\exp\Big(\beta  \Big( (-1)^{\tilde{U}_{i}} (2 + \xi) - \xi \Big) \hat{\ell}\big(\hat{\psi}\circ\phi^{(j)}(x^{(j)}_{i,I}),y_{i,I}\big) \Big)\Big] \\
    = & e^{-2\beta nmt} \prod_{i,j=1}^{n,m} \frac{e^{-2\beta(1+\xi)\hat{\ell}\big(\hat{\psi}\circ\phi^{(j)}(x^{(j)}_{i,I}),y_{i,I}\big)}+ e^{2\beta\hat{\ell}\big(\hat{\psi}\circ\phi^{(j)}(x^{(j)}_{i,I}),y_{i,I}\big)}}{2}.
\end{align*}
We intend to select the values of $\xi$ and $\beta$, such that $e^{-2\beta(1+\xi)\hat{\ell}\big(\hat{\psi}\circ\phi^{(j)}(x^{(j)}_{i,I}),y_{i,I}\big)}+ e^{2\beta\hat{\ell}\big(\hat{\psi}\circ\phi^{(j)}(x^{(j)}_{i,I}),y_{i,I}\big)}\leq 2$ for any $i\in[n]$, $j\in[m]$, and $I\in\{0,1\}$. Notice that $e^{2\beta\hat{\ell}\big(\hat{\psi}\circ\phi^{(j)}(x^{(j)}_{i,I}),y_{i,I}\big)}\leq 2 $ implies that $\beta\leq \frac{\log 2}{2 \hat{\ell}\big(\hat{\psi}\circ\phi^{(j)}(x^{(j)}_{i,I}),y_{i,I}\big)}<1$. Furthermore, it is sufficient to select a large enough $\xi$ that satisfies:
\begin{equation*}
    e^{-2\beta(1+\xi)\hat{\ell}\big(\hat{\psi}\circ\phi^{(j)}(x^{(j)}_{i,I}),y_{i,I}\big)}+ e^{2\beta\hat{\ell}\big(\hat{\psi}\circ\phi^{(j)}(x^{(j)}_{i,I}),y_{i,I}\big)}\leq 2.
\end{equation*}
Solving the above inequality yields $\xi\geq\frac{\log (2-e^{2\beta\hat{\ell}(\hat{\psi}\circ\phi^{(j)}(x^{(j)}_{i,I}),y_{i,I})})}{2\beta\hat{\ell}\big(\hat{\psi}\circ\phi^{(j)}(x^{(j)}_{i,I}),y_{i,I}\big)}-1$. It is easy to validate that this lower bound increases monotonically with the increase of $\hat{\ell}(\hat{\psi}\circ\phi^{(j)}(x^{(j)}_{i,I}),y_{i,I})$. Therefore, we choose the value of $\xi$ by
\begin{equation*}
    \xi \geq\frac{\log (2-e^{2\beta R^{\tilde{s}}_{x,y}})}{2\beta R^{\tilde{s}}_{x,y}}-1,
\end{equation*}
such that $\frac{e^{-2\beta(1+\xi)\hat{\ell}\big(\hat{\psi}\circ\phi^{(j)}(x^{(j)}_{i,I}),y_{i,I}\big)}+ e^{2\beta\hat{\ell}\big(\hat{\psi}\circ\phi^{(j)}(x^{(j)}_{i,I}),y_{i,I}\big)}}{2} \leq 1$.
For any $i\in[n]$, $j\in[m]$, and the following inequality can hold:
\begin{equation}\label{b57}
    \mathbb{P}\Bigg(\frac{1}{2n}\sum_{i=1}^{n} \frac{1}{m}\sum_{j=1}^{m} \Big( (-1)^{\tilde{U}_{i}} (2 + \xi) - \xi \Big) \hat{\ell}\big(\hat{\psi}\circ\phi^{(j)}(x^{(j)}_{i,I}),y_{i,I}\big)  \geq t \Bigg)  \leq  e^{-2\beta nmt}.
\end{equation}
Taking the union bound of (\ref{b57}) over $I\in\{0,1\}$, we have 
\begin{equation}\label{b58}
    \mathbb{P}\Bigg(\sup_{I\in\{0,1\}} \Bigg\{ \frac{1}{2n}\sum_{i=1}^{n} \frac{1}{m}\sum_{j=1}^{m} \Big( (-1)^{\tilde{U}_{i}} (2 + \xi) - \xi \Big) \hat{\ell}\big(\hat{\psi}\circ\phi^{(j)}(x^{(j)}_{i,I}),y_{i,I}\big) \Bigg\}  \geq t \Bigg)  \leq  2e^{-2\beta nmt}.
\end{equation}
Putting (\ref{b58}) back into (\ref{b56}), we get that 
\begin{align}
    &\mathbb{P}\Bigg(\frac{1}{n}\sum_{i=1}^{n} \frac{1}{m}\sum_{j=1}^{m} \Big(\hat{\ell}(\hat{\psi}\circ\phi^{(j)}(x^{(j)}_{i,1-\tilde{U}_i}),y_{i,1-\tilde{U}_{i}}) - (1+\xi)\hat{\ell}\big(\hat{\psi}\circ\phi^{(j)}(x^{(j)}_{i,\tilde{U}_i}),y_{i,\tilde{U}_i}\big)\Big) \geq t \Bigg) \nonumber\\
    \leq &\inf_{\gamma\in(0,1)}  2e^{-2\beta \gamma nmt} + 2e^{-2\beta (1-\gamma)nmt} \nonumber\\
    = & 2e^{-2\beta nmt} + 2e^{-2\beta nmt} = 4e^{-2\beta nmt}. \label{b59}
\end{align}
By taking $\delta$ as the RHS of the above inequality, we have for any $\delta>0$, with probability at least $1-\delta$,
\begin{equation}
    \frac{1}{n}\sum_{i=1}^{n} \frac{1}{m}\sum_{j=1}^{m} \Big(\hat{\ell}(\hat{\psi}\circ\phi^{(j)}(x^{(j)}_{i,1-\tilde{U}_i}),y_{i,1-\tilde{U}_{i}}) - (1+\xi)\hat{\ell}\big(\hat{\psi}\circ\phi^{(j)}(x^{(j)}_{i,\tilde{U}_i}),y_{i,\tilde{U}_i}\big)\Big) \leq \frac{\log (4/\delta)}{nm\beta}.
\end{equation}
From Lemma \ref{lemma0} and \ref{typicalset},  we know that for any $\delta>0$
\begin{align}
     \mathbb{P}((X,Z)\notin  \mathcal{Z}^{x}_{y,\gamma})\leq \frac{\gamma}{\sqrt{nm}}, &\quad  \vert  \mathcal{Z}^{x}_{y,\gamma}\vert \leq \exp\Big(H(Z_y) +  c_{\phi} \sqrt{\frac{d\log(\sqrt{nm}/\gamma)}{2}} \Big), \label{unio11}
    \\
    \mathbb{P}(\phi\notin \Phi_\epsilon) \leq \delta, &\quad \vert \Phi_\epsilon\vert \leq \exp\Big(H_{1-\lambda}(\phi)+\frac{1}{\lambda}\log(\frac{1}{\delta})\Big). \label{unio22}
\end{align}
Taking the union bound over every $y\in\mathcal{Y}$, $(x,z)\in \mathcal{Z}^{x}_{\gamma}$ and $\phi\in\Phi_\epsilon$, we then have for any $\delta>0$, with probability at least $1-\delta$, the following holds
\begin{equation}
    \frac{1}{n}\sum_{i=1}^{n} \frac{1}{m}\sum_{j=1}^{m} \Big(\hat{\ell}(\hat{\psi}\circ\phi^{(j)}(x^{(j)}_{i,1-\tilde{U}_i}),y_{i,1-\tilde{U}_{i}}) - (1+\xi)\hat{\ell}\big(\hat{\psi}\circ\phi^{(j)}(x^{(j)}_{i,\tilde{U}_i}),y_{i,\tilde{U}_i}\big)\Big) \leq \frac{\log (4 \vert \mathcal{Z}^{x}_{\gamma}\vert \vert \Phi_\epsilon\vert \vert \mathcal{Y}\vert/\delta)}{nm\beta}.
\end{equation}
By substituting (\ref{unio11}) and (\ref{unio22}) into the above inequality, we have for any $\delta>0$, with probability at least $1-\delta$,
\begin{align*}
    &\frac{1}{n}\sum_{i=1}^{n} \frac{1}{m}\sum_{j=1}^{m} \Big(\hat{\ell}(\hat{\psi}\circ\phi^{(j)}(x^{(j)}_{i,1-\tilde{U}_i}),y_{i,1-\tilde{U}_{i}}) - (1+\xi)\hat{\ell}\big(\hat{\psi}\circ\phi^{(j)}(x^{(j)}_{i,\tilde{U}_i}),y_{i,\tilde{U}_i}\big)\Big) \\
    \leq & \frac{\log(\vert \mathcal{Z}^{x}_{\gamma}\vert )+ \log(\vert \Phi_\epsilon\vert) +\log (4 \vert \mathcal{Y}\vert/ \delta)}{nm\beta} \\
    \leq & \frac{H(Z_y) +  c_{\phi} \sqrt{\frac{d\log(\sqrt{nm}/\gamma)}{2}} + H_{1-\lambda}(\phi)+\frac{1}{\lambda}\log(\frac{1}{\delta}) +\log (\frac{4 \vert \mathcal{Y}\vert}{\delta})}{nm\beta}\\
    \leq & \sum_{y\in\mathcal{Y}}\mathbb{P}(Y=y)\frac{H(Z_y) +  c_{\phi} \sqrt{\frac{d\log(\sqrt{nm}/\gamma)}{2}} + H_{1-\lambda}(\phi)+\frac{1}{\lambda}\log(\frac{1}{\delta}) +\log (\frac{4 \vert \mathcal{Y}\vert}{\delta})}{nm\beta}\\
    \leq & \frac{H(Z|Y) +  c_{\phi} \sqrt{\frac{d\log(\sqrt{nm}/\gamma)}{2}} + H_{1-\lambda}(\phi)+\frac{1}{\lambda}\log(\frac{1}{\delta}) +\log (\frac{4 \vert \mathcal{Y}\vert}{\delta})}{nm\beta}
\end{align*}
Putting (\ref{b51}) back into the above inequality, we have 
\begin{align*}
    &\frac{1}{n}\sum_{i=1}^{n} \frac{1}{m}\sum_{j=1}^{m} \Big(\hat{\ell}(\hat{\psi}\circ\phi^{(j)}(x^{(j)}_{i,1-\tilde{U}_i}),y_{i,1-\tilde{U}_{i}}) - (1+\xi)\hat{\ell}\big(\hat{\psi}\circ\phi^{(j)}(x^{(j)}_{i,\tilde{U}_i}),y_{i,\tilde{U}_i}\big)\Big) \\
    \leq & \frac{\sum_{j=1}^{m} I(X^{(j)};C,U^{(j)}|Y) +  c_{\phi} \sqrt{\frac{d\log(\sqrt{nm}/\gamma)}{2}} + H_{1-\lambda}(\phi)+\frac{1}{\lambda}\log(\frac{1}{\delta}) +\log (\frac{4 \vert \mathcal{Y}\vert}{\delta}) + H(Z|X^{(1)},Y)}{nm\beta},
\end{align*}
which implies that 
\begin{align*}
    \overline{\mathrm{gen}}_{cls} =
    &\frac{1}{n}\sum_{i=1}^{n} \frac{1}{m}\sum_{j=1}^{m} \Big(\hat{\ell}(\hat{\psi}\circ\phi^{(j)}(x^{(j)}_{i,1-\tilde{U}_i}),y_{i,1-\tilde{U}_{i}}) - (1+\xi)\hat{\ell}\big(\hat{\psi}\circ\phi^{(j)}(x^{(j)}_{i,\tilde{U}_i}),y_{i,\tilde{U}_i}\big) + \xi\hat{\ell}\big(\hat{\psi}\circ\phi^{(j)}(x^{(j)}_{i,\tilde{U}_i}),y_{i,\tilde{U}_i}\big) \Big) \\
    \leq & \xi \frac{1}{n}\sum_{i=1}^{n} \frac{1}{m}\sum_{j=1}^{m}\hat{\ell}\big(\hat{\psi}\circ\phi^{(j)}(x^{(j)}_{i,\tilde{U}_i}),y_{i,\tilde{U}_i}\big) \\
    &+  \frac{\sum_{j=1}^{m} I(X^{(j)};C,U^{(j)}|Y) +  c_{\phi} \sqrt{\frac{d\log(\sqrt{nm}/\gamma)}{2}} + H_{1-\lambda}(\phi)+\frac{1}{\lambda}\log(\frac{1}{\delta}) +\log (\frac{4 \vert \mathcal{Y}\vert}{\delta}) + H(Z|X^{(1)},Y)}{nm\beta}\\
    = & \xi \widehat{L}_{cls}+  \frac{\sum_{j=1}^{m} I(X^{(j)};C,U^{(j)}|Y) +  c_{\phi} \sqrt{\frac{d\log(\sqrt{nm}/\gamma)}{2}} + H_{1-\lambda}(\phi)+\frac{1}{\lambda}\log(\frac{1}{\delta}) +\log (\frac{4 \vert \mathcal{Y}\vert}{\delta}) + H(Z|X^{(1)},Y)}{nm\beta}.
\end{align*}
This completes the proof.

\end{proof}


\end{document}